\documentclass[twoside,11pt]{article}
\pdfoutput=1
\usepackage{blindtext}
\usepackage[abbrvbib,preprint]{jmlr2e}

\usepackage{xcolor}
\definecolor{myMaroon}{RGB}{128, 0, 0}

\usepackage{algorithm}
\usepackage{algorithmic}

\usepackage{amsfonts}
\usepackage{amsmath}
\usepackage{amstext}
\usepackage{caption}
\usepackage{dsfont}
\usepackage{subcaption}
\usepackage{microtype}      
\usepackage{booktabs}
\usepackage{dirtytalk}
\usepackage{enumitem}
\usepackage{mathtools}
\usepackage{mathrsfs}
\usepackage{bbm}

\usepackage[capitalize]{cleveref}

\usepackage{nicefrac}

\usepackage{tabularx}
\newcolumntype{Y}{>{\centering\arraybackslash}X}
\newcolumntype{Z}{>{\raggedleft\arraybackslash}X}
\usepackage{tikz}
\usetikzlibrary{calc, fadings, decorations, shapes, positioning, arrows}


\DeclarePairedDelimiter\abs{\lvert}{\rvert}
\DeclarePairedDelimiter\norm{\lVert}{\rVert}

\DeclareMathOperator*{\argmin}{arg\,min}
\DeclareMathOperator*{\arginf}{arg\,inf}

\newcommand{\realNumber}{\mathbb{R}}
\newcommand{\naturalNumber}{\mathbb{N}}

\newcommand{\probP}{\mathds{P}}
\newcommand{\expectE}{\mathds{E}}

\newcommand{\indep}{\perp \!\!\! \perp}
\newcommand{\ill}{\nu}

\DeclarePairedDelimiterX{\expectarg}[1]{[}{]}{%
  \ifnum\currentgrouptype=16 \else\begingroup\fi
  \activatebar#1
  \ifnum\currentgrouptype=16 \else\endgroup\fi
}

\newcommand{\dataset}{\mathcal{D}}
\newcommand{\orthoM}{\psi}


\usepackage{mleftright}


\crefname{lemma}{Lemma}{Lemmas}
\crefalias{lemma}{theorem}
\crefname{assumption}{Assumption}{Assumptions}
\crefalias{assumption}{theorem}
\crefname{proposition}{Proposition}{Propositions}
\crefalias{proposition}{theorem}


\def\[#1\]{\begin{align*}#1\end{align*}}

\NewDocumentCommand{\numberthis}{om}{%
  \IfNoValueTF{#1}{%
    \refstepcounter{equation}\tag{\theequation}%
  }{%
    \tag{#1}%
  }%
  \label{#2}%
}

\AtEndPreamble{
\hypersetup{
    colorlinks=true,       
    linkcolor=blue,        
    citecolor=myMaroon,         
    filecolor=magenta,     
    urlcolor=cyan          
}
}


\newcounter{savetheorem}

\NewDocumentEnvironment{repeatthm}{m o}{%
  \setcounter{savetheorem}{\value{theorem}}%
  \edef\orignum{\getrefnumber{#1}}%
  \setcounter{theorem}{\numexpr\orignum-1\relax}%
  \begingroup
    \IfValueTF{#2}
      {\begin{theorem}[#2]}%
      {\begin{theorem}}%
}{%
    \end{theorem}%
  \endgroup
  \setcounter{theorem}{\value{savetheorem}}%
}

\NewDocumentEnvironment{repeatlemma}{m o}{%
  \setcounter{savetheorem}{\value{theorem}}%
  \edef\orignum{\getrefnumber{#1}}%
  \setcounter{theorem}{\numexpr\orignum-1\relax}%
  \begingroup
    \IfValueTF{#2}
      {\begin{lemma}[#2]}%
      {\begin{lemma}}%
}{%
    \end{lemma}%
  \endgroup
  \setcounter{theorem}{\value{savetheorem}}%
}

\NewDocumentEnvironment{repeatprop}{m o}{%
  \setcounter{savetheorem}{\value{theorem}}%
  \edef\orignum{\getrefnumber{#1}}%
  \setcounter{theorem}{\numexpr\orignum-1\relax}%
  \begingroup
    \IfValueTF{#2}
      {\begin{proposition}[#2]}%
      {\begin{proposition}}%
}{%
    \end{proposition}%
  \endgroup
  \setcounter{theorem}{\value{savetheorem}}%
}

\NewDocumentEnvironment{repeatcoro}{m o}{%
  \setcounter{savetheorem}{\value{theorem}}%
  \edef\orignum{\getrefnumber{#1}}%
  \setcounter{theorem}{\numexpr\orignum-1\relax}%
  \begingroup
    \IfValueTF{#2}
      {\begin{corollary}[#2]}%
      {\begin{corollary}}%
}{%
    \end{corollary}%
  \endgroup
  \setcounter{theorem}{\value{savetheorem}}%
}

\usepackage{lastpage}
\jmlrheading{25}{2025}{1-\pageref{LastPage}}{06/25; Revised **/**}{**/**}{21-0000}{Daqian Shao and Ashkan Soleymani and Francesco Quinzan and Marta Kwiatkowska}


\ShortHeadings{DML for Conditional Moment Restrictions}{Shao and Soleymani and Quinzan and Kwiatkowska}
\firstpageno{1}



\begin{document}

\title{Double Machine Learning for Conditional Moment Restrictions: IV Regression, Proximal Causal Learning and Beyond}

\author{\name Daqian Shao \email daqian.shao@cs.ox.ac.uk \\
       \addr Department of Computer Science\\
       University of Oxford\\
       Oxford, UK
       \AND
       \name Ashkan Soleymani \email ashkanso@mit.edu \\
       \addr Department of Electrical Engineering and Computer Science\\
       Massachusetts Institute of Technology\\
       Cambridge, USA
      \AND
       \name Francesco Quinzan\thanks{Current address and email: Department of Engineering Science, University of Oxford, UK. francesco.quinzan@eng.ox.ac.uk} 
       \email francesco.quinzan@cs.ox.ac.uk \\
       \addr Department of Computer Science\\
       University of Oxford\\
       Oxford, UK    
       \AND
       \name Marta Kwiatkowska \email
       marta.kwiatkowska@cs.ox.ac.uk\\
       \addr Department of Computer Science\\
       University of Oxford\\
       Oxford, UK
       }

\editor{undecided}

\maketitle

\begin{abstract}
Solving conditional moment restrictions (CMRs) is a key problem considered in statistics, causal inference, and econometrics, where the aim is to solve for a function of interest that satisfies some conditional moment equalities. Specifically, many techniques for causal inference, such as instrumental variable (IV) regression and proximal causal learning (PCL), are CMR problems. Most CMR estimators use a two-stage approach, where the first-stage estimation is directly plugged into the second stage to estimate the function of interest. However, naively plugging in the first-stage estimator can cause heavy bias in the second stage. This is particularly the case for recently proposed CMR estimators that use deep neural network (DNN) estimators for both stages, where regularisation and overfitting bias is present. We propose DML-CMR, a two-stage CMR estimator that provides an unbiased estimate with fast convergence rate guarantees. We derive a novel learning objective to reduce bias and develop the DML-CMR algorithm following the \textit{double/debiased machine learning (DML)} framework. We show that our DML-CMR estimator can achieve the minimax optimal convergence rate of $O(N^{-1/2})$ under parameterisation and mild regularity conditions, where $N$ is the sample size. We apply DML-CMR to a range of problems using DNN estimators, including IV regression and proximal causal learning on real-world datasets, demonstrating state-of-the-art performance against existing CMR estimators and algorithms tailored to those problems.
\end{abstract}

\begin{keywords}conditional moment restrictions, double machine learning, instrumental variable regression, proximal causal learning, two-stage regression
\end{keywords}

\section{Introduction}

In this work, we study the problem of \textit{conditional moment restrictions} (CMRs). 
Let $X\in\mathcal{X}\subseteq\realNumber^d$, $C\in\mathcal{C}\subseteq\realNumber^p$, and $Y\in\mathcal{Y}\subseteq\realNumber$ be random variables with their corresponding distributions. The CMR problem consists of estimating a function of interest $f_0\in\mathcal{F}\subseteq(\mathcal{X}\rightarrow\realNumber)$ in some hypothesis function space $\mathcal{F}$, such that 
\begin{align}
    \expectE[Y-f_0(X)\lvert C]=0.\tag{P} \label{eq:cmr}
\end{align}
This problem appears in many fields of research, such as statistical learning~\citep{Vapnik1998}, causal inference~\citep{Liao2020}, integral equations~\citep{Honerkamp1990}, deconvolution~\citep{Carrasco2007}, and econometrics~\citep{Carrasco2007}, and includes causal inference problems such as instrumental variable (IV) regression~\citep{wright1928} and proximal causal learning~\citep{Miao2018}.
%
%
%

Solving CMRs analytically is ill-posed~\citep{Nashed1974,kress1999linear} because it is an inverse problem that requires the derivation of the function $f_0$ inside the conditional expectation. Hence, various techniques have been proposed to estimate the solutions to CMR problems. The classic econometric approaches begin with the framework of generalised method of moments (GMM)~\citep{Hansen1982}, which gives rise to the classic two-stage least squares (2SLS)~\citep{Angrist1996} algorithm and sieve methods~\citep{Newey2003,ai2003efficient}. These techniques have strong theoretical foundations for convergence analysis, but they are restricted to the class of basis functions used in their theoretical analysis, and their empirical performance may suffer when estimating complex function classes and with high-dimensional input. More recent  works~\citep{Hartford2017DeepPrediction,Shao2024,Bennett2019DeepAnalysis,Singh2019}, inspired by the development of deep learning, proposed to use deep neural networks (DNNs) to parameterise and estimate the solutions to CMR problems. These methods allow for greater flexibility since they do not impose strong assumptions on the functional form such as linear functions or polynomials, and can learn directly from data with strong approximation power. 

Most of the existing CMR methods, including those that adopt DNNs, take a two-stage approach~\citep{Angrist1996,Newey2003,Chen2018,Singh2019,Muandet2020}. In the first stage, they estimate some nuisance parameters, which are parameters or infinite-dimensional functions of no direct interest, but are necessary for the second stage estimation. However, in these settings, regularisation is often employed to trade off overfitting with the induced regularisation bias, especially for high-dimensional inputs. This is problematic because both regularisation and overfitting can cause heavy bias~\citep{Chernozhukov2018Double/debiasedParameters} in two-stage estimations when the first stage estimator is naively plugged in for the second stage estimation, which results in slow convergence rate of the estimator.

In order to mitigate this problem, we take inspiration from \textit{double/debiased machine learning}~\citep{Chernozhukov2018Double/debiasedParameters} (DML), which is a technique that provides an unbiased estimator with strong convergence rate guarantees for general two-stage regressions. DML relies on having a Neyman orthogonal~\citep{Neyman1965} score function to deal with regularisation bias, and uses cross-fitting, that is, an efficient form of (randomised) data splitting, to address overfitting bias. However, the use of DML for CMR estimation involving neural networks has not been explored to the best of our knowledge.

In this work, we propose a novel CMR estimator, referred to as DML-CMR, with fast convergence rate guarantees based on the DML framework. We derive a novel Neyman orthogonal score for CMR problems and design a cross-fitting regime such that, under mild regularity conditions, it is guaranteed to converge at the rate of $N^{-1/2}$ with high probability, where $N$ is the sample size. For empirical evaluation, we apply DML-CMR with DNNs to solve two common CMR problems, IV regression and proximal causal learning. We evaluate its performance on multiple benchmarks, where superior results are demonstrated compared to state-of-the-art (SOTA) methods. Our main contributions are summarised below.

\begin{itemize}
\item We propose DML-CMR, 
a novel CMR estimator that leverages the DML framework to provide unbiased estimates of the solution to CMR problems. To the best of our knowledge, this is the first work that uses DML for CMR estimation with neural networks.
\item We derive a novel, Neyman orthogonal, score function for CMR problems in \cref{sec:neyman}, which does not rely on influence functions or the classical techniques in~\citet{Chernozhukov2018Double/debiasedParameters}, and design a cross-fitting regime for the DML-CMR estimator to mitigate the bias in \cref{sec:dmliv}.
\item In~\cref{sec:theory}, we show an asymptotic convergence rate for the DML-CMR estimator at the rate of $O(N^{-1/2})$, which is minimax optimal under parameterisation and mild regularity conditions.
\item 
On a range of IV regression and proximal causal learning problems, including two real-world datasets, we experimentally demonstrate that DML-CMR outperforms existing SOTA methods in~\cref{sec:exp}.
\end{itemize}

This paper extends our earlier approach of~\cite{Shao2024}, which developed an IV regression algorithm, called DML-IV, that uses the DML framework to provide fast convergence rate guarantees. Compared to ~\citet{Shao2024}, we consider the more general class of CMR estimators leveraging the DML~\citep{Chernozhukov2018Double/debiasedParameters} framework, derive new theoretical results and analysis, and conduct new experiments for the causal proximal learning problem.
\section{Related Works}
%
%

\subsection{Conditional Moment Restriction}\label{lit:cmr}

The classic framework of the generalised method of moments (GMM)~\citep{Hansen1982} was first proposed to translate conditional moment restrictions into unconditional moments, which can thus be estimated under the GMM framework. However, finding the set of unconditional moments that fully capture the conditional moments is challenging. Sometimes, for nonlinear CMRs and functions of interest $f$, an infinite set of unconditional moments may be required to represent the conditional moments, and a misspecification of the unconditional moments can bias the results~\citep{Domnguez2004}. To mitigate some of these issues, \citet{Angrist1996} considered linear functions of interest and proposed the classic two-stage least squares (2SLS) algorithm as a special case of GMMs. Following this, many efforts aim to extend 2SLS to solve CMRs, where the function of interest is nonlinear or nonparametric. One common approach is the sieve method, which uses nonlinear basis functions. Sieve minimum distance (SMD) estimator~\citep{Newey2003,ai2003efficient} performs regression in two stages using an increasing set of nonlinear
basis functions as the number of samples increases. Later, a penalised version of SMD~\citep{Chen2012} was proposed to generalise SMD by allowing non-smooth residuals and high-dimensional function spaces. These sieve-GMM methods and later works~\citep{Blundell2007,Chen2018,Singh2019,Muandet2020} that consider different dictionaries of basis functions enjoy strong consistency and efficiency guarantees, 
but their flexibility is limited by the set of basis functions, and they can be sensitive to the choice of such functions and regularisation hyperparameters. Note that these works also perform the estimation in two stages.

More recently, various  CMR estimation methods based on DNNs 
have been proposed, since machine learning methods can model highly nonlinear and high-dimensional relationships with greater flexibility. DeepIV~\citep{Hartford2017DeepPrediction} extended the classic 2SLS algorithm to the nonlinear setting by adopting DNNs for both stages with conditional density estimation~\citep{Darolles2011} in the first stage. GMM methods are also extended to use DNNs~\citep{Bennett2019DeepAnalysis,Liao2020,Dikkala2020,Bennett2020}, where a minimax criterion is optimised adversarially. Minimax approaches aim to solve a two-player zero-sum game where the players play adversarially. Specifically, player one chooses a hypothesis function to minimise moment violation, and player two chooses a test function that maximises moment violation. These DNN-based GMM methods require minimax optimisation, which is similar to the training of Generative Adversarial Networks~\citep{Goodfellow2014} and could be experimentally unstable.

In this work, we propose a doubly robust CMR estimator for nonlinear functions of interest $f$ that leverages the DML~\citep{Chernozhukov2018Double/debiasedParameters} framework and DNNs to provide fast convergence rate guarantees for the estimator, extending our previous IV regression algorithm, DML-IV, that also uses the DML framework~\citep{Shao2024}.

\subsubsection{IV regression}\label{lit:iv}
\textit{Instrumental variable} (IV) regression is a typical example of a CMR problem in causal inference, which aims to estimate the causal effect in the presence of hidden confounders (see~\Cref{appen:iv} for details). While most of the previously introduced CMR estimators can be directly applied to solve IV regression, here we review algorithms specifically designed for IV regression. Following the sieve-based approach, DFIV~\citep{Xu2020} proposed to use basis functions parameterised by DNNs, which remove restrictions on the functional form. In addition, Kernel IV~\citep{Singh2019} and Dual IV~\citep{Muandet2020} used different dictionaries of basis functions in reproducing kernel Hilbert spaces (RKHS) to solve the IV regression problem. DeepGMM~\citep{Bennett2019DeepAnalysis} is a DNN-based method that was inspired by GMM to solve IV regression using a minimax approach. \citet{Kremer2024} improved GMM-based IV regression methods in settings where the data manifold is not uniform through data-derivative information. With the exception of DML-IV~\citep{Shao2024}, the precursor of this work, none of these approaches utilise the DML framework.

\subsubsection{Proximal Causal Learning}\label{lit:proxy}
Another example of a CMR problem in causal inference is \textit{proximal causal learning} (PCL, see \Cref{appen:pcl} for details). PCL was first proposed by~\citet{Miao2018} to leverage two \textit{proxy variables} for causal identification in estimating the causal function. This was extended by~\citet{Shi2020} to a general semiparametric framework, where \citet{Tchetgen2020} introduced a two-stage procedure for linear causal models based on ordinary least squares regression. \citet{Mastouri2021} resolved how to handle nonlinear causal models by replacing linear regression with kernel ridge regression. To extend the kernel-based PCL methods, \citet{Xu2021} used DNNs as feature maps instead of fixed kernels. This improves the flexibility of the method, especially for highly nonlinear models. 
\citet{Kompa2022} proposed a single-stage PCL method based on 
maximum moment restrictions, where they train neural networks to minimise a loss function derived to satisfy the maximum moment restrictions. \citet{Cui2023} introduced a treatment bridge function and incorporated it into the Proximal Inverse Probability Weighting (PIPW) estimator. They considered only binary treatments and derived the Proximal Doubly Robust (PDR) estimator via influence functions. A similar approach by \citet{Wu2024} derived a doubly robust estimator for PCL with continuous treatment through influence functions, but none of these works adopted the DML framework, to the best of our knowledge.

The algorithms proposed specifically for IV regression and PCL often require additional problem-specific assumptions about variables and the functional form. We instead provide a general method for CMRs that can be directly applied to a range of problems, including IV regression and PCL.

\subsection{Double Machine Learning (DML)}\label{lit:dml}

DML was originally proposed for semiparametric regression~\citep{Robinson1988}. It relies on the derivation of a Neyman orthogonal~\citep{Neyman1965} score function that serves as the learning objective. DML was then extended by adopting DNNs for generalised linear regressions~\citep{Chernozhukov2021AutomaticRegression}. Its strength is that it provides unbiased estimates for two-stage estimations~\citep{Jung2021,Chernozhukov2022RieszNetForests} under certain identifiability conditions and offers $N^{-1/2}$ convergence rate guarantees.

There are previous works on combining DML with CMR estimation, specifically for the IV regression problem, but they are mainly focused on linear and partially linear functions of interest. \citet{belloni2012sparse} proposed a method to use Lasso and Post-Lasso methods for the first-stage estimation of linear IV to estimate the optimal instruments. To avoid selection biases, \citet{belloni2012sparse} leveraged techniques from weak identification robust inference. In addition, \citet{chernozhukov2015post} proposed a Neyman-orthogonalised score for the linear IV problem with control and instrument selection to potentially be robust to regularisation and selection biases of Lasso as a model selection method. Neyman orthogonality for partially linear models with IVs was mainly discussed in the work of \citet{Chernozhukov2018Double/debiasedParameters}. For an additional discussion, we refer to the book \citep{chernozhukov2024applied}.


DML for semiparametric models~\citep{Chernozhukov2022LocallyEstimation,Ichimura2022} has been previously applied to solve the nonparametric IV (NPIV) problem. However, their methods require additional assumptions on the IVs and residual functions such that the average moment of the Neyman orthogonal score is linear in the nuisance parameters. Such assumptions are not required in our work since we are considering a different problem setting and we formulate a novel Neyman orthogonal score. To the best of our knowledge, there is no work that adopts the DML framework for nonlinear IV regression and general CMR problems with DNNs.

\section{Preliminaries}

\section*{Notations}

We use uppercase letters such as $X$ to denote random variables and use the corresponding calligraphic letter such as $\mathcal{X}$ to denote the set from which the random variable takes its value. For example, $X\in\mathcal{X}\subseteq\realNumber^{d}$ is a $d$-dimensional real-valued random variable in $(\mathcal{X},\mathcal{B}_\mathcal{X})$, where $\mathcal{B}_\mathcal{X}$ is the Borel algebra on $\mathcal{X}$. 
An observed realisation of $X$ is denoted by a lowercase letter $x$. We abbreviate $\expectE[Y \lvert X=x]$, a realisation of the conditional expectation $\expectE[Y \lvert X]$, as $\expectE[Y \lvert x]$. $[N]$ denotes the set $\{1,...,N\}$ for $N\in\naturalNumber$. We use $\norm{\cdot}_p$ to denote the functional norm, defined as $\norm{f}_p\coloneqq\expectE[\abs{f(X)}^p]^{1/p}$, where the measure is implicit from the context. For a function $f$, we use $f_0$ to denote the true function and $\widehat{f}$ an estimator of the true function. We use $O$ and $o$ to denote big-O and little-o notations, respectively~\citep{Weisstein2023}.

\subsection{Conditional Moment Restrictions}\label{sec:prelim_CMR}

Recall that the CMR problem as in \Cref{eq:cmr} consists of providing an estimate for a function $f_0$ such that $\expectE[Y-f_0(X)\lvert C=c]=0$ for all $ c\in\mathcal{C}$, where $X\in\mathcal{X}\subseteq\realNumber^d$, $C\in\mathcal{C}\subseteq\realNumber^p$ and $Y\in\mathcal{Y}\subseteq\realNumber$ are random variables with their corresponding distributions.
%
As discussed in \Cref{lit:cmr}, many CMR estimators (e.g.,~\citealt{Angrist1996,Newey2003,Hartford2017DeepPrediction,Singh2019}) estimate $\widehat{f}$ in some space of functions $\mathcal{F}$ by solving the following objective function with a two-stage approach:
\begin{equation}
    \widehat{f} \in \arg\min_{f\in\mathcal{F}} \expectE[(Y-\expectE[f(X)\lvert C])^2]\label{eq:optim}.
\end{equation}
Specifically, \citet{Newey2003} take a minimax approach that indirectly optimises this objective by solving a minimax unconditional moment problem in two stages, as elaborated on in~\Cref{lit:cmr}. \citet{Angrist1996,Singh2019,Hartford2017DeepPrediction}, on the other hand, directly optimise the above objective in two stages. The first stage involves learning the conditional expectation $\expectE[f(X)\lvert c]$ using either density estimation or kernel methods from observations. In the second stage, the objective in~\Cref{eq:optim} is minimised using the estimations in the first stage. For both stages, linear regression, sieve methods, and DNNs are used for estimation, respectively, for each work.

\subsection{Double Machine Learning}\label{sec:dml}

DML considers the problem of estimating a function of interest $f$ as a solution to an equation of the form
\begin{equation}
\label{eq:score_function}
    \expectE[\orthoM(\dataset;f_0, \eta) ] =0,
\end{equation}
where $\orthoM$ is referred to as a score function and $f_0$ is the true function. Here, $\eta$ is a nuisance parameter, which can be of parametric form or an infinite-dimensional function. It is of no direct interest, but must be estimated to obtain an estimate of $f_0$. For example, in a two-stage CMR estimator, nuisance parameters such as conditional density are estimated in the first stage, and in the second stage, they are used to estimate $f_0$. DML provides a set of tools to derive an unbiased estimator of $f_0$ with convergence rate guarantees, even when the nuisance parameter $\eta$ suffers from regularisation, overfitting and other types of biases present in the training of ML models, which typically cause slow convergence when learning $f_0$.

In order to estimate $f_0$, DML reduces biases by using score functions $\orthoM$ that are Neyman orthogonal~\citep{Neyman1965} in $\eta$. This requires the Gateaux derivative, which defines the directional derivative for functionals, of the score function $\orthoM$ w.r.t. the nuisance parameters at $f_0,\eta_0$ to be zero:
\begin{align}
\label{eq:neyman}
\frac{\partial}{\partial r}\Big\lvert_{r=0} \expectE[\orthoM(\dataset;f_0,\eta_0+ r\eta)] = 0,
\end{align}
for all $\eta$. Here, $f_0$ and $\eta_0$ are the true parameters that minimise the expected score, that is, $\expectE[\orthoM(\dataset;f_0,\eta_0)] = 0$. Intuitively, the condition in~\Cref{eq:neyman} is met if small changes in the nuisance parameter do not significantly affect the score function around the true function $f_0$. Neyman orthogonality is key in DML, as it allows fast convergence for estimating $f_0$, even if the estimator for the nuisance parameter $\eta$ is biased. For score functions that are Neyman orthogonal, we define DML with \textit{K-fold cross-fitting} as follows.
\begin{definition}[DML, Definition 3.2~\citep{Chernozhukov2018Double/debiasedParameters}]\label{defn:dml}
Given a dataset $\mathcal{D}$ of $N$ observations, consider a score function $\orthoM$ as in~\Cref{eq:score_function}, and suppose that $\orthoM$ is Neyman orthogonal that satisfies~\Cref{eq:neyman}. Take a \textit{K-fold} random partition $\{I_k\}^K_{k=1}$ of observation indices $[N]$, each with size $n=N/K$, and let $\mathcal{D}_{I_k}$ be the set of observations $\{\mathcal{D}_i:i\in I_k\}$. Furthermore, define $I^c_k\coloneqq [N]\setminus I_k$ for each fold $k$, and construct estimators $\widehat{\eta}_k$ of the nuisance parameter using $\mathcal{D}_{I^c_k}$. Then, construct an estimator $\widehat{f}$ as a solution to the equation
\begin{align}
\label{eq:sol_model}
    \frac{1}{K}\sum_{k=1}^K \widehat{\expectE}_{k}[\orthoM(\dataset_{I_k};\widehat{f},\widehat{\eta}_k)]=0,
\end{align}
where $\widehat{\expectE}_k$ is the empirical expectation over $\mathcal{D}_{I_k}$.
\end{definition}
In the above definition, $\widehat{f}$ is defined as an exact solution to the empirical expectation equation in~\Cref{eq:sol_model}. In practice, we can also define the estimator $\widehat{f}$ as an approximate solution to~\Cref{eq:sol_model}\footnote{This approximation error is different to the estimation error. The estimation error measures the difference between $\widehat{f}$ and $f_0$, whereas the approximation error concerns the error of minimising the empirical risk. In fact, the approximation error contributes to the estimation error, which is analysed in~\Cref{sec:theory}.}.

\section{Solving CMRs with DML}\label{sec:3}

We now present the main contributions of this paper --- an estimator for solving CMRs under the DML framework. We propose a novel Neyman orthogonal score for our estimator and a novel two-stage algorithm for solving CMRs that can use DNN estimators in both stages and provides guarantees on the convergence rate by leveraging the DML framework.

To solve the CMR problem defined in~\Cref{eq:cmr} using the DML framework, we first need a Neyman orthogonal score. As introduced in~\Cref{sec:prelim_CMR}, the optimisation objective for many two-stage CMR estimators~\citep{Angrist1996,Hartford2017DeepPrediction,Singh2019} is~\Cref{eq:optim}. Let $g_0(f,c)\coloneqq\expectE[f(X)\lvert c]$ and $\mathcal{G}$ be some function space that includes $g_0$ and its potential estimators $\widehat{g}$. Then, these two-stage CMR estimators estimate $g_0$ with $\widehat{g}$ in the first stage, and then use $\widehat{g}$ to optimise the following loss function, $\ell=(Y-\widehat{g}(f,c))^2$, in the second stage. Unfortunately, as we show in Proposition~\ref{prop:standard_score} with proof deferred to~\Cref{appen:score}, this objective, or the score function, is not Neyman orthogonal. 

\begin{proposition}\label{prop:standard_score}
The score (or objective) function for standard two-stage CMR estimators, 
$\ell=(Y-\widehat{g}(f,c))^2$, 
is not Neyman orthogonal at $(f_0, g_0)$.
\end{proposition}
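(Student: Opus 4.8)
The plan is to treat the second-stage problem as an M-estimation problem and identify its \emph{score} with the first-order stationarity condition in $f$, since that is the object Neyman orthogonality actually constrains. Writing the population objective as $L(f,g)=\expectE[(Y-g(f,c))^2]$ with nuisance $g$ and exploiting that $g_0(f,c)=\expectE[f(X)\mid c]$ is linear in its first argument, I would take the Gateaux derivative of $L$ in an arbitrary test direction $h\in\mathcal{F}$ to obtain the per-observation score
$$
\orthoM(\dataset;f,g) \;=\; -2\,(Y-g(f,c))\,g(h,c),
$$
so that the estimator of \Cref{eq:optim} satisfies the empirical first-order condition $\expectE[\orthoM(\dataset;\widehat{f},\widehat{g})]=0$. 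This places the problem squarely in the form of \Cref{eq:score_function} with nuisance $\eta=g$, and reduces the claim to checking \Cref{eq:neyman}.

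First I would verify that this is a valid mean-zero score at the truth. Since $(f_0,g_0)$ satisfies the CMR \eqref{eq:cmr}, the residual $Y-g_0(f_0,c)=Y-\expectE[f_0(X)\mid c]$ has conditional mean zero given $c$; because $g_0(h,c)$ is a function of $c$ alone, the tower property yields $\expectE[(Y-g_0(f_0,c))\,g_0(h,c)]=0$ for every $h$, so $\expectE[\orthoM(\dataset;f_0,g_0)]=0$.

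The core of the argument is then to compute the Gateaux derivative of the expected score with respect to the nuisance. Perturbing $g_0\mapsto g_0+r\delta$ along an arbitrary admissible direction $\delta\in\mathcal{G}$ and differentiating at $r=0$, the product rule produces two terms,
$$
\frac{\partial}{\partial r}\Big|_{r=0}\expectE\!\big[\orthoM(\dataset;f_0,g_0+r\delta)\big]
\;=\; 2\,\expectE\!\big[\delta(f_0,c)\,g_0(h,c)\big]\;-\;2\,\expectE\!\big[(Y-g_0(f_0,c))\,\delta(h,c)\big].
$$
The second term vanishes by exactly the conditional-mean-zero argument used above, since $\delta(h,c)$ is again a function of $c$. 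The first term does not, and I would conclude non-orthogonality by exhibiting an explicit perturbation: taking $\delta=g_0$ and $h=f_0$ gives $2\,\expectE[g_0(f_0,c)^2]=2\,\expectE[\expectE[Y\mid c]^2]$, which is strictly positive whenever $\expectE[Y\mid c]$ is not almost surely zero. Hence \Cref{eq:neyman} fails at $(f_0,g_0)$.

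The conceptual obstacle — and the point worth stressing in the write-up — is that the naive check of differentiating the objective $L$ itself (rather than its $f$-gradient) in the nuisance direction returns zero, which could mislead one into believing the objective is already orthogonal; the correct object is the score $\partial_f L$, whose mixed derivative $\partial_g\partial_f L$ governs first-stage sensitivity and is nonzero. The reason the cross term survives is structural: the squared loss makes $g$ appear twice in the score, once inside the residual and once as the multiplier $g(h,c)$, and only the perturbation of the multiplier is annihilated by the residual's conditional-mean-zero property. The remaining care is routine but necessary — justifying the interchange of differentiation and expectation over the infinite-dimensional space $\mathcal{G}$, and confirming that the chosen $\delta$ is an admissible perturbation direction.
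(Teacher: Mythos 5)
Your proof is correct in substance, but it takes a genuinely different route from the paper's, and the comparison is instructive. The paper stays with $\ell$ itself and makes two claims: first, that $\expectE[\ell(\dataset;f_0,g_0)]=\expectE[(Y-\expectE[Y\lvert C])^2]>0$ under outcome noise, so $\ell$ fails the mean-zero requirement $\expectE[\orthoM(\dataset;f_0,\eta_0)]=0$ that the paper's notion of a Neyman orthogonal score presupposes; second, that the Gateaux derivative of $\expectE[\ell]$ in the nuisance direction $g$ is nonzero at $(f_0,g_0)$ because the residual $Y-\expectE[Y\lvert C]$ and $g(f_0,c)$ are ``correlated.'' You instead pass to the first-order-condition score $\partial_f L$, verify it is a valid mean-zero score, and exhibit the perturbation $(\delta,h)=(g_0,f_0)$ under which the cross-derivative equals $2\expectE[\expectE[Y\lvert C]^2]>0$. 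Notably, your side remark --- that the loss-level derivative of $\ell$ in $g$ is exactly zero --- is right, by precisely the tower-property argument you use elsewhere: any fixed direction $g\in\mathcal{G}$ makes $g(f_0,C)$ a function of $C$ alone, hence uncorrelated with the conditionally centred residual. This means the paper's second claim is erroneous as stated, and the literal statement of the proposition is carried entirely by the paper's first, moment-condition argument; your gradient-level argument is what captures the proposition's real content, namely that naive plug-in of $\widehat{g}$ produces first-order bias. Indeed, recentering $\ell$ by the constant $\expectE[(Y-\expectE[Y\lvert C])^2]$ would repair the moment condition and pass the loss-level derivative check while leaving the estimator and its bias unchanged, so only the mixed derivative $\partial_g\partial_f L$ you compute detects the defect. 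Two small additions would round off your write-up: (i) state the one-line moment-condition failure for $\ell$ itself, so the proposition as literally phrased is also covered; (ii) handle the degenerate case $\expectE[Y\lvert C]\equiv 0$, where your witness vanishes --- e.g., take $\delta(f,c)\coloneqq g_0(h,c)$, constant in $f$, which gives $2\expectE[g_0(h,C)^2]>0$ for any $h$ with $g_0(h,\cdot)\not\equiv 0$ --- and make explicit that your derivation assumes the nuisance directions act linearly on their first argument, as conditional-expectation-type estimators (including the paper's $\widehat{g}_{\widehat{F}}$) do.
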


This means that small misspecifications or estimation biases of $\widehat{g}$ can lead to significant changes to the score function, and there are no guarantees on the convergence rate if the first stage estimator $\widehat{g}$ is naively plugged into the second stage to estimate $f_0$. To address this, we first derive a novel Neyman orthogonal score function for the CMR problem and then design a CMR algorithm with K-fold cross-fitting that uses the DML framework.

\subsection{Neyman Orthogonal Score}\label{sec:neyman}

Typically, to construct a Neyman orthogonal score from a non-orthogonal score, additional nuisance parameters need to be estimated~\citep{Chernozhukov2018Double/debiasedParameters}. These additional nuisance terms adjust the score in a way that makes it orthogonal, where the error in estimating $f_0$ due to errors in the nuisance parameters becomes second order in the Taylor expansion~\citep{Foster2019OrthogonalLearning}. In our case, to construct a Neyman orthogonal score for CMR problems from the standard objective in~\Cref{eq:optim}, we first select relevant functions that should be estimated as nuisance parameters. Following two-stage IV regression approaches~\citep{Hartford2017DeepPrediction}, estimating $g_0$ is essential for identifying $f_0$, so we will estimate it as a nuisance parameter. We find that, by additionally estimating $s_0(c)\coloneqq\expectE[Y\lvert c]$ inside some function space $\mathcal{S}$, we can construct the score function:
\begin{equation}
\orthoM(\dataset;f,(s,g))=(s(c)-g(f,c))^2.\label{eq:neyman_score}
\end{equation}
Here, the nuisance parameters are $\eta=(s,g)$. For this to be a valid Neyman orthogonal score function, we check that $\expectE[\orthoM(\dataset;f_0,(s_0,g_0))]=0$ with the true functions $(s_0,g_0)$, and its Gateaux derivative vanishes at $(f_0,(s_0,g_0))$ with the following theorem, where the proof is deferred to~\Cref{appen:neyman}.
\begin{theorem}[Neyman orthogonality]\label{thm:neyman}
The score function $\orthoM(\dataset;f,(s,g))=(s(c)-g(f,c))^2$ obeys the Neyman orthogonality conditions at $(f_0,(s_0,g_0))$.
\end{theorem}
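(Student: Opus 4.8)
The plan is to verify the two defining requirements of Neyman orthogonality directly from the definition of the score and the CMR identity, without invoking influence-function machinery. The central observation is that the squared form of $\orthoM$ makes its first-order perturbation factor through the residual $s_0(c)-g_0(f_0,c)$, which the CMR forces to vanish identically; everything else is an elementary expansion.

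First I would check the zeroth-order condition $\expectE[\orthoM(\dataset;f_0,(s_0,g_0))]=0$. By the definitions $s_0(c)=\expectE[Y\lvert c]$ and $g_0(f_0,c)=\expectE[f_0(X)\lvert c]$, the restriction in \Cref{eq:cmr} gives $s_0(c)-g_0(f_0,c)=\expectE[Y-f_0(X)\lvert c]=0$ for almost every $c$, so the integrand $(s_0(c)-g_0(f_0,c))^2$ is identically zero and the expectation vanishes.

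Next I would compute the Gateaux derivative in \Cref{eq:neyman} along an arbitrary nuisance direction $(\Delta s,\Delta g)$. Writing $A(C)\coloneqq s_0(C)-g_0(f_0,C)$ and $B(C)\coloneqq \Delta s(C)-\Delta g(f_0,C)$, the perturbed expected score is $\expectE[(A(C)+rB(C))^2]=\expectE[A(C)^2]+2r\,\expectE[A(C)B(C)]+r^2\,\expectE[B(C)^2]$, whose derivative at $r=0$ equals $2\,\expectE[A(C)B(C)]$. Since $A(C)=0$ almost surely by the CMR, this cross term is zero for every choice of direction $(\Delta s,\Delta g)$, which establishes the orthogonality condition.

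The computation itself is routine, so the only genuine care required is structural rather than analytic. The nuisance is the pair $\eta=(s,g)$, so the derivative must be taken jointly in both components; and because $g$ carries $f$ as an argument, the perturbation of $g_0$ that actually enters is its slice $\Delta g(f_0,\cdot)$ evaluated at the true $f_0$. The point I would want to make explicit is that orthogonality here is not a fortunate cancellation \emph{between} the two nuisance directions: the first-order term is linear in $B=\Delta s-\Delta g(f_0,\cdot)$ multiplied by the residual $A$, and it is precisely the vanishing of $A$ --- the defining CMR identity --- that kills the derivative for all directions simultaneously.
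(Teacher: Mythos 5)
Your proposal is correct and follows essentially the same route as the paper's own proof: verify the moment condition $\expectE[\orthoM(\dataset;f_0,(s_0,g_0))]=0$ from the CMR identity $s_0(C)=g_0(f_0,C)$, then perturb both nuisances jointly as $(s_0+r\,\Delta s,\, g_0+r\,\Delta g)$, expand the square, and observe that the first-order cross term $2\,\expectE[(s_0(C)-g_0(f_0,C))(\Delta s(C)-\Delta g(f_0,C))]$ vanishes identically because the residual $s_0(C)-g_0(f_0,C)$ is zero almost surely. Your closing structural remark --- that the derivative dies because the residual vanishes, not through any cancellation between the two directions --- is exactly the mechanism the paper exploits.
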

This Neyman orthogonal score function is abstract in the sense that it allows for general estimation methods for $g_0$ and $s_0$, as long as they satisfy certain convergence conditions, which are introduced in the next section. In addition, having a Neyman orthogonal score is useful in general to debias two-stage estimators~\citep{Foster2019OrthogonalLearning}, beyond the specific DML regimes we are considering in this paper.

\subsection{A DML Estimator for Solving CMRs}\label{sec:dmliv}

With the Neyman orthogonal score, we now propose a novel DML estimator, DML-CMR, that solves~\Cref{eq:cmr}. Note that, in general, $f_0$ is allowed to be infinite-dimensional, as commonly seen in the nonparametric IV literature~\citep{Newey2003}. We also allow $f_0$ to be infinite-dimensional for the Neyman orthogonal score introduced in~\Cref{sec:neyman}. For the theoretical analysis of DML-CMR, while it is possible to provide a general analysis following~\citet{Foster2019OrthogonalLearning} for nonparametric $f_0$ with the Neyman orthogonal score, the analysis would require more assumptions and the convergence rate will depend on the complexity of the function classes involved in the estimation. For our analysis, since we propose a concrete estimator, we would like to provide a concrete analysis following the DML framework~\citep{Chernozhukov2018Double/debiasedParameters}, which is designed for semiparametric estimation, to show an optimal parametric rate for DML-CMR. Therefore, we assume that $f_0$ is finite-dimensional and parameterised for the theoretical analysis of DML-CMR.

\begin{assumption}[Parameterisation]\label{assump:parameter}
Let $f_0=f_{\theta_0}$ and $\Theta\subseteq \realNumber^{d_\theta}$ be a compact space of parameters of $f$, where the true parameter $\theta_0\in\Theta$ is in the interior of $\Theta$.
\end{assumption}
From this assumption, we can define $\mathcal{F}\coloneqq\{f_\theta:\theta\in\Theta\}$ as the function space of $f$.

\paragraph{The DML-CMR Estimator.} 
The procedure of our DML-CMR estimator with k-fold cross-fitting is outlined in~\Cref{alg:dml-iv-kf}. Given a dataset $\dataset=(y_i,x_i,c_i)_{i\in[N]}$ of size $N$, we first split the dataset using a random partition $\{I_k\}^K_{k=1}$ of dataset indices $[N]$ such that the size of each fold $I_k$ is $N/K$, 
and let $\mathcal{D}_{I_k}$ denote the set of observations $\{\mathcal{D}_i:i\in I_k\}$.

\begin{algorithm}[tb]
   \caption{DML-CMR with K-fold cross-fitting}
   \label{alg:dml-iv-kf}
\begin{algorithmic}[1]
   \STATE {\bfseries Input:} Dataset $\dataset$ of size $N$, number of folds $K$ for cross-fitting, mini-batch size $n_b$
   \STATE Get a partition $(I_k)^K_{k=1}$ of dataset indices $[N]$
   \FOR{$k=1$ {\bfseries to} $K$}
   \STATE $I^c_k\coloneqq[N]\setminus I_k$
   \STATE Learn $\widehat{s}_k$ and $\widehat{g}_k$ using $\{(\dataset_i):{i\in I^c_k}\}$
   \ENDFOR
   \STATE Initialise $f_{\widehat{\theta}}$
   \REPEAT
   \FOR{$k=1$ {\bfseries to} $K$}
   \STATE Sample $n_b$ data $c_i^k$ from $\{(\dataset_i):{i\in I_k}\}$
   \STATE $\mathcal{L}=\widehat{\expectE}_{c_i^k}\left[(\widehat{s}_k(c)-\widehat{g}_k(f_{\widehat{\theta}},c))^2\right]$
   \STATE Update $\widehat{\theta}$ to minimise loss $\mathcal{L}$ 
   \ENDFOR
    \UNTIL{convergence}
    \STATE {\bfseries Output:} The DML-CMR estimator $f_{\widehat{\theta}}$
\end{algorithmic}
\end{algorithm}

As introduced in~\Cref{sec:dml}, our DML estimator will be a two-stage procedure. In the first stage (lines 4-7 in~\Cref{alg:dml-iv-kf}), for each fold $k\in [K]$, we learn $\widehat{s}_k$ and $\widehat{g}_k$ using data $\dataset_{I^c_k}$ with indices $I^c_k\coloneqq[N]\setminus I_k$. Then $\widehat{s}_k\approx\expectE[Y\lvert C]$ can be learnt through standard supervised learning using a neural network with inputs $C$ and labels $Y$. For $\widehat{g}_k$, we follow~\citet{Hartford2017DeepPrediction} to estimate $F_0(X\lvert C)$, the conditional distribution of $X$ given $C$, with $\widehat{F}$, and then estimate $\widehat{g}$ via
\begin{equation*}
\widehat{g}(f_\theta,c)=\sum_{\dot{X}\sim \widehat{F}(X\lvert C)} f_\theta(\dot{X})\approx \int f_\theta(X)\widehat{F}(X\lvert C=c) dX\approx\expectE[f_\theta(X)\lvert c].
\end{equation*}
For example, if the action space is discrete, $\widehat{F}$ can be a categorical model, e.g., a DNN with softmax output. For a continuous action space, a mixture of Gaussian models can be adopted to estimate the distribution $F_0(X\lvert C)$, where a DNN is used to predict the mean and standard deviation of the Gaussian distributions.

In the second stage (lines 8-15 in~\Cref{alg:dml-iv-kf}), we estimate $\widehat{\theta}$ using our Neyman orthogonal score function $\orthoM$ in~\Cref{eq:neyman_score}. The key is to optimise $\widehat{\theta}$ with data from the $k$-th fold $\mathcal{D}_{I_k}$ using nuisance parameters $\widehat{s}_k$, $\widehat{g}_k$ that are trained with $\dataset_{I^c_k}$, the complement of $\mathcal{D}_{I_k}$. This is important to fully debias the estimator $\widehat{\theta}$. The DML estimator $f_{\widehat{\theta}}$ is then defined as
\begin{align}
    f_{\widehat{\theta}}\coloneqq \min_{f_\theta\in \mathcal{F}}\frac{1}{K}\sum_{k=1}^K \widehat{\expectE}_{k}[(\widehat{s}_k(c)-\widehat{g}_k(f_{\widehat{\theta}},c)^2],\label{eq:dml_estimator}
\end{align}
where $\widehat{\expectE}_k$ is the empirical expectation over $\mathcal{D}_{I_k}$. In practice, we can alternate between the $K$ folds while sampling a mini-batch $c_i^k$ of size $n_b$ from each fold $\mathcal{D}_{I_k}$ to update $\widehat{\theta}$ by minimising the empirical loss on the mini-batch following our Neyman orthogonal score $\orthoM$,
\begin{equation*}
\mathcal{L}=\widehat{\expectE}_{c_i^k} \left[(\widehat{s}_k(c)-\widehat{g}_k(f_{\widehat{\theta}},c))^2\right] = \sum_{c_i^k}\frac{1}{n_b}\left((\widehat{s}_k(c)-\widehat{g}_k(f_{\widehat{\theta}},c))^2\right ).
\end{equation*}
When the second stage converges, we return the DML-CMR estimator $f_{\widehat{\theta}}$.

\subsection{Theoretical Analysis}\label{sec:theory}

In this section, we provide a theoretical analysis of the convergence of DML-CMR. The key benefit of DML is its debiasing effect for two-stage regressions, and crucially, it is possible to leverage the DML framework~\citep{Chernozhukov2018Double/debiasedParameters} to show a fast asymptotic convergence rate of $O(N^{-1/2})$, i.e., the DML estimator $\widehat{\theta}$ converges to the true parameters $\theta_0$ at the rate of $O(N^{-1/2})$ with high probability. To provide a road map of this section, we first list all the technical conditions required for a general DML estimator to converge at the fast rate of $O(N^{-1/2})$ following Theorem 3.3 from~\citep{Chernozhukov2018Double/debiasedParameters}. Later, in~\Cref{thm:dml}, we will prove that all these conditions hold for our DML-CMR estimator.

\begin{condition}[Technical conditions of DML $N^{-1/2}$ rate proved later in~\Cref{thm:dml}]\label{condition:dml}
For sample size $N\geq3$: 
\begin{enumerate}[label=(\alph*)]
\item The map $(\theta,(s,g))\mapsto \expectE[\orthoM(\dataset;f_{\theta},(s,g))]$ is twice continuously Gateaux-differentiable.
\item  The score $\orthoM$ obeys the Neyman orthogonality conditions in \Cref{eq:neyman}.
\item The true parameter $\theta_0$ obeys $\expectE[\orthoM(\dataset;f_{\theta_0},(s_0,g_0))]=0$ and $\Theta$ contains a ball of radius $c_1 N^{-1/2}\log N$ centered at $\theta_0$.
\item For all $\theta\in\Theta$, the identification relationship
\begin{align*}
2\norm{\expectE[\orthoM(\dataset;f_{\theta},(s_0,g_0))]}\gtrsim \norm{J_0(\theta-\theta_0)}
\end{align*}
is satisfied, where $J_0\coloneqq\partial_{\theta^\prime}\{\expectE[\orthoM(\dataset;f_{\theta^\prime},(s_0,g_0))]\}|_{\theta^\prime=\theta_0}$ is the Jacobian matrix, with singular values bounded between $c_0>0$ and $c_1>0$.
\item  All eigenvalues of the matrix $\expectE[\orthoM(\dataset;f_{\theta_0},(s_0,g_0))\orthoM(\dataset;f_{\theta_0},(s_0,g_0))^T]$ are strictly positive (bounded away from zero).
\item Let $K$ be a fixed integer. Given a random partition $\{I_k\}_{k=1}^K$ of indices $[N]$, each of size $n=N/K$, the nuisance parameter estimator $\widehat{s}_k$ and $\widehat{g}_k$ learnt using data with indices $I^c_k$ belongs to shrinking realisation sets $\mathcal{S}_N$ and $\mathcal{G}_N$, respectively, and the nuisance parameters should be estimated at the $o(N^{-1/4})$ rate, e.g., $\norm{\widehat{s}-s_{0}}_2=o(N^{-1/4})$.
\end{enumerate}
\end{condition}

Among these conditions, (a), (b), and (c) are conditions regarding the Neyman orthogonal score $\orthoM$. The Neyman orthogonality in (b) is shown in~\Cref{thm:neyman} and the other conditions (b) and (c) are mild regularity conditions, standard for moment problems. (d) is an identification condition that ensures sufficient identifiability of $\theta_0$. This condition also implies bounded ill-posedness, which we will discuss in detail in~\Cref{sec:ill-posedness}. (e) is a non-degeneracy assumption for the covariance of the score function. Finally, (f) is a key condition that states the nuisance parameters should converge to their true values at the crude rate of $o(N^{-1/4})$, where a shrinking realisation set $\mathcal{S}_N$ is a decreasing set of possible estimators $\widehat{s}$ as the sample size $N$ increases.

In Lemma~\ref{lemma:nuisances}, following recent works~\citep{Chernozhukov2018Double/debiasedParameters,Chernozhukov2021AutomaticRegression,Chernozhukov2022RieszNetForests}, we show that the convergence condition for the nuisance parameters in Condition~\ref{condition:dml} (f) can be transformed to a condition on the critical radius~\citep{bartlett2005local} of the realisation sets, which is a property widely studied for various function classes. For this analysis, we start by assuming the realisability of the true functions $g_0, s_0$ and $f_0$ in their corresponding function classes, and further assuming that they are bounded. We formalise these assumptions in Assumption~\ref{assump:dml}.

\begin{assumption}
[Realisable and bounded]\label{assump:dml}
We assume that $g_0, s_0, f_0$ are realisable in the function classes $\mathcal{G},\mathcal{S},\mathcal{F}$, that is, $g_0, s_0, f_0\in\mathcal{G},\mathcal{S},\mathcal{F}$, respectively, and furthermore, $\norm{f}_\infty, \norm{s}_\infty  \leq B$ for all $f, s \in \mathcal{F}, \mathcal{S}$, where $B$ is a positive constant. Moreover, we assume that the random variable $|Y| \leq B$ almost surely.
\end{assumption}

 As discussed in \Cref{sec:neyman}, DML-CMR has two nuisance parameters that are required to be estimated: $\widehat{s} \in \mathcal{S}$ and $\widehat{g} \in \mathcal{G}$. As we saw in \Cref{sec:dmliv}, the estimation of $\widehat{s}$ is made through standard supervised learning algorithms that we can directly analyse. However, the estimation of $\widehat{g}$ has two steps: 
 (i) we estimate the conditional distribution $\widehat{F}(X|C) \in \mathcal{P}$, where the density sieve $\mathcal{P}$ is defined as
\[
\mathcal{P} \subset \mleft\{ F : \int F(x | C = c) dx = 1 \quad \forall c \in \mathcal{C}\mright\};
\]
and (ii) we plug in the functional $f_\theta$ into the conditional expectation estimator,
\[
\widehat{g}_{\widehat{F}}(f_\theta, c) \coloneqq \int f_\theta(x) \widehat{F}(x | C = c) dx,
\]
for all candidate test functions $f_\theta \in \mathcal{F} = \mleft\{ f_\theta \;:\; \theta \in \Theta \mright\}.$
From the realisability assumption of $g_0 \in \mathcal{G}$ in Assumption~\ref{assump:dml}, it follows that $F_0(X|C) \in \mathcal{P}$, and the hypothesis space for the estimand $\widehat{g}$ and the true parameter $g_0$ is defined as $\mathcal{G} \coloneqq \mleft\{ g_{F} \;:\; F \in \mathcal{P}
 \mright\}$.

\begin{lemma}[Convergence of nuisance parameters]\label{lemma:nuisances}
     Under Assumption~\ref{assump:dml}, let $\mathcal{S}^*_N$ be the star-hull of the realisation set $\mathcal{S}_N$ of function class $\mathcal{S}$,
    \[
    \mathcal{S}^*_N = \mleft\{ C \mapsto \gamma(s(C) - s_0(C)) \;:\; s \in \mathcal{S}_N, \gamma \in [0, 1] \mright\},
    \]
    $\mathcal{P}^*_N$ be the star-hull of the realisation set $\mathcal{P}_N$ of the function class $\mathcal{P}$,
    \[
    \mathcal{P}^*_N = \mleft\{ C \mapsto \gamma(F(\cdot|C) - F_0(\cdot|C)) \;:\; F \in \mathcal{P}_N, \gamma \in [0, 1] \mright\},
    \]
    and $\mathcal{G}^*_N$ be the star-hull of the realisation set $\mathcal{G}_N$ of the function class $\mathcal{G}$,
    \[
    \mathcal{G}^*_N = \mleft\{C, f \mapsto\gamma(g(C,f)-g_0(C,f))\;:\; g\in\mathcal{G}_N,  \gamma\in[0,1]\mright\},
    \]
     where $\mathcal{S}_N$, $\mathcal{P}_N$ and $\mathcal{G}_N$ are properly shrinking neighbourhoods of the true functions $s_0$, $F_0$ and  $g_0$. Then, there exist universal constants $c_1$ and $c_2$, for which we have that, with probability at least $1 - \xi$, the estimation errors are bounded as
    \[
    \norm{\widehat{s}-s_0}_2^2 & \leq c_1 \mleft( \delta_N(\mathcal{S}_N^*)^2+\sqrt{\frac{\log(1/\zeta)}{N} }
    \mright) ; \\
     \norm{\widehat{g} - g_0}_2^2 & \leq c_2 \mleft( \delta_N(\mathcal{P}^*_N)^2+\sqrt{\frac{\log(1/\zeta)}{N}} \mright).
    \]
\end{lemma}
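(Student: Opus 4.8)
The plan is to recognize both displayed inequalities as instances of the standard localized empirical-process (critical-radius) bound for empirical risk minimization over a star-shaped function class, applied once to the supervised-learning estimator $\widehat{s}$ and once to the density-sieve estimator $\widehat{F}$ underlying $\widehat{g}$; the only non-generic ingredient for $\widehat{g}$ is a deterministic transfer from the density estimation error to the conditional-expectation error using boundedness of $\mathcal{F}$.

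First I would treat $\widehat{s}$, which is the easier case. By construction (line 5 of Algorithm~\ref{alg:dml-iv-kf}) $\widehat{s}$ is the least-squares fit of $Y$ on $C$ over $\mathcal{S}$, and by Assumption~\ref{assump:dml} the target $s_0(\cdot)=\expectE[Y\mid\cdot]$ is realizable and bounded, with $|Y|\le B$. From the optimality of $\widehat{s}$ I would write the basic inequality $\widehat{\expectE}[(\widehat{s}-s_0)^2]\le 2\,\widehat{\expectE}[(Y-s_0(C))(\widehat{s}(C)-s_0(C))]$, in which $Y-s_0(C)$ is centered noise. The right-hand side is an empirical-process term indexed by $\widehat{s}-s_0\in\mathcal{S}_N-s_0$, and the comparison between $\widehat{\expectE}[(\widehat{s}-s_0)^2]$ and its population counterpart $\norm{\widehat{s}-s_0}_2^2$ is of the same type. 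Passing to the star-hull $\mathcal{S}^*_N$ so that the class is star-shaped, both terms are controlled by the critical-radius localization theorem (as used in~\citet{bartlett2005local, Chernozhukov2022RieszNetForests, Foster2019OrthogonalLearning}), yielding with probability at least $1-\zeta$ the bound $\norm{\widehat{s}-s_0}_2^2\lesssim \delta_N(\mathcal{S}^*_N)^2+\log(1/\zeta)/N$. Using $\log(1/\zeta)/N\le\sqrt{\log(1/\zeta)/N}$ whenever $\log(1/\zeta)\le N$ and absorbing constants into $c_1$ gives the first claim.

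For $\widehat{g}$ I would proceed in two steps. In the probabilistic step, $\widehat{F}$ is the sieve estimator over $\mathcal{P}$ with $F_0\in\mathcal{P}$ realizable (Assumption~\ref{assump:dml}); the same localization argument applied to the star-hull $\mathcal{P}^*_N$ gives $\norm{\widehat{F}-F_0}_2^2\lesssim \delta_N(\mathcal{P}^*_N)^2+\log(1/\zeta)/N$ with probability at least $1-\zeta$. In the deterministic step I would use that the plug-in map $F\mapsto g_F$ is linear and, for every $f\in\mathcal{F}$ with $\norm{f}_\infty\le B$, controlled by the factor $B$: pointwise $|\widehat{g}(f,c)-g_0(f,c)|=\bigl|\int f(x)(\widehat{F}(x\mid c)-F_0(x\mid c))\,dx\bigr|\le B\int|\widehat{F}(x\mid c)-F_0(x\mid c)|\,dx$, and Cauchy--Schwarz over the bounded domain $\mathcal{X}$ followed by taking expectation in $C$ gives $\norm{\widehat{g}-g_0}_2^2\le B^2|\mathcal{X}|\,\norm{\widehat{F}-F_0}_2^2$. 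Chaining the two steps, folding $B^2|\mathcal{X}|$ and the localization constants into $c_2$, and again bounding $\log(1/\zeta)/N$ by its square root yields the second claim. A union bound over the two events (and over the $K$ folds) produces the joint high-probability statement with $\xi$ proportional to $\zeta$.

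The main obstacle is the $\widehat{g}$ bound. Two points need care. First, the lemma expresses the error of $\widehat{g}$ through the critical radius of the density class $\mathcal{P}^*_N$ rather than of $\mathcal{G}^*_N$; this is exactly what the linearity and $B$-boundedness of the plug-in map $F\mapsto g_F$ buy us, so the crux is verifying that this transfer is uniform over all candidate $f_\theta\in\mathcal{F}$, which the boundedness of $\mathcal{F}$ and the star-hull construction for $\mathcal{P}$ ensure. Second, the localization theorem is cleanest for a least-squares/$L_2$-projection loss, whereas conditional density estimation in the style of~\citet{Hartford2017DeepPrediction} is typically likelihood-based; reconciling the two requires that, under the boundedness in Assumption~\ref{assump:dml}, the relevant likelihood (Hellinger) and $L_2$ discrepancies are comparable, so that the critical radius of $\mathcal{P}^*_N$ still controls $\norm{\widehat{F}-F_0}_2$. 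Everything else is routine localization bookkeeping.
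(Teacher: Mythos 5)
Your proposal follows essentially the same route as the paper's proof: the paper also treats $\widehat{s}$ and the density estimator $\widehat{F}$ as constrained ERMs over (star-hulls of) $\mathcal{S}_N$ and $\mathcal{P}_N$, bounds their errors through the critical-radius localization machinery (its master result, \Cref{theorem:erm_excess}, built on \citet{Foster2019OrthogonalLearning}), and then proves exactly your deterministic transfer $\norm{\widehat{g}-g_0}_2\leq B\norm{\widehat{F}-F_0}_2$ using linearity of $F\mapsto g_F$, the bound $\norm{f}_\infty\leq B$, and Cauchy--Schwarz, folding $B^2$ into $c_2$. The only real difference is bookkeeping: where you run the basic-inequality localization by hand and gesture at Hellinger-vs-$L_2$ comparability for a likelihood-based $\widehat{F}$, the paper instead verifies a Lipschitz property of the loss in its first argument (the squared loss for $s$, and the squared-CDF error for $F$, noting clipped likelihood/ISE/Hellinger losses also qualify) so that the master theorem applies directly, which cleanly resolves the concern you flag.
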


This lemma shows that, if we can upper bound the critical radii of $\delta_N(\mathcal{S^*})$ and $\delta_N(\mathcal{P^*})$ by $o(N^{-1/4})$, then $\norm{\widehat{s}-s_0}_2= o(N^{-1/4})$, and $\norm{\widehat{g}-g_0}_2 = o(N^{-1/4})$, meaning that nuisance parameters converge to their true values at the rate of $o(N^{-1/4})$ as required by Condition~\ref{condition:dml} (f). Next, we provide an analysis and concrete examples of estimators that satisfy this requirement.

The critical radius is a quantity that describes the complexity of estimation, and it is typically shown that $\delta_N=O(d_N^{1/2} N^{-1/2})$~\citep{Chernozhukov2022RieszNetForests,Chernozhukov2021AutomaticRegression} (see~\Cref{appen:critical_radius}), where $d_N$ is the effective dimension of the hypothesis space (for a formal definition of the critical radius and the effective dimension, the relationship of these metrics to Dudley's entropy integral, and bounds on the excess risk of estimators, refer to~\Cref{appen:critical_radius}.). This, together with Lemma~\ref{lemma:nuisances}, implies that $\norm{\widehat{s}-s_0}_2=O(d_N(\mathcal{S}^*)^{1/2} N^{-1/2})$. Therefore, we can also see that, if the effective dimension satisfies $d_N(\mathcal{S}^*)=o(N^{1/4})$, then $\norm{\widehat{s}-s_0}_2 = o(N^{-1/4})$ as required by Condition~\ref{condition:dml} (f) (and similarly for $\widehat{g}$ and $d_N(\mathcal{P}^*)$).

Therefore, we can refer to results in the literature that analyse the effective dimension and critical radius of various estimators to provide examples of estimators that satisfy Condition~\ref{condition:dml} (f). For the estimation of $\widehat{s}$, we have a regression problem and Condition~\ref{condition:dml} (f) is satisfied by many supervised learning estimators such as parametric generalised linear models~\citep{van1996weak}, Lasso~\citep{bickel2009simultaneous}, random forests~\citep{syrgkanis2020estimation}, boosting~\citep{luo2016high}, Sobolev kernel regression with $\alpha$-smooth RKHS ($\alpha > d /2$, where $d$ is the dimension of $X$)~\citep{caponnetto2007optimal,christmann2008support} and neural networks~\citep{chen1999improved,yarotsky2018optimal,schmidt2020nonparametric,farrell2021deep}. For the conditional density estimator $\widehat{g}$, the above estimators also satisfy Condition~\ref{condition:dml} (f) if the conditional distribution can be parameterised accordingly; otherwise, the condition is also satisfied by Gaussian mixtures (mixture density networks)~\citep{ho2022convergence}, polynomial sieve with Hölder smoothness $\alpha > \frac{d + 1}{2}$~\citep{ai2003efficient}, and categorical-logistic models~\citep{zhao2022asymptotic}, among others.

Lemma~\ref{lemma:nuisances} allows us to obtain the following theorem regarding the convergence of the DML-CMR estimator by applying Theorem 3.3 of~\citet{Chernozhukov2018Double/debiasedParameters}. We  prove satisfaction of all technical conditions for DML convergence rate guarantee mentioned in Condition~\ref{condition:dml} in  \Cref{appen:dml}.

\begin{theorem}[Convergence of the DML estimator for CMRs]\label{thm:dml}
Let $f_{\theta_0}\in\mathcal{F}$ be a solution that satisfies the CMRs in~\Cref{eq:cmr}, let $\orthoM$ be the Neyman orthogonal score defined in~\Cref{eq:neyman_score} and let $J_0\coloneqq\partial_{\theta^\prime}\{\expectE[\orthoM(\dataset;f_{\theta^\prime},(s_0,g_0))]\}|_{\theta^\prime=\theta_0}$ be the Jacobian matrix of $\expectE[\orthoM]$ w.r.t. $\theta$. Suppose that the upper bound of the critical radius $\delta_N=o(N^{-1/4})$, for $\widehat{s}$ and $\widehat{g}$, and $J_0$ have bounded singular values. Then, if Assumption~\ref{assump:parameter} and \ref{assump:dml} hold, our DML-CMR estimator $f_{\widehat{\theta}}$ satisfies that $\widehat{\theta}$ is concentrated in a $N^{-1/2}$ neighbourhood of $\theta_0$, and is approximately linear and centered Gaussian:
\begin{align*}
    \sqrt{N}(\widehat{\theta}-\theta_0)\rightarrow \mathcal{N}(0,\sigma^2) \text{ in distribution},
\end{align*}
where the estimator variance is given by
\begin{equation*}
\sigma^2 \coloneqq J_0^{-1}\expectE[\orthoM(\dataset,\theta_0,(s_0,g_0))\orthoM(\dataset,\theta_0,(s_0,g_0))^T](J_0^{-1})^T,
\end{equation*}
which is constant w.r.t. $N$.
\end{theorem}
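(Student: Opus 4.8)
The plan is to invoke the master result for DML estimators, Theorem 3.3 of \citet{Chernozhukov2018Double/debiasedParameters}, whose conclusion is exactly the claimed asymptotically linear expansion and centred-Gaussian limit with sandwich variance $\sigma^2 = J_0^{-1}\expectE[\orthoM\orthoM^T](J_0^{-1})^T$. Consequently the whole proof reduces to verifying that our concrete construction --- the Neyman orthogonal score $\orthoM(\dataset;f,(s,g))=(s(c)-g(f,c))^2$ of \Cref{eq:neyman_score}, the parameterisation of \Cref{assump:parameter}, and the cross-fitted nuisance estimators $\widehat{s}_k,\widehat{g}_k$ --- meets each of the six requirements collected in \Cref{condition:dml}. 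I would organise the argument as a checklist: items (a)--(c) and (e) are regularity facts that follow from the quadratic structure of the score and the standing assumptions, item (f) follows from \Cref{lemma:nuisances}, and item (d) is the substantive identification step.

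For the routine items: (a) twice continuous Gateaux-differentiability of $(\theta,(s,g))\mapsto\expectE[\orthoM]$ holds because the score is quadratic in $(s,g)$ and enters $\theta$ only through $f_\theta$, so its derivatives are polynomial expressions in $s$, $g$ and the $\theta$-derivatives of $f_\theta$, all with finite expectation by the boundedness in \Cref{assump:dml}. Item (b), Neyman orthogonality, is precisely \Cref{thm:neyman}. For (c), at the true functions $s_0(C)=\expectE[Y\mid C]=\expectE[f_{\theta_0}(X)\mid C]=g_0(f_{\theta_0},C)$ by the CMR in \Cref{eq:cmr} and realisability (\Cref{assump:dml}), so the integrand vanishes pointwise and $\expectE[\orthoM(\dataset;f_{\theta_0},(s_0,g_0))]=0$; the ball of radius $c_1N^{-1/2}\log N$ about $\theta_0$ exists because $\theta_0$ lies in the interior of the compact $\Theta$ (\Cref{assump:parameter}). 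Item (e), strict positivity of the eigenvalues of the score-covariance matrix, is a non-degeneracy condition I would take to hold away from the degenerate case where the moment carries no sampling variation; it merely rules out a trivial limit and is standard for moment estimators.

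Item (f) is supplied by the nuisance analysis already in place: \Cref{lemma:nuisances} bounds $\norm{\widehat{s}-s_0}_2^2$ and $\norm{\widehat{g}-g_0}_2^2$ by the squared critical radii of the star-hulls plus an $O(\sqrt{\log(1/\zeta)/N})$ term, so the theorem's hypothesis that the critical radii obey $\delta_N=o(N^{-1/4})$ immediately yields the required $o(N^{-1/4})$ convergence of both nuisances inside their shrinking realisation sets. The main obstacle is item (d), the identification/curvature bound $2\norm{\expectE[\orthoM(\dataset;f_\theta,(s_0,g_0))]}\gtrsim\norm{J_0(\theta-\theta_0)}$. Here I would use the smoothness from (a) to Taylor-expand the population moment around $\theta_0$; since the estimator is characterised by the first-order optimality condition of the minimisation in \Cref{eq:dml_estimator}, the relevant linearisation is governed by $J_0$, and the hypothesis that $J_0$ has singular values bounded between $c_0>0$ and $c_1>0$ supplies both the lower bound (local injectivity of $\theta\mapsto\expectE[\orthoM]$, equivalently bounded ill-posedness as discussed in \Cref{sec:ill-posedness}) and control of the higher-order remainder on the $N^{-1/2}\log N$ ball from (c). The delicate point is upgrading this to a bound uniform over $\Theta$ rather than merely infinitesimal at $\theta_0$, which is where compactness of $\Theta$ and continuity of the Jacobian enter.

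With (a)--(f) verified, Theorem 3.3 of \citet{Chernozhukov2018Double/debiasedParameters} applies: the cross-fitting of \Cref{alg:dml-iv-kf} neutralises the overfitting bias, Neyman orthogonality neutralises the first-order nuisance bias, and the $o(N^{-1/4})$ nuisance rate renders the residual bias $o(N^{-1/2})$, so $\widehat{\theta}$ concentrates in an $N^{-1/2}$-neighbourhood of $\theta_0$ and admits the asymptotically linear expansion that yields $\sqrt{N}(\widehat{\theta}-\theta_0)\to\mathcal{N}(0,\sigma^2)$ in distribution with the stated sandwich variance. The one step I would flag as requiring genuine care, beyond bookkeeping, is the identification condition (d).
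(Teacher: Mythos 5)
Your proposal is correct and follows essentially the same route as the paper: it reduces the theorem to Theorem 3.3 of \citet{Chernozhukov2018Double/debiasedParameters} (\Cref{prop:thm3.3}) and verifies \Cref{condition:dml} item by item, with (a) from the quadratic form of the score, (b) from \Cref{thm:neyman}, (c) from the CMR together with interiority of $\theta_0$, (d) via a first-order Taylor expansion of $\expectE[\orthoM(\dataset;f_\theta,(s_0,g_0))]$ about $\theta_0$ combined with the bounded singular values of $J_0$, and (f) from \Cref{lemma:nuisances} plus the hypothesis $\delta_N=o(N^{-1/4})$ --- exactly the paper's checklist. The only divergence is item (e), which you assume as a standard non-degeneracy condition whereas the paper sketches a short rank-one/eigenvalue argument for it; both treatments sit at the same informal level of rigor, so this does not amount to a different approach or a substantive gap.
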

\Cref{thm:dml} states that, with adequately trained nuisance parameter estimators in terms of their critical radius and identifiability conditions in terms of the non-singularity of the Jacobian matrix $J_0$, the estimator error $\widehat{\theta}-\theta_0$ is normally distributed, where its variance shrinks at the rate of $N^{-1/2}$. This implies that $\widehat{\theta}$ converges to $\theta_0$ at the rate $O(N^{-1/2})$ with high probability, which allows us to bound the estimation error $\norm{f_{\widehat{\theta}}-f_{\theta_0}}_2$ of the DML-CMR estimator with high probability, under a Lipschitz condition of $f_\theta$.

\begin{corollary}\label{coro:function_convergence}
Let $f_{\widehat{\theta}}$ be our DML-CMR estimator. If all assumptions for~\Cref{thm:dml} hold and there exists a constant $L>0$ such that $\norm{f_\theta(x)-f_{\theta_0}(x)}_2\leq L \norm{\theta-\theta_0}_2$ for all $x\in\mathcal{X}$ and $\theta\in\Theta$,
then for all $\zeta\in(0,1]$, we have that
\begin{align*}
\norm{f_{\widehat{\theta}}-f_{\theta_0}}_2=O\left(L\sqrt{\frac{\ln(1/\zeta)}{N}}\right),
\end{align*}
with probability $1-\zeta$.
\end{corollary}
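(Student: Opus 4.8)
The plan is to split the function-space error $\norm{f_{\widehat\theta}-f_{\theta_0}}_2$ into the Euclidean parameter error $\norm{\widehat\theta-\theta_0}_2$ via the Lipschitz hypothesis, and then to turn the distributional statement of \Cref{thm:dml} into a finite-sample high-probability bound on $\norm{\widehat\theta-\theta_0}_2$; the function-space rate then follows by multiplying the parameter rate by $L$.

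First I would carry out the Lipschitz reduction. Recall that the functional norm is $\norm{h}_2=\expectE[|h(X)|^2]^{1/2}$, where the expectation is over a fresh draw of $X$ while the estimator $\widehat\theta$ is held fixed, as $\widehat\theta$ is measurable with respect to the training data $\dataset$, which is independent of this fresh $X$. Since $f_\theta(x)$ is scalar-valued, the hypothesis yields the pointwise bound $|f_{\widehat\theta}(x)-f_{\theta_0}(x)|\leq L\norm{\widehat\theta-\theta_0}_2$ for every $x\in\mathcal{X}$. Squaring, taking the expectation over $X$, and taking square roots gives
\[
\norm{f_{\widehat\theta}-f_{\theta_0}}_2 \leq L\,\norm{\widehat\theta-\theta_0}_2 ,
\]
so the problem reduces to controlling the parameter error with high probability.

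Next I would extract a tail bound on $\norm{\widehat\theta-\theta_0}_2$ from \Cref{thm:dml}. That theorem establishes that $\widehat\theta$ is concentrated in an $N^{-1/2}$ neighbourhood of $\theta_0$ and that $\sqrt{N}(\widehat\theta-\theta_0)$ is approximately linear and asymptotically $\mathcal{N}(0,\sigma^2)$ with $\sigma^2=J_0^{-1}\expectE[\orthoM\,\orthoM^T](J_0^{-1})^T$. The operator norm of $\sigma^2$ is bounded above: the singular values of $J_0$ are bounded below by $c_0>0$ (Condition~\ref{condition:dml} (d)), which controls $J_0^{-1}$, and the score covariance is bounded above thanks to $\norm{f}_\infty,\norm{s}_\infty\leq B$ and $|Y|\leq B$ in Assumption~\ref{assump:dml}, while Condition~\ref{condition:dml} (e) keeps it non-degenerate. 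A sub-Gaussian tail bound for this (approximately) Gaussian limit, with a constant $C$ governed by the operator norm of $\sigma^2$ and absorbing the fixed dimension $d_\theta$ of $\Theta$ (Assumption~\ref{assump:parameter}), then gives
\[
\probP\!\left( \norm{\widehat\theta-\theta_0}_2 > C\sqrt{\frac{\ln(1/\zeta)}{N}} \right) \leq \zeta .
\]

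Combining the two displays, with probability at least $1-\zeta$ we obtain $\norm{f_{\widehat\theta}-f_{\theta_0}}_2\leq CL\sqrt{\ln(1/\zeta)/N}=O(L\sqrt{\ln(1/\zeta)/N})$, which is the claimed bound. The main obstacle is the last step: \Cref{thm:dml} is phrased as convergence in distribution, whereas the corollary demands a genuine finite-sample $\ln(1/\zeta)$ tail. Bridging this requires showing that the higher-order remainder in the DML linearisation is negligible at the scale $\sqrt{\ln(1/\zeta)/N}$ and that the leading stochastic term, being an empirical average of bounded quantities (Assumption~\ref{assump:dml}), concentrates sub-Gaussianly via a Hoeffding-type inequality; the boundedness of $J_0^{-1}$ and of the score covariance (Condition~\ref{condition:dml} (d), (e)) then fixes the tail constant. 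The remaining manipulations are routine.
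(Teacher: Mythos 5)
Your Lipschitz reduction $\norm{f_{\widehat\theta}-f_{\theta_0}}_2\leq L\norm{\widehat\theta-\theta_0}_2$ is exactly the paper's first step, and your overall plan (a high-probability bound on the parameter error, multiplied by $L$) is also the paper's plan. The gap is in the second step, which you yourself flag as ``the main obstacle'' and then leave unexecuted: you assert the tail bound $\probP\bigl(\norm{\widehat\theta-\theta_0}_2>C\sqrt{\ln(1/\zeta)/N}\bigr)\leq\zeta$ and propose to prove it by re-opening the DML analysis --- controlling the linearisation remainder at scale $\sqrt{\ln(1/\zeta)/N}$ and applying Hoeffding to the leading empirical average. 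That route is neither carried out nor available with the cited tools: Theorem 3.3 of Chernozhukov et al.\ (the basis of \Cref{thm:dml}, restated as Proposition~\ref{prop:thm3.3}) only provides a remainder $\rho_N$ that vanishes asymptotically, not an exponential finite-sample control, so ``showing the remainder is negligible at the scale $\sqrt{\ln(1/\zeta)/N}$'' is not a routine manipulation and is not established anywhere in the paper.

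The missing idea is that no finite-sample concentration is needed, because the corollary is an asymptotic ($O(\cdot)$) claim for each fixed $\zeta$. The paper argues as follows: fix $\zeta$; by the convergence in distribution of \Cref{thm:dml}, there is $K$ such that for all $N\geq K$ one has $\probP\bigl(\norm{\widehat\theta-\theta_0}>\epsilon\bigr)\leq 1-\Phi\bigl(\epsilon\sqrt{N}/\sigma\bigr)+\zeta/2$; now choose $\epsilon$ so that $\Phi\bigl(\epsilon\sqrt{N}/\sigma\bigr)=1-\zeta/2$, i.e.\ $\epsilon=\sigma\,\Phi^{-1}(1-\zeta/2)/\sqrt{N}$, which makes the total failure probability at most $\zeta$; finally bound the Gaussian quantile by $\Phi^{-1}(1-\zeta/2)\leq\sqrt{2\ln(2/\zeta)}$ (Blair's approximation of the inverse error function), which after the Lipschitz step yields $\norm{f_{\widehat\theta}-f_{\theta_0}}_2\leq\sqrt{2}\,L\sigma\sqrt{\ln(2/\zeta)/N}$ with probability $1-\zeta$. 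In other words, the $\sqrt{\ln(1/\zeta)}$ factor comes from a quantile bound applied to the \emph{limit law}, not from sub-Gaussian concentration of the estimator itself. If you replace your deferred bridging step with this quantile argument, your proof coincides with the paper's; as written, the crucial step is asserted rather than proved, and the finite-sample strengthening you sketch would require assumptions beyond those in \Cref{thm:dml}.
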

    
Here, we assume a local Lipschitz condition of $f_\theta$ w.r.t. $\theta$ around $\theta_0$: $\norm{f_\theta(x)-f_{\theta_0}(x)}_2\leq L\norm{\theta-\theta_0}_2$ for all $x\in\mathcal{X}$ and $\theta\in\Theta$ for some $L>0$. Since $\Theta$ is compact, we see that it is enough for this Lipschitz condition to hold locally in a neighbourhood of $\theta_0$.

\subsection{DML Identifiability Condition and Ill-posedness}\label{sec:ill-posedness}

In this section, we provide a discussion regarding the relationship between our identifiability condition, Condition~\ref{condition:dml} (d), and a more common notion of identifiability for CMR problems, the \textit{ill-posedness}~\citep{Chen2012}. To begin with, we define the ill-posedness of CMR problems.

\begin{definition}[Ill-posedness~\citep{Chen2012,Dikkala2020}]\label{def:ill-posed}
Given a CMR problem as in~\Cref{eq:cmr}, the \emph{ill-posedness} $\ill$ of the function space $\mathcal{F}$ is given by
\begin{align*}
    \ill=\sup_{f\in\mathcal{F}} \frac{\norm{f_0-f}_{2}}{\norm{\expectE[f_0(X)-f(X) \lvert C]}_{2}}.
\end{align*}
\end{definition}

 Intuitively, ill-posedness describes how well a small CMR error (the projected error under conditional expectation) implies a small $L_2$ error (root mean squared error) to $f_0$. For identifiability, it is usually assumed that the ill-posedness $\ill$ is bounded. Otherwise, even solving the CMRs with very small error does not guarantee a solution $\widehat{f}$ that is close to $f_0$. In our case, we demonstrate that the identification condition of a DML estimator actually implies bounded ill-posedness. Specifically, Condition~\ref{condition:dml} (d) 
implies that the ill-posedness is bounded, as shown by the following proposition.

\begin{proposition}\label{prop:ill-posed}
For all $\theta\in\Theta$, if there exists a constant $L>0$ such that $\norm{f_\theta(x)-f_{\theta_0}(x)}_2\leq L \norm{\theta-\theta_0}_2$ for all $x\in\mathcal{X}$ and $\theta\in\Theta$, then Condition~\ref{condition:dml} (d), which states 
\begin{align*}
2\norm{\expectE[\orthoM(\dataset;f_{\theta},(s_0,g_0))]}\geq \norm{J_0(\theta-\theta_0)}
\end{align*}
and the Jacobian matrix $J_0$ have singular values bounded between $c_0>0$ and $c_1>0$, implies the ill-posedness is bounded by $\ill\leq L/\sqrt{c_0}$.
\end{proposition}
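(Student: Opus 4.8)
The plan is to collapse the supremum defining $\ill$ into a ratio of two norms of $\theta-\theta_0$, bounding the numerator by the Lipschitz hypothesis and the denominator from below using the identification Condition~\ref{condition:dml}~(d). Since $f$ ranges over the parametric family $\mathcal{F}=\{f_\theta:\theta\in\Theta\}$ and $f_0=f_{\theta_0}$, I would first rewrite \Cref{def:ill-posed} as
\begin{align*}
\ill=\sup_{\theta\in\Theta}\frac{\norm{f_{\theta_0}-f_\theta}_2}{\norm{\expectE[f_{\theta_0}(X)-f_\theta(X)\lvert C]}_2},
\end{align*}
so that it suffices to bound each ratio uniformly over $\theta\neq\theta_0$.

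For the numerator, the hypothesis $\norm{f_\theta(x)-f_{\theta_0}(x)}_2\le L\norm{\theta-\theta_0}_2$ holds pointwise in $x$, so taking the $L_2(X)$-norm over $X$ preserves it and gives $\norm{f_{\theta_0}-f_\theta}_2\le L\norm{\theta-\theta_0}_2$. For the denominator, the key observation is that it is exactly the square root of the population Neyman score evaluated at the true nuisances. Indeed, the CMR in~\Cref{eq:cmr} gives $s_0(c)=\expectE[Y\lvert c]=\expectE[f_{\theta_0}(X)\lvert c]=g_0(f_{\theta_0},c)$, so that $s_0(c)-g_0(f_\theta,c)=\expectE[f_{\theta_0}(X)-f_\theta(X)\lvert c]$, and by the score function in~\Cref{eq:neyman_score},
\begin{align*}
\expectE[\orthoM(\dataset;f_\theta,(s_0,g_0))]=\expectE[(s_0(C)-g_0(f_\theta,C))^2]=\norm{\expectE[f_{\theta_0}(X)-f_\theta(X)\lvert C]}_2^2.
\end{align*}

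Hence the denominator equals $\sqrt{\expectE[\orthoM(\dataset;f_\theta,(s_0,g_0))]}$, and the whole argument reduces to a quadratic lower bound on this population score, namely $\expectE[\orthoM(\dataset;f_\theta,(s_0,g_0))]\ge c_0\norm{\theta-\theta_0}_2^2$. This is where Condition~\ref{condition:dml}~(d) enters. Setting $Q(\theta):=\expectE[\orthoM(\dataset;f_\theta,(s_0,g_0))]$, Condition~\ref{condition:dml}~(c) gives $Q(\theta_0)=0$ with a vanishing first-order term at $\theta_0$, while the Jacobian $J_0$ of Condition~\ref{condition:dml}~(d) plays the role of the curvature (Hessian) of $Q$ at $\theta_0$ and, by hypothesis, has smallest singular value at least $c_0$. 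Combining the twice Gateaux-differentiability of Condition~\ref{condition:dml}~(a) with the identification inequality $2\norm{\expectE[\orthoM(\dataset;f_\theta,(s_0,g_0))]}\ge\norm{J_0(\theta-\theta_0)}$ and $\norm{J_0 v}\ge c_0\norm{v}$ for all $v$, I would obtain $Q(\theta)\ge c_0\norm{\theta-\theta_0}_2^2$, i.e. $\norm{\expectE[f_{\theta_0}(X)-f_\theta(X)\lvert C]}_2\ge\sqrt{c_0}\,\norm{\theta-\theta_0}_2$. Feeding the numerator and denominator bounds into each ratio then yields $L\norm{\theta-\theta_0}_2/(\sqrt{c_0}\,\norm{\theta-\theta_0}_2)=L/\sqrt{c_0}$ uniformly in $\theta$, and taking the supremum gives $\ill\le L/\sqrt{c_0}$.

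The main obstacle is precisely this last step: converting Condition~\ref{condition:dml}~(d), which is phrased through the vector-valued DML moment and its Jacobian $J_0$, into the scalar quadratic-growth bound $Q(\theta)\ge c_0\norm{\theta-\theta_0}_2^2$ on the population objective. This requires correctly identifying $J_0$ with the curvature of $Q$ at $\theta_0$ (the Jacobian of the moment condition equals the Hessian of the squared-residual objective once the first-order term vanishes), tracking the constant bookkeeping arising from the factor $2$ in Condition~\ref{condition:dml}~(d) and in the squared loss, and ensuring the inequality holds uniformly over all of $\Theta$ rather than only locally near $\theta_0$, so that the supremum in the definition of $\ill$ is genuinely controlled. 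By comparison, the Lipschitz reduction of the numerator and the score-to-denominator identity are routine.
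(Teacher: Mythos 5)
Your skeleton coincides with the paper's proof in its first three moves: you rewrite $\ill$ as a supremum over $\theta\in\Theta$, bound the numerator by the Lipschitz hypothesis, and identify the squared denominator with the population score, $\norm{\expectE[f_{\theta_0}(X)-f_\theta(X)\lvert C]}_2^2=\expectE[\orthoM(\dataset;f_\theta,(s_0,g_0))]\eqqcolon Q(\theta)$. Those steps are exactly the paper's. The genuine gap is the step you yourself flag as the main obstacle: the quadratic-growth bound $Q(\theta)\gtrsim c_0\norm{\theta-\theta_0}_2^2$ is asserted (``I would obtain'') but never derived, and the route you sketch cannot deliver it. Since $Q(\theta)\geq 0$ is scalar, Condition~\ref{condition:dml}~(d) together with the singular-value bound yields only the \emph{linear} estimate $Q(\theta)\geq\tfrac{1}{2}\norm{J_0(\theta-\theta_0)}\geq\tfrac{c_0}{2}\norm{\theta-\theta_0}_2$, and your proposal to upgrade this by reading $J_0$ as the Hessian of $Q$ contradicts the definition used in the proposition itself: $J_0\coloneqq\partial_{\theta^\prime}\{\expectE[\orthoM(\dataset;f_{\theta^\prime},(s_0,g_0))]\}|_{\theta^\prime=\theta_0}$ is the \emph{first} derivative of the expected score. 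The identity ``Jacobian of the moment equals Hessian of the objective'' holds when the score is the gradient of a loss; here the score \emph{is} the squared-residual loss, so that identification is unavailable (read literally, the gradient of $Q$ at its minimiser $\theta_0$ even vanishes, which is why (d) must be treated as a formal hypothesis and manipulated as such, not re-derived from curvature).

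The device the paper actually uses is elementary and involves no Taylor expansion: on the unit ball $\norm{\theta-\theta_0}_2\leq 1$ one has $\norm{\theta-\theta_0}_2\geq\norm{\theta-\theta_0}_2^2$, so the linear bound already dominates the quadratic one there, giving $Q(\theta)\gtrsim c_0\norm{\theta-\theta_0}_2^2$ and hence $\norm{\expectE[f_{\theta_0}(X)-f_\theta(X)\lvert C]}_2\gtrsim\sqrt{c_0}\,\norm{\theta-\theta_0}_2\geq\sqrt{c_0}\,\norm{f_{\theta_0}-f_\theta}_2/L$, from which $\ill\leq L/\sqrt{c_0}$ follows by combining with your (correct) numerator bound. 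Your worry about uniformity over all of $\Theta$ is legitimate --- the paper's own derivation is explicitly restricted to $\norm{\theta-\theta_0}_2\leq 1$, and it is also loose by a factor of $2$ in the constants --- but that restriction is precisely how the paper closes the argument, and your write-up supplies no substitute for it. Without some such step (the unit-ball trick, or a diameter-dependent version using compactness of $\Theta$), your proof is incomplete at exactly the point where the conclusion is generated.
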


The proof of Proposition~\ref{prop:ill-posed} is deferred to~\Cref{appen:illposed}. This interesting finding explains why, in~\Cref{thm:dml}, there are no explicit assumptions about the ill-posedness of the problem. The only identifiability assumption for~\Cref{thm:dml} is the non-singularity of the Jacobian matrix $J_0$, which for DML-CMR is sufficient to ensure Condition~\ref{condition:dml} (d) holds. Therefore, by Proposition~\ref{prop:ill-posed}, this ensures a bounded ill-posedness of the problem, which in turn allows us to identify the solution $f_0$ with small error.
\section{Experimental Results}\label{sec:exp}

In this section, we empirically evaluate our DML-CMR estimator. We apply DML-CMR to two applications, IV regression and proximal causal learning, where details regarding these two problems are provided in~\Cref{appen:apply_cmrs}. In addition, we evaluate a computationally efficient version of DML-CMR, referred to as CE-DML-CMR, which does not apply $K$-fold cross-fitting. It trains $\widehat{s}$ and $\widehat{g}$ only once (instead of $K$ times) using the entire dataset, and can also be considered as an ablation study on $K$-fold cross-fitting. Without $K$-fold cross-fitting, it lacks the theoretical convergence rate guarantees but it still enjoys the partial debiasing effect~\citep{Mackey2018} from the Neyman orthogonal score and trades off computational complexity with bias. We found that CE-DML-CMR empirically performs as well as standard DML-CMR on low-dimensional datasets. We provide details and discussion regarding CE-DML-CMR in~\Cref{appen:cedml}.

Our evaluation considers both low- and high-dimensional datasets, as well as semi-synthetic real-world datasets. We ran each method 20 times and report the mean squared errors (MSE) between the estimators $\widehat{f}$ and $f_0$, where the median, 25th, and 75th percentiles are shown. 
The method employed by DML-CMR is identical to DML-IV~\citep{Shao2024} when solving the IV regression problem, and we include the results for IV regression from \citet{Shao2024} in this section. For PCL, the experimental results are new, and we implemented all algorithms using PyTorch~\citep{Paszke2019}. The full code is available on GitHub\footnote{\url{https://github.com/shaodaqian/DML-CMR}}. 

\subsection{IV regression}

For the IV regression task (details in~\Cref{appen:iv}), we compare our methods with leading modern IV regression methods Deep IV~\citep{Hartford2017DeepPrediction}, DeepGMM~\citep{Bennett2019DeepAnalysis}, KIV~\citep{Singh2019} and DFIV~\citep{Xu2020}.

We use DNN estimators for both stages with network architecture and hyperparameters provided in~\Cref{appen:networks}. Results of DML-CMR using tree-based estimators such as Random Forests and Gradient Boosting are provided in~\Cref{appen:tree-based}, where comparable performance to DNN-based DML-CMR is demonstrated. In addition, we provide a sensitivity analysis against hyperparameter changes in~\Cref{appen:sensitivity} and an evaluation of algorithms when the IV is weakly correlated with the treatment, representing  higher ill-posedness of the CMRs, in~\Cref{appen:weak_iv}.

\subsubsection{Ticket Demand Dataset}

\begin{figure*}[t]
\centering
\begin{subfigure}[c]
{0.6\textwidth}
\centering
\includegraphics[width=0.9\textwidth]{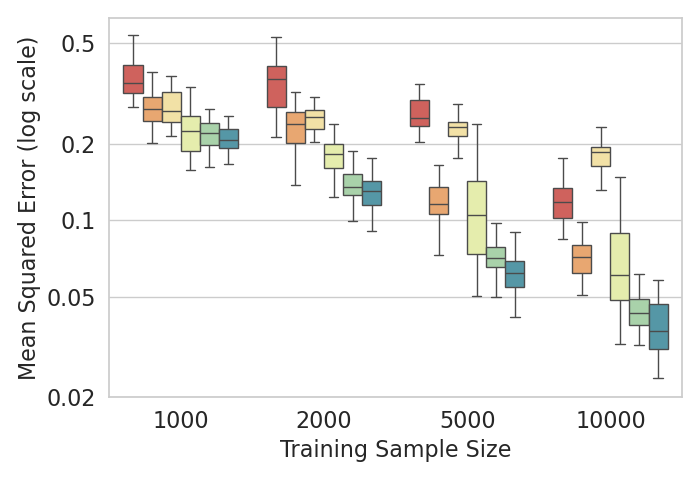}
\end{subfigure}
\begin{subfigure}[c]
{0.3\textwidth}
\centering
\includegraphics[width=0.6\textwidth]{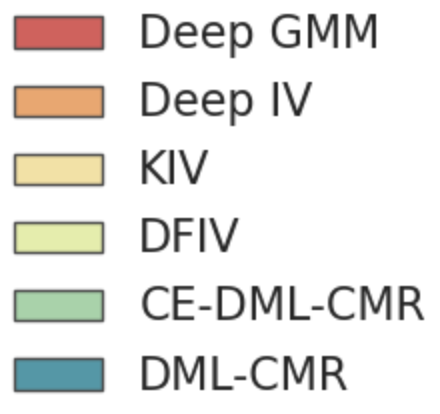}
\end{subfigure}
\caption{The mean squared error of $\widehat{f}$ on the ticket demand dataset with low-dimensional context for the IV regression task.}
\label{fig:lowd_mse}
\end{figure*}

We first conduct experiments for IV regression on the ticket demand dataset, which is a synthetic dataset introduced by~\citet{Hartford2017DeepPrediction} that is now a standard benchmark for nonlinear IV methods. In this dataset, we aim to understand how ticket prices $p$ affect ticket sales $r$. We observe two context variables, which are the time of year $t\in[0,10]$ and customer type $s\in[7]$ variables, the latter categorised by the level of price sensitivity. Price and context affect sales through $f_0((t,s),p)=100+(10+p)\cdot s \cdot \psi(t)-2p$, where $\psi(t)$ is a complex nonlinear function. However, the noise of $r$ and $p$ is correlated, which indicates the existence of unobserved confounders. The fuel price $z$ is introduced as an instrumental variable. Details of this dataset are included in~\Cref{appen:demand}.

The results for learning $f_0$ with this dataset of various sizes are provided in~\Cref{fig:lowd_mse}. It can be seen that DML-CMR performs better than other IV regression methods for all dataset sizes. CE-DML-CMR, which requires significantly less computation, matches the performance of DML-CMR in this case.

\subsubsection{High-Dimensional Dataset} 

\begin{figure*}[t]
\centering
\begin{subfigure}[c]
{0.6\textwidth}
\centering
\includegraphics[width=0.9\textwidth]{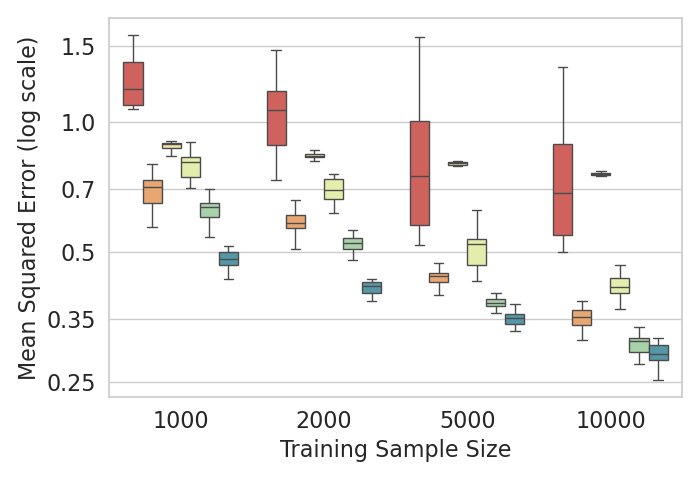}
\end{subfigure}
\begin{subfigure}[c]
{0.3\textwidth}
\centering
\includegraphics[width=0.6\textwidth]{figures/legend_iv.png}
\end{subfigure}
\caption{The mean squared error of $\widehat{f}$ on the ticket demand dataset with high-dimensional context for the IV regression task.}
\label{fig:mnist_mse}
\end{figure*}

In real applications, we
typically do not observe variables such as the customer type as explicit categories. Therefore, we follow~\citet{Hartford2017DeepPrediction} and consider the case where the customer type $s\in[7]$ is replaced by images of the corresponding handwritten digits from the MNIST dataset~\citep{LeCun2010} to evaluate our methods with high-dimensional ($28^2$=784 dimensions) inputs. The task remains to learn $f_0$, but the algorithms are no longer explicitly given the 7 customer types, and instead have to infer the relationship between the image data and the outcome. Results for IV regression are plotted in~\Cref{fig:mnist_mse}, where DML-CMR and CE-DML-CMR outperform all other methods. In these high-dimensional settings, regularisation is heavily used to avoid overfitting. DML-CMR demonstrates the benefits of using DML to reduce both the regularisation and overfitting bias caused by learning the nuisance parameters.

\subsubsection{Real-World Datsets}

Lastly, we test the performance of DML-CMR on real-world datasets. The true counterfactual prediction function is rarely available for real-world data. 
Therefore, in line with previous approaches~\citep{Shalit2017,Wu2023,Schwab2019,Bica2020}, we instead consider two semi-synthetic real-world datasets IHDP\footnote{IHDP: \url{https://www.fredjo.com/}.}~\citep{Hill2011} and PM-CMR\footnote{PM-CMR: \url{https://doi.org/10.23719/1506014}.}~\citep{Wyatt2020}. We directly use the continuous variables from IHDP and PM-CMR as context variables, and generate the outcome variable with a nonlinear synthetic function following~\citet{Wu2023}. There are 470 and 1350 training samples in IHDP and PM-CMR, respectively (for details see~\Cref{appen:real}). As shown in~\Cref{fig:real_dataset}, DML-CMR and CE-DML-CMR demonstrate comparable, if not lower, MSE of fitting $\widehat{f}$ than the other methods. This shows that our algorithm is reliable when dealing with real-world data.

\begin{figure}[t]
\centering
\begin{subfigure}[c]{0.4\textwidth}
\centering
\includegraphics[width=1\textwidth]{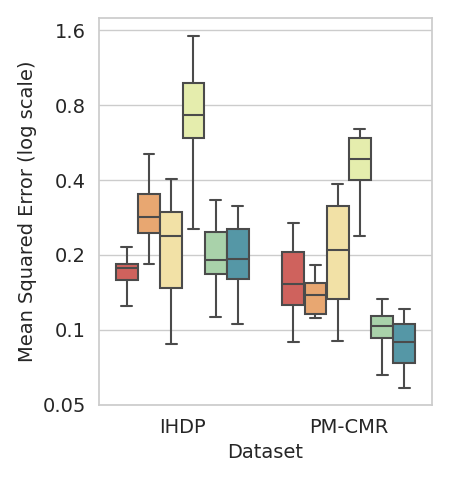}
\end{subfigure}
\begin{subfigure}[c]
{0.3\textwidth}
\centering
\includegraphics[width=0.6\textwidth]{figures/legend_iv.png}
\end{subfigure}
\caption{The mean squared error of $\widehat{f}$ on the real-world datasets IHDP and PM-CMR for the IV regression task.}
\label{fig:real_dataset}
\end{figure}

\subsection{Proximal Causal Learning}
For the PCL task (details in~\Cref{appen:pcl}), we compare our methods with PCL methods CEVAE~\citep{Im2021}, PMMR~\citep{Mastouri2021}, KPV~\citep{Mastouri2021}, DFPV~\citep{Xu2021}, NMMR U~\citep{Kompa2022}, NMMR V~\citep{Kompa2022} and PKDR~\citep{Wu2024}. We also use DNN estimators for both stages, with network architecture and hyperparameters provided in~\Cref{appen:networks}.

\subsubsection{Ticket Demand Dataset}

\begin{figure*}[t]
\centering
\begin{subfigure}[c]
{0.65\textwidth}
\centering
\includegraphics[width=0.9\textwidth]{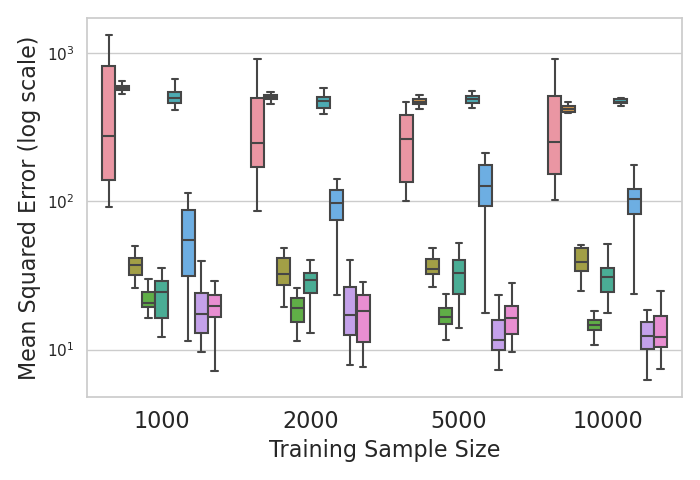}
\end{subfigure}
\begin{subfigure}[c]
{0.3\textwidth}
\centering
\includegraphics[width=0.6\textwidth]{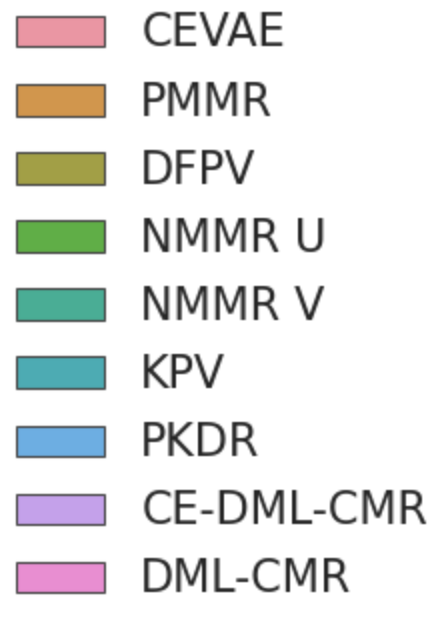}
\end{subfigure}
\caption{The mean squared error of $\widehat{f}$ on the ticket demand dataset for the PCL task.}
\label{fig:pcl_demand}
\end{figure*}

Similarly to IV regression, we start with the ticket demand dataset~\citep{Hartford2017DeepPrediction}, which has been adapted to the PCL setting~\citep{Xu2021}. We aim to understand how ticket prices affect ticket sales and learn the causal function $f_0$. The hidden confounder in this case is the varying demand $U$, while the cost of fuel $V$ is the treatment proxy, which directly impacts the ticket price, and the number of views on the airline's reservation website $W$ is the outcome proxy. Details of this dataset are included in~\Cref{appen:pcl_demand}.

The results for learning $f_0$ with this dataset of various sizes are provided in~\Cref{fig:pcl_demand}. It can be seen that DML-CMR and CE-DML-CMR achieved state-of-the-art performance with very similar performance to each other. NMMR U can match DML-CMR at smaller dataset sizes, but DML-CMR achieves lower MSE at 5000 and 7500 sample sizes. The performance gap between CE-DML-CMR and DML-CMR is small in this case, as expected, because the variables are low-dimensional. 

\subsubsection{High-dimensional dSprites Dataset}

\begin{figure*}[t]
\centering
\begin{subfigure}[c]
{0.65\textwidth}
\centering
\includegraphics[width=0.9\textwidth]{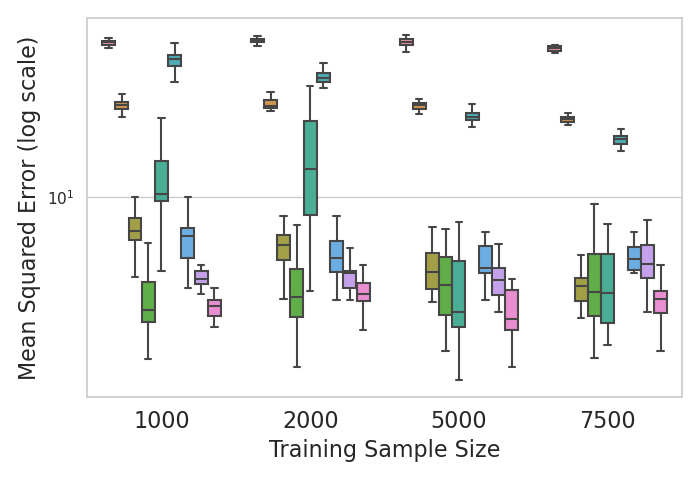}
\end{subfigure}
\begin{subfigure}[c]
{0.3\textwidth}
\centering
\includegraphics[width=0.6\textwidth]{figures/pcl/legend_pcl.png}
\end{subfigure}
\caption{The mean squared error of $\widehat{f}$ on the dSprites dataset with high dimensional treatment for the PCL task.}
\label{fig:pcl_dsprite}
\end{figure*}

For a high-dimensional dataset, we adopt the dSprites dataset~\citep{dsprites17} for PCL, first introduced by~\citet{Xu2021}. dSprites is an image (64 $\times$ 64) dataset where each image is described by five parameters: \textit{shape}, \textit{scale}, \textit{rotation}, \textit{posX} and \textit{poxY}. The treatments are these high-dimensional dSprites images, the hidden confounder is posY, the proxies are noisy observations of scale, rotation, and posX, and the outcome is defined by a nonlinear causal function. Details of this dataset are provided in~\Cref{appen:pcl_highdim}.

The results are presented in~\Cref{fig:pcl_dsprite}. DML-CMR achieved similar performance to the state-of-the-art methods while outperforming CE-DML-CMR. In addition, a lower variance can be observed when using DML-CMR compared to other methods, especially for smaller data sizes. The NMMR methods in some cases outperform CE-DML-CMR. This is in line with our previous observations that, without k-fold validation, performance can be worse for high-dimensional datasets and the full debiasing effect of the DML framework is required to achieve the best results.
\section{Conclusion}\label{sec:conclusion}

We have proposed a novel estimator for solving CMRs, DML-CMR. Using the DML framework and our novel Neyman orthogonal score, DML-CMR can effectively estimate solutions to CMR problems with fast convergence rate guarantees by mitigating the regularisation and overfitting biases in two-stage estimations. We theoretically analysed DML-CMR and proved a convergence rate of $O(N^{-1/2})$ with high probability under mild regularity and parametric assumptions. We also demonstrated interesting connections between the notion of ill-posedness for CMRs and DML's identifiability condition. We applied DML-CMR to problems in causal inference such as IV regression and proximal causal learning, and evaluated it on corresponding benchmarks, including semi-synthetic real-world data. Our experiments demonstrated that DML-CMR achieves state-of-the-art performance against similar algorithms that are developed specifically for the IV regression and PCL problems, as well as general CMR solvers, with lower estimation error and better stability. 

Future work includes considering other estimation methods for the nuisance parameters following our Neyman orthogonal score, and theoretically analysing our Neyman orthogonal score for estimating nonparametric functions of interest following~\citep{Foster2019OrthogonalLearning}.

\section*{Acknowledgments and Disclosure of Funding}
This work was supported by the EPSRC Prosperity Partnership FAIR (grant number EP/V056883/1). DS acknowledges funding from the Turing Institute and Accenture collaboration. AS was partially supported by the National Research Foundation Singapore and DSO National Laboratories under the AI Singapore Programme AISG Award No: AISG2-RP-2020-018, and by the Office of Naval Research (ONR) grant N00014-24 -1-2470.  MK and FQ acknowledge funding from ELSA: European Lighthouse on Secure and Safe AI project (grant agreement
No. 101070617 under UK guarantee). MK receives funding from the ERC under the European Union’s Horizon 2020 research and innovation programme (\href{http://www.fun2model.org}{FUN2MODEL}, grant agreement No.~834115).

\newpage

\appendix

\section{Computationally Efficient CE-DML-CMR}\label{appen:cedml}
\begin{algorithm}[ht]
   \caption{Computationally Efficient CE-DML-CMR}
   \label{alg:ce-dml}
\begin{algorithmic}
   \STATE {\bfseries Input:} Dataset $\dataset$ with size $N$, mini-batch size $n_b$
   \STATE Learn $\widehat{s}$ and $\widehat{g}$ using $\dataset$
   \STATE Initialise $f_{\widehat{\theta}}$
   \REPEAT
   \STATE Sample $n_b$ data $c_i$ from $\dataset$
   \STATE $\mathcal{L}=\widehat{\expectE}_{c_i}\left[(\widehat{s}(c)-\widehat{g}(f_\theta,c))^2\right]$
   \STATE Update $\widehat{\theta}$ to minimise loss $\mathcal{L}$ 
    \UNTIL{convergence}
    \STATE {\bfseries Output:} The CE-DML-CMR estimator $f_{\widehat{\theta}}$
\end{algorithmic}
\end{algorithm}

The standard DML-CMR with $K$-fold cross-fitting trains $\widehat{s}$ and $\widehat{g}$ $K$ times on different subsets of the dataset to tackle overfitting bias, but is computationally expensive. Therefore, as mentioned in~\Cref{sec:exp}, we also evaluate CE-DML-CMR, a computationally efficient version of DML-CMR that does not apply $K$-fold cross-fitting and trains $\widehat{s}$ and $\widehat{g}$ only once using the entire dataset. It uses the same Neyman orthogonal score as the standard DML-CMR, so it still enjoys the partial debiasing effect~\citep{Mackey2018} from the Neyman orthogonal score. However, without $K$-fold cross-fitting, it lacks the theoretical convergence rate guarantees provided by~\Cref{thm:dml}. CE-DML-CMR can be viewed as a trade-off between computational complexity and theoretical guarantees, and we found that CE-DML-CMR empirically performs as well as standard DML-CMR on low-dimensional datasets, where overfitting bias is not prevalent.

\section{The Score Function for Standard Two-Stage CMR Estimators}\label{appen:score}

In this section, we show that the learning objective, or score function, for standard two-stage CMR estimators~\citep{Angrist1996,Hartford2017DeepPrediction,Singh2019} is not Neyman orthogonal and thus cannot be used to create a DML estimator for the CMR problem.

\begin{repeatprop}{prop:standard_score}
The score (or objective) function for standard two-stage CMR estimators
$\ell=(Y-\widehat{g}(f,c))^2$
is not Neyman orthogonal at $(f_0, g_0)$.
\end{repeatprop}

\begin{proof}
The score $\ell=(Y-\widehat{g}(f,c))^2$ is not Neyman orthogonal because, first of all, $\expectE[(Y-g_0(f_0,c))^2]=\expectE[(Y-\expectE[Y\lvert C])^2]\neq0$ since $\expectE[f_0(X)\lvert C]=\expectE[Y\lvert C]$ and $Y-\expectE[Y\lvert C]\neq0$ due to the noise on $Y$. This violates the basic condition for a Neyman orthogonal score that the score function equals zero with the true functions $f_0$ and $g_0$.

Secondly, the Gateaux derivative against small changes in $g$ for score $\expectE[(Y-g_0(f_0,c))^2]$ at $(f_0, g_0)$ is
\begin{align*}
\frac{\partial}{\partial r}\expectE\Bigl[&(Y-g_0(f_0,C)-r\cdot g(f_0,C))^2\Bigr]\\
=&\frac{\partial}{\partial r}\expectE\Bigl[(Y-g_0(f_0,C))^2-2r\cdot(Y-g_0(f_0,C))g(f_0,C)+r^2\cdot g(f_0,C)^2\Bigr]\\
=&\expectE\Bigl[2(Y-g_0(f_0,C))g(f_0,C)+2r\cdot g(f_0,C)^2
\Bigr],
\end{align*}
and, when $r=0$, this derivative evaluates to
\begin{equation*}
\expectE[2(Y-g_0(f_0,c))g(f_0,c)]=\expectE[2
(Y-\expectE[Y\lvert C])g(f_0,c)]
\end{equation*}
which does not equal to 0 for general $g\in\mathcal{G}$ since generally $g(f_0,c)$ and the residual $(Y-\expectE[Y\lvert C])$ are correlated. Therefore, this standard score function for two-stage CMR estimation is not Neyman orthogonal at $(f_0, g_0)$.
\end{proof}

\section{Proofs}
In this section, we restate all the conditions required to prove the $N^{-1/2}$ convergence rate guarantees for the DML-CMR estimator, and provide the omitted proofs in the main paper for \Cref{thm:neyman}, Lemma~\ref{lemma:nuisances}, \Cref{thm:dml} and Corollary~\ref{coro:function_convergence}.

\subsection{DML-CMR $N^{-1/2}$ Convergence Rate Guarantees}\label{appen:dml}

To obtain $N^{-1/2}$ convergence rate guarantees of the DML-CMR estimator, the following conditions must be satisfied.

\noindent\textbf{Condition~\ref{condition:dml} [Conditions for $N^{-1/2}$ convergence of DML, Assumption 3.3 and 3.4 in~\citet{Chernozhukov2018Double/debiasedParameters}]}

For sample size $N\geq3$:
\begin{enumerate}[label=(\alph*)]
\item The map $(\theta,(s,g))\mapsto \expectE[\orthoM(\dataset;f_{\theta},(s,g))]$ is twice continuously Gateaux-differentiable.
\item  The score $\orthoM$ obeys the Neyman orthogonality conditions.
\item The true parameter $\theta_0$ obeys $\expectE[\orthoM(\dataset;f_{\theta_0},(s_0,g_0))]=0$ and $\Theta$ contains a ball of radius $c_1 N^{-1/2}\log N$ centered at $\theta_0$. 
\item For all $\theta\in\Theta$, the identification relationship
\begin{align*}
2\norm{\expectE[\orthoM(\dataset;f_{\theta},(s_0,g_0))]}\gtrsim \norm{J_0(\theta-\theta_0)}
\end{align*}
is satisfied, where $J_0\coloneqq\partial_{\theta^\prime}\{\expectE[\orthoM(\dataset;f_{\theta^\prime},(s_0,g_0))]\}|_{\theta^\prime=\theta_0}$ is the Jacobian matrix, with singular values bounded between $c_0>0$ and $c_1>0$.
\item  All eigenvalues of the matrix $\expectE[\orthoM(\dataset;f_{\theta_0},(s_0,g_0))\orthoM(\dataset;f_{\theta_0},(s_0,g_0))^T]$ are strictly positive (bounded away from zero).
\item Let $K$ be a fixed integer. Given a random partition $\{I_k\}_{k=1}^K$ of indices $[N]$, each of size $n=N/K$, the nuisance parameter estimator $\widehat{s}_k$ and $\widehat{g}_k$ learnt using data with indices $I^c_k$ belongs to shrinking realisation sets $\mathcal{S}_N$ and $\mathcal{G}_N$, respectively, and the nuisance parameters should be estimated at the $o(N^{-1/4})$ rate, e.g., $\norm{\widehat{s}-s_{0}}_2=o(N^{-1/4})$.
\end{enumerate}

To formalise the convergence rate guarantees in relationship to the technical conditions, we have the following proposition as a direct result of Theorem 3.3 of~\citet{Chernozhukov2018Double/debiasedParameters}.

\begin{proposition}[Theorem 3.3 of~\citet{Chernozhukov2018Double/debiasedParameters}
]\label{prop:thm3.3}
If all conditions in~\Cref{condition:dml} hold, then the DML estimator $\widehat{\theta}$ as defined in~\Cref{defn:dml} is concentrated in a $1/\sqrt{N}$ neighbourhood of $\theta_0$, and is approximately linear and centered Gaussian:
\begin{align*}
    \frac{\sqrt{N}}{\sigma}(\widehat{\theta}-\theta_0)=\frac{1}{\sqrt{N}}\sum{\bar{\orthoM}(\dataset_i)+O(\rho_N)}\rightarrow \mathcal{N}(0,1) \text{ in distribution},
\end{align*}
where $\bar{\orthoM}(\cdot)\coloneqq -\sigma^{-1}J_0^{-1}\orthoM(\cdot,\theta_0,\eta_0)$ is the influence function, $J_0$ is the Jacobian of $\orthoM$, the approximate variance is $\sigma^2 \coloneqq J_0^{-1}\expectE[\orthoM(\dataset,\theta_0,\eta_0)\orthoM(\dataset,\theta_0,\eta_0)^T](J_0^{-1})^T$, and the size of the remainder $\rho_N$ converges to 0.
\end{proposition}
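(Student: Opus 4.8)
The plan is to derive this proposition as an immediate corollary of Theorem~3.3 of \citet{Chernozhukov2018Double/debiasedParameters}, rather than to reprove it from scratch. The key observation is that \Cref{condition:dml} has been stated precisely so as to coincide, item by item, with their Assumptions~3.3 and~3.4 for nonlinear moment-condition models, and that the cross-fitting estimator $\widehat{\theta}$ of \Cref{defn:dml}, which solves $\frac{1}{K}\sum_{k}\widehat{\expectE}_k[\orthoM(\dataset_{I_k};\widehat{\theta},\widehat{\eta}_k)]=0$, is exactly the pooled (DML2) estimator they analyse. Consequently the entire content of the proof is a correspondence check between our hypotheses and theirs; once that is in place, the asymptotic-linearity expansion, the influence-function form $\bar{\orthoM}(\cdot)=-\sigma^{-1}J_0^{-1}\orthoM(\cdot,\theta_0,\eta_0)$, the variance $\sigma^2$, and the centred Gaussian limit all transfer verbatim.

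I would therefore proceed by matching each part of \Cref{condition:dml} to its counterpart in their framework: (a) is the twice-Gateaux-differentiability/smoothness requirement on the moment map; (b) is Neyman orthogonality; (c) supplies the population identification point and the interior-ball condition on $\Theta$; (d) is their global identifiability together with a well-conditioned Jacobian $J_0$; (e) is the non-degeneracy of the score covariance guaranteeing $\sigma^2$ is positive definite; and (f) is the nuisance-quality condition requiring estimation at the $o(N^{-1/4})$ rate over shrinking realisation sets. It is important that we invoke the \emph{nonlinear} version of their result (Theorem~3.3 rather than the linear-score Theorems~3.1--3.2), since the estimating equation induced by our squared score $\orthoM(\dataset;f,(s,g))=(s(c)-g(f,c))^2$ is nonlinear in $\theta$; this is exactly the regime Theorem~3.3 is designed for.

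For context, the mechanism inside their proof --- which I would only sketch, not reproduce --- linearises the empirical moment equation solved by $\widehat{\theta}$ around $(\theta_0,\eta_0)$. Neyman orthogonality (b) annihilates the first-order term in the nuisance direction, so the nuisance error contributes only at second order; cross-fitting decouples $\widehat{\eta}_k$ from the evaluation fold $\mathcal{D}_{I_k}$, so the $o(N^{-1/4})$ rate of (f) drives this second-order remainder to $o_P(N^{-1/2})$. Inverting the well-conditioned Jacobian of (d) then yields $\widehat{\theta}-\theta_0 = -J_0^{-1}\frac{1}{N}\sum_i \orthoM(\dataset_i;f_{\theta_0},\eta_0)+o_P(N^{-1/2})$, and a central limit theorem applied to the leading influence-function term, non-degenerate by (e), gives the stated normal limit. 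The main obstacle is not any new probabilistic estimate but this correspondence bookkeeping: ensuring the conventions align exactly --- in particular that the approximate-solution caveat noted after \Cref{defn:dml} feeds only into the remainder $O(\rho_N)$ and not into the first-order expansion, and that our norm and rate conventions match theirs so that their conclusion applies without modification.
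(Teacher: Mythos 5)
Your proposal is correct and matches the paper's own proof, which likewise treats the proposition as a direct consequence of Theorem~3.3 of \citet{Chernozhukov2018Double/debiasedParameters}, noting that their Assumptions~3.3 and~3.4 are exactly the items of \Cref{condition:dml}. Your additional condition-by-condition correspondence and the sketch of the internal linearisation mechanism go beyond what the paper writes, but the approach is essentially identical.
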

\begin{proof}
This is a direct consequence of Theorem 3.3 from~\citet{Chernozhukov2018Double/debiasedParameters}, which states the convergence properties of the DML estimator for nonlinear score functions. Assumptions 3.3 and 3.4 in~\citet{Chernozhukov2018Double/debiasedParameters} required for Theorem 3.3 to hold are contained in~\Cref{condition:dml}. 
\end{proof}

We will show that all of these conditions are satisfied in the proof of~\Cref{thm:dml}. To begin with, we prove~\Cref{thm:neyman}, which shows that our score function $\orthoM$ is Neyman orthogonal.

\begin{repeatthm}{thm:neyman}[Neyman orthogonality]
The score function $\orthoM(\dataset;f,(s,g))=(s(c)-g(f,c))^2$ obeys the Neyman orthogonality conditions at $(f_0,(s_0,g_0))$.
\end{repeatthm}
\begin{proof}\label{appen:neyman}
Firstly, by~\Cref{eq:h_exp}, we have $s_0(C)=g_0(f_0,C)$, thus
\begin{equation*}
\orthoM(\dataset;f_0,(s_0,g_0))=\expectE\Bigl[(s_0(C)-g_0(f_0,C))^2\Bigr]=0
\end{equation*}
Then we compute the derivative w.r.t. small changes in the nuisance parameters. For all $s,g\in \mathcal{S}, \mathcal{G}$,
\begin{align*}
\frac{\partial}{\partial r}&\expectE\Bigl[(s_0(C)+r\cdot s(C)-g_0(f_0,C)-r\cdot g(f_0,C))^2\Bigr]\\
=&\frac{\partial}{\partial r}\expectE\Bigl[2r(s_0(C)-g_0(f_0,C))(s(C)-g(f_0,C))+r^2(s(C)-g(f_0,C))^2\Bigr]\\
=&\expectE\Bigl[2(s_0(C)-g_0(f_0,C))(s(C)-g(f_0,C))+2r(s(C)-g(f_0,C))^2\Bigr],
\end{align*}
and, when at $r=0$, the derivative evaluates to
\begin{align*}
    \expectE\Bigl[2(s_0(C)-g_0(f_0,C))(s(C)-g(f_0,C))\Bigr] = \expectE\Bigl[0 \times(s(C)-g(f_0,C))\Bigr] = 0 \quad \forall s,g\in \mathcal{S}, \mathcal{G},
\end{align*}
since $s_0(C)=\expectE[Y\lvert C]=\expectE[f_0(X)\lvert C]=g_0(f_0,C)$. Therefore, our moment function $\orthoM$ is Neyman orthogonal at $(f_0,(s_0,g_0))$.
\end{proof}

In turn, we state and prove the convergence of the nuisance parameters $\widehat{s}$ and $\widehat{g}$ with the machinery developed for the analysis of the excess risk in constrained ERM in \Cref{appen:critical_radius}.

\begin{repeatlemma}{lemma:nuisances}[Convergence of nuisance parameters]
    Under Assumption~\ref{assump:dml}, let $\mathcal{S}^*_N$ be the star-hull of the realisation set $\mathcal{S}_N$ of function class $\mathcal{S}$,
    \[
    \mathcal{S}^*_N = \mleft\{ C \mapsto \gamma(s(C) - s_0(C)) \;:\; s \in \mathcal{S}_N, \gamma \in [0, 1] \mright\},
    \]
    $\mathcal{P}^*_N$ be the star-hull of the realisation set $\mathcal{P}_N$ of the function class $\mathcal{P}$,
    \[
    \mathcal{P}^*_N = \mleft\{ C \mapsto \gamma(F(\cdot|C) - F_0(\cdot|C)) \;:\; F \in \mathcal{P}_N, \gamma \in [0, 1] \mright\},
    \]
    and $\mathcal{G}^*_N$ be the star-hull of the realisation set $\mathcal{G}_N$ of the function class $\mathcal{G}$,
    \[
    \mathcal{G}^*_N = \mleft\{C, f \mapsto\gamma(g(C,f)-g_0(C,f))\;:\; g\in\mathcal{G}_N,  \gamma\in[0,1]\mright\},
    \]
     where $\mathcal{S}_N$, $\mathcal{P}_N$ and $\mathcal{G}_N$ are properly shrinking neighbourhoods of the true functions $s_0$, $F_0$ and  $g_0$. Then, there exist universal constants $c_1$ and $c_2$, for which we have that with probability at least $1 - \xi$, the estimation errors are bounded as
    \[
    \norm{\widehat{s}-s_0}_2^2 & \leq c_1 \mleft( \delta_N(\mathcal{S}_N^*)^2+\sqrt{\frac{\log(1/\zeta)}{N}} 
    \mright) ; \\
     \norm{\widehat{g} - g_0}_2^2 & \leq c_2 \mleft( \delta_N(\mathcal{P}^*_N)^2+\sqrt{\frac{\log(1/\zeta)}{N}} \mright).
    \]
\end{repeatlemma}
\begin{proof}
    The bound on estimation error for $\widehat{s}$ is straightforward and follows directly by \Cref{theorem:erm_excess}. In order to prove that, we only need to show that the choice of loss function for the ERM estimator $\widehat{s}$ is Lipschitz in its first argument. To this end, we formulate the ERM as
    \[
    \widehat{s} = \argmin_{s \in \mathcal{S}} \mathcal{L}_N (s), \quad \textup{where } \mathcal{L}_N(s) = \frac{1}{N} \sum_{i=1}^N \ell(s; y_i, c_i) \textup{ and } \ell(s; y, c) = \mleft(y - s(c)\mright)^2. 
    \]
    Thus, we have,
    \[
    \mleft| \ell(s_2; y, c_2) - \ell(s_1; y, c_1) \mright| & =  \mleft| \mleft(y - s_2(c_2)\mright)^2 - \mleft(y - s_1(c_1)\mright)^2 \mright| \\
    & = \mleft| \mleft(2 y + s_2(c_2) + s_1(c_1) \mright) \mleft( s_2(c_2) - s_1(c_1) \mright) \mright| \\
    & \leq 4 B \mleft| s_2(c_2) - s_1(c_1) \mright| \numberthis{eq:erm_l2},
    \]
    where we used the fact that $y, \| s \|_\infty \leq B$ by Assumption~\ref{assump:dml}. Thus, $\ell(s; y,c)$ is $4B$-Lipschitz in its argument and therefore, by \Cref{theorem:erm_excess}, for a universal constant $c_1$,
    \[
    \norm{\widehat{s}-s_0}_2^2 & \leq c_1 \mleft( \delta_N(\mathcal{S}_{N}^*)^2+\sqrt{\frac{\log(1/\zeta)}{N}} 
    \mright).
    \]
    For the estimation error of $\widehat{g}$, we recall that for any $F(\cdot|C) \in \mathcal{P}$, 
    \[g_{F}(f_\theta, c) \coloneqq \int f_\theta(x) F(x | C = c) dx.
    \]
    Thus, we can connect the estimation error of $\widehat{g}_{\widehat{F}}$ to the estimation error of the conditional density estimator $F$ by showing 
    \[
    \| \widehat{g} - g_0 \|_2 & = \| \widehat{g}_{\widehat{F}} - g_{F_0} \|_2 \\
    &\leq B \| \widehat{F} - F_0 \|_2 \numberthis{eq:est_P_est_G}
    \]
    To prove \eqref{eq:est_P_est_G}, we observe that for any $C$ and any test function $f\in\mathcal{F}$, we have that,
    \[
    \mleft| g_{F_1} (C, f) - g_{F_2} (C, f) \mright| & = \mleft| \int f(x) \mleft[ F_1 - F_2 \mright] (dx | C) \mright| \\
    & \leq \int \mleft| f(x) \mright|  \mleft| F_1 - F_2 \mright| (dx | C) \\
    & \leq B \int \mleft| F_1 - F_2 \mright| (dx | C), \numberthis{eq:temp_bound_on_g}
    \]
    since $\| f\|_\infty \leq B$ by Assumption~\ref{assump:dml}. Thus, by integrating w.r.t. joint law of $(C, f)$, and \eqref{eq:temp_bound_on_g}, we can show that \eqref{eq:est_P_est_G} holds since
    \[
    \| g_{F_1} - g_{F_2} \|_2^2 & = \mathbb{E}_{C} \mleft[ \mleft(g_{F_1}(C, f) - g_{F_2}(C, f) \mright)^2 \mright] \\
    & \leq B^2 \mathbb{E}_{C} \mleft[   \mleft(\int \mleft| F_1 - F_2 \mright| (dx | C) \mright)^2  \mright] \\
    & \leq B^2 \mathbb{E}_{C} \mleft[   \int \mleft[ F_1 - F_2 \mright]^2  (dx | C) \mright] \numberthis{eq:temp_bound_on_g_2}
    \\
    & = B^2 \| F_1 - F_2 \|_2^2,
    \]
    for any $F_1,F_2\in\mathcal{P}$,
    where in \eqref{eq:temp_bound_on_g_2}, we used Cauchy–Schwarz in $dx$, i.e.,
    \[
    \mleft( \int \mleft| h \mright| \mright)^2 \leq \int h^2.
    \]
    Having \eqref{eq:est_P_est_G} at hand, it suffices to prove the upper bound on the estimation error of $\widehat{F}$. To this end, we observe that Squared CDF error,
    \[
    \ell(F; x, c) = [\mathbbm{1}_{\{X \leq x \;|\;C = c\} } - \mathscr{F}(x|c) ]^2,
    \]
    is Lipschitz by a similar argument to that for \eqref{eq:erm_l2}, where $\mathscr{F}(x|c)$ is the conditional CDF induced by the conditional density $F$. It is not hard to observe that \emph{clipped} versions of other losses for density estimation such as integrated squared error (ISE), negative log-likelihood, and Hellinger-squared also satisfy the Lipschitz condition in the first argument. Thus, by \Cref{theorem:erm_excess}, for a universal constant $c_3$,
    \[
    \norm{\widehat{F}-F_0}_2^2 & \leq c_3 \mleft( \delta_N(\mathcal{P}_{N^*})^2+\sqrt{\frac{\log(1/\zeta)}{N}} 
    \mright).
    \]
    Choosing $c_2 = B^2 c_3$ and \eqref{eq:est_P_est_G} completes the proof.
\end{proof}

Now, we are ready to prove~\Cref{thm:dml}, which is the main theorem that states the $N^{-1/2}$ convergence rate guarantees for our DML estimator.

\begin{repeatthm}{thm:dml}[Convergence of the DML estimator for CMRs]
Let $f_{\theta_0}\in\mathcal{F}$ be a solution that satisfies the CMRs in~\Cref{eq:cmr}, let $\orthoM$ be the Neyman orthogonal score defined in~\Cref{eq:neyman_score} and let $J_0\coloneqq\partial_{\theta^\prime}\{\expectE[\orthoM(\dataset;f_{\theta^\prime},(s_0,g_0))]\}|_{\theta^\prime=\theta_0}$ be the Jacobian matrix of $\expectE[\orthoM]$ w.r.t. $\theta$. Suppose that the upper bound of the critical radius $\delta_N=o(N^{-1/4})$, for $\widehat{s}$, $\widehat{g}$, and $J_0$ has bounded singular values. Then, if Assumption~\ref{assump:parameter} and \ref{assump:dml} hold, our DML estimator $f_{\widehat{\theta}}$ satisfies that $\widehat{\theta}$ is concentrated in a $N^{-1/2}$ neighbourhood of $\theta_0$, and is approximately linear and centred Gaussian:
\begin{align*}
    \sqrt{N}(\widehat{\theta}-\theta_0)\rightarrow \mathcal{N}(0,\sigma^2) \text{ in distribution},
\end{align*}
where the estimator variance is given by
\begin{equation*}
\sigma^2 \coloneqq J_0^{-1}\expectE[\orthoM(\dataset,\theta_0,(s_0,g_0))\orthoM(\dataset,\theta_0,(s_0,g_0))^T](J_0^{-1})^T,
\end{equation*}
which is constant w.r.t. $N$.
\end{repeatthm}

\begin{proof}\label{appen:dml_guarantee}
Following Proposition~\ref{prop:thm3.3}, we need to check whether, under Assumption~\ref{assump:dml}, all of Condition~\ref{condition:dml} for DML $N^{-1/2}$ convergence rate is satisfied. Condition (a) is satisfied since $(s-g)^2$ is twice continuously differentiable with respect to $s$ and $g$. Condition (b) is satisfied by Theorem~\ref{thm:neyman}. Condition (c) is satisfied since $f_{\theta_0}$ satisfies the CMRs and, from Theorem~\ref{thm:neyman}, we have that $\expectE[\orthoM(\dataset;f_{\theta_0},(s_0,g_0))]=0$. In addition, from Assumption~\ref{assump:parameter}, since the true parameter $\theta_0\in\Theta$ is in the interior of $\Theta$, $\Theta$ contains some neighbourhood centered at the true parameter $\theta_0$.

Condition (d) is a sufficient identifiability condition, which states that the closeness of the score function at point $\theta$ to zero implies the closeness of $\theta$
to $\theta_0$. This assumption is standard in conditional moment problems and implies that the \textit{ill-posedness} (see Definition~\ref{def:ill-posed}) of the CMR problem is bounded, as shown in~\Cref{sec:ill-posedness}. To check condition (d), we first point out that, under analytical assumptions for $s, g$, and $h$, we can write down first order Taylor series for the score function $\expectE[\orthoM(\dataset;f_{\theta},(s_0,g_0))]$ around the point $\theta_0$,
\begin{align*}
\expectE[\orthoM(\dataset;f_{\theta},(s_0,g_0))] = \expectE[\orthoM(\dataset;f_{\theta_0},(s_0,g_0))] + J_0 (\theta - \theta_0) + O(\norm{\theta - \theta_0}^2).
\end{align*}
Plugging in validity of the score function $\orthoM(\dataset;f_{\theta},(s_0,g_0))$, i.e.,  $\expectE[\orthoM(\dataset;f_{\theta_0},(s_0,g_0))] = 0$, we infer that
\begin{align*}
    \norm{\expectE[\orthoM(\dataset;f_{\theta},(s_0,g_0))]}\gtrsim \norm{J_0(\theta-\theta_0)}.
\end{align*}
Now for identifiability, we only need to check that $J_0 J_0^T$ is non-singular, which is guaranteed by bounded singular value of $J_0$ as stated in the Theorem.

Condition (e) is the non-degeneracy assumption for covariance of the score function $\orthoM(\dataset;f_{\theta},(s_0,g_0))$. By definition,
\begin{align*}
    \expectE[\orthoM(\dataset;f_{\theta},(s_0,g_0)) \orthoM(\dataset;f_{\theta},(s_0,g_0))^T] = \int \orthoM(\dataset;f_{\theta},(s_0,g_0)) \orthoM(\dataset;f_{\theta},(s_0,g_0))^T d\probP(\dataset).
\end{align*}
By trace trick, for each data point $\dataset$, the only eigenvalue of $\orthoM(\dataset;f_{\theta},(s_0,g_0)) \orthoM(\dataset;f_{\theta},(s_0,g_0))^T $ is $\norm{\orthoM(\dataset;f_{\theta},(s_0,g_0))}^2 \geq 0 $, with $\orthoM(\dataset;f_{\theta},(s_0,g_0))$ as the corresponding eigenvector. Therefore, $\expectE[\orthoM(\dataset;f_{\theta},(s_0,g_0)) \orthoM(\dataset;f_{\theta},(s_0,g_0))^T]$ is positive-definite if for each member $d$ of the support of $\probP$, which is the distribution of $\dataset$, there are at least as many eigenvectors of $d$ as the number of dimension of $\orthoM(\dataset;f_{\theta},(s_0,g_0))$, which is true in our setting as the co-domain of $\orthoM(\dataset;f_{\theta},(s_0,g_0))$ is $\realNumber$.

Condition (f) is satisfied since we have the critical radius $\delta_N=o(N^{-1/4})$, and together with Lemma~\ref{lemma:nuisances}, the nuisance parameters converge sufficiently quickly to ensure $\norm{\widehat{s}-s_0}_2\leq O(\delta_N+N^{-1/2})=O(o(N^{-1/4})+ N^{-1/2})=o(N^{-1/4})$ and similarly $\norm{\widehat{g}-g_0}_2\leq O(\delta_N+N^{-1/2})=o(N^{-1/4})$. 

Therefore, all the conditions in Condition~\ref{condition:dml} are satisfied, which concludes the proof by Proposition~\ref{prop:thm3.3}.
\end{proof}

\begin{repeatcoro}{coro:function_convergence}
Let $f_{\widehat{\theta}}$ be the DML estimator for CMRs. If all assumptions for~\Cref{thm:dml} hold and there exists a constant $L>0$ such that $\norm{f_\theta(x)-f_{\theta_0}(x)}_2\leq L \norm{\theta-\theta_0}_2$ for all $x\in\mathcal{X}$ and $\theta\in\Theta$,
then for all $\zeta\in(0,1]$, we have that
\begin{align*}
\norm{f_{\widehat{\theta}}-f_{\theta_0}}_2=O\left(L\sqrt{\frac{\ln(1/\zeta)}{N}}\right),
\end{align*}
with probability $1-\zeta$.
\end{repeatcoro}

\begin{proof}\label{appen:function_convergence}
From theorem~\ref{thm:dml}, we have that the parameters $\widehat{\theta}$ for our DML estimator $f_{\widehat{\theta}}$ learnt from a dataset of size $N$ satisfy $(\widehat{\theta}-\theta_0)\xrightarrow{d}\mathcal{N}(0,\sigma^2/N)$, where $\sigma^2$ is the DML estimator's variance. This means that, for all $\epsilon>0$ and $\zeta\in(0,1]$, there exists an integer $K>0$ such that for all $N\geq K$,
\begin{align*}
\probP\left(\norm{\widehat{\theta}-\theta_0}>\epsilon\right)\leq 1-\Phi\left(\epsilon\cdot \sqrt{N}/\sigma\right)+\zeta/2,
\end{align*}
where $\Phi$ is the CDF of a standard Gaussian distribution.
If we assume $L$ to be a constant such that $\norm{f_\theta(x)-f_{\theta_0}(x)}\leq L \norm{\theta-\theta_0}$ for all $x\in \mathcal{X}$ and $\theta\in\Theta$, we have that for all $\epsilon>0$ and $\zeta\in(0,1]$, there exists an integer $K>0$ such that for all $N\geq K$,
\begin{align*}
\probP\left(\norm{f_{\widehat{\theta}}(x)-f_{\theta_0}(x)}>L\cdot\epsilon\right) &\leq 1-\Phi(\epsilon\cdot \sqrt{N}/\sigma)+\zeta/2 \quad \forall x\in\mathcal{X},\\
\implies \probP\left(\norm{f_{\widehat{\theta}}(x)-f_{\theta_0}(x)}\leq L\cdot\epsilon\right) &\geq \Phi(\epsilon\cdot \sqrt{N}/\sigma)-\zeta/2 \quad \forall x\in\mathcal{X},
\end{align*}

Now, for any $\zeta\in(0,1]$, we can choose $\epsilon>0$ such that $\Phi(\epsilon\cdot \sqrt{N}/\sigma)=1-\zeta/2$ since $0.5\leq1-\zeta/2<1$, and by substituting $\epsilon$ out of the above equation, we have that

\begin{equation*}
\probP\left(\norm{f_{\widehat{\theta}}-f_{\theta_0}}_2\leq L\cdot \Phi^{-1}(1-\zeta/2)\sigma/\sqrt{N}\right)\geq 1-\zeta.
\end{equation*}

From Blair \textit{et al.}'s approximation for the inverse of the error function (erf)~\citep{Blair1976RationalFunction}, we have that, for all $y\in(0,1]$, $\Phi^{-1}(1-y)\leq\sqrt{-2\ln(y)}$. Thus, we conclude that there exists $K>0$ such that for all $N>K$,

\begin{align*}
\norm{f_{\widehat{\theta}}-f_{\theta_0}}_2\leq L\cdot \Phi^{-1}(1-\zeta/2)\sigma/\sqrt{N}&\leq L \sigma\sqrt{-2\ln(\zeta/2)}/\sqrt{N}\\
&= L \sigma\sqrt{2\ln(2/\zeta)}/\sqrt{N}\\
&= \sqrt{2} L \sigma \sqrt{\frac{\ln(2/\zeta)}{N}}\quad\text{ with probability }1-\zeta,
\end{align*}

which completes the proof.
\end{proof}

\subsection{Ill-posedness and DML Identification}\label{appen:illposed}

\begin{repeatprop}{prop:ill-posed}
For all $\theta\in\Theta$, if there exists a constant $L>0$ such that $\norm{f_\theta(x)-f_{\theta_0}(x)}_2\leq L \norm{\theta-\theta_0}_2$ for all $x\in\mathcal{X}$ and $\theta\in\Theta$, then Condition~\ref{condition:dml} (d), which states 
\begin{align*}
2\norm{\expectE[\orthoM(\dataset;f_{\theta},(s_0,g_0))]}\geq \norm{J_0(\theta-\theta_0)}
\end{align*}
and the Jacobian matrix $J_0$ have singular values bounded between $c_0>0$ and $c_1>0$, implies the ill-posedness is bounded by $\ill\leq L/\sqrt{c_0}$.
\end{repeatprop}

\begin{proof}
Recall that our score function is $\orthoM(\dataset;f_{\theta},(s,g))=(s(c)-g(f,c))^2$ where $\orthoM(\dataset;f_{\theta},(s_0,g_0))=(s_0(c)-g_0(f,c))^2=(\expectE[Y-f(X)\lvert C])^2$. Under a finite-dimensional parameterised setting, we have that from $2\norm{\expectE[\orthoM(\dataset;f_{\theta},(s_0,g_0))]}\gtrsim \norm{J_0(\theta-\theta_0)}$,

\begin{align}
\norm{\expectE[f_{\theta_0}(X)-f_\theta(X)\lvert C]}_2^2\nonumber
=&\norm{\expectE[(\expectE[f_{\theta_0}(X)-f_\theta(X)\lvert C])^2]}\nonumber\\
=&\norm{\expectE[(\expectE[Y-f_\theta(X)\lvert C])^2]}\nonumber\\
=&\norm{\expectE[\orthoM(\dataset;f_{\theta},(s_0,g_0))]}\nonumber\\
\geq& \frac{1}{2}\norm{J_0(\theta-\theta_0)}\nonumber\\
=&\sqrt{(\theta-\theta_0)^T(J_0^TJ_0)(\theta-\theta_0)}\nonumber\\
\geq& \frac{1}{2}\sqrt{c_0^2 \norm{(\theta-\theta_0)}_2^2}\geq \frac{1}{2}c_0 \norm{(\theta-\theta_0)}_2\geq \frac{1}{2}c_0 \norm{(\theta-\theta_0)}_2^2\label{eq:id_to_ill1}
\end{align}
for $\norm{(\theta-\theta_0)}\leq1$ and the singular value lower bound $c_0>0$ of $J_0$. With a local Lipschitz condition of $f_\theta$ around $\theta_0$: $\norm{f_\theta(x)-f_{\theta_0}(x)}_2\leq L\norm{\theta-\theta_0}_2$ for all $x\in\mathcal{X}$ and $\theta\in\Theta$, we have that 
\begin{align}
    \norm{f_{\theta_0}-f_\theta}_2^2&=\expectE[(f_{\theta_0}(x)-f_\theta(x))^2]\nonumber\\
    &\leq \expectE[(L\norm{\theta_0-\theta})^2]\nonumber\\
    &\leq L^2\norm{\theta_0-\theta}^2\nonumber\\
    \implies \norm{\theta_0-\theta}^2&\geq \frac{\norm{f_{\theta_0}-f_\theta}_2^2}{L^2}\label{eq:id_to_ill2}
\end{align}

Therefore, from~\Cref{eq:id_to_ill1} and~\Cref{eq:id_to_ill2}, we have that
\begin{align*}
\norm{\expectE[f_{\theta_0}(X)-f_\theta(X)\lvert C]}_2^2\geq c_0 \norm{(\theta-\theta_0)}_2^2&\geq \frac{c_0\norm{f_{\theta_0}-f_\theta}_2^2}{L^2}\\
\implies \norm{\expectE[f_{\theta_0}(X)-f_\theta(X)\lvert C]}_2\geq \sqrt{c_0} \norm{(\theta-\theta_0)}_2&\geq \frac{\sqrt{c_0}\norm{f_{\theta_0}-f_\theta}_2}{L}
\end{align*}

which bounds the ill-posedness by
\begin{align}
\ill=\sup_{f\in\mathcal{F}} \frac{\norm{f_{\theta_0}-f_\theta}_{2}}{\norm{\expectE[f_{\theta_0}(X)-f_\theta(X) \lvert C]}_{2}}\leq \frac{L}{\sqrt{c_0}}.
\end{align}
\end{proof}

\subsection{Constrained Empirical Risk Minimisation Bounds} \label{appen:critical_radius}

In this section, we introduce basic concepts from empirical process theory and further discuss bounds on the excess risk of general Empirical Risk Minimizer (ERM) in the style of \citet{Wainwright2019,Foster2019OrthogonalLearning}.

\begin{definition}
    The critical radius denoted by $\delta_N(\mathcal{H}^*)$ is defined as the minimum $\delta$ that satisfies the following upper bound on the local Gaussian complexity of a star-shaped function class $\mathcal{H}^*$\footnote{A function class 
  $\mathcal{H}$ is star-shaped if, for every $h \in \mathcal{H}$ and $\alpha \in [0, 1]$, we have $\alpha h \in \mathcal{H}$.}, $\mathcal{G(\mathcal{H}^*, \delta)} \leq {\delta^2}/2$, where local Gaussian complexity is defined as
    \begin{align*}
    \mathcal{G(\mathcal{H}^*, \delta)} = \expectE_{\epsilon}\left[\sup_{h \in \mathcal{H}^*: \norm{h}_N \leq \delta}  \langle \epsilon, h \rangle \right], \numberthis{eq;fixed_point}
    \end{align*}
    with $\epsilon$ being a random i.i.d. zero-mean Gaussian vector.
\end{definition}
The critical radius is a standard notion to bound the estimation error in the regression problem. Since local Gaussian complexity can be viewed as an expected value of a supremum of a stochastic process indexed by $g$, we can apply empirical process theory tools, namely the Dudley's entropy integral~\citep{Wainwright2019,van2014probability}, to provide a bound on the critical radius,
\begin{align*}
    \mathcal{G(\mathcal{H}^*, \delta)} \leq \inf_{\alpha \geq 0} \left \{\alpha + \frac{1}{\sqrt{N}}  \int_{\alpha/4}^{\delta} \sqrt{\log \mathcal{N}(\mathcal{H}^*, L^2(\mathbb{P}_N), \epsilon)}\:d\epsilon\right \},
\end{align*}
where $\mathcal{N}(\mathcal{H}^*, L^2(\mathbb{P}_N), \epsilon)$ is the $\epsilon$-covering number of the function class $\mathcal{H}^*$ in $L^2(\mathbb{P}_N)$ norm. Now, by placing $\alpha = 0$, when the integral is a single scale value of $ \sqrt{\log \mathcal{N}(\mathcal{H}^*, L^2(P_n), \epsilon)}$, we infer that
\begin{align*}
    \mathcal{G(\mathcal{H}^*, \delta)} \leq \frac{\delta}{\sqrt{N}} \sqrt{\log \mathcal{N}(\mathcal{F}^*, L^2(\mathbb{P}_N), \epsilon)}.
\end{align*}
Thus, the critical radius of $\mathcal{H}^*$ will be upper bounded by
\begin{align*}
    \delta_N(\mathcal{H}^*) \lesssim \frac{\sqrt{\log \mathcal{N}(\mathcal{H}^*, L^2(\mathbb{P}_N), \epsilon)}}{\sqrt{N}} = O(d_N (\mathcal{H}^*)^{1/2} N^{-1/2}),
\end{align*}
where \citet{Chernozhukov2022RieszNetForests,Chernozhukov2021AutomaticRegression} referred to 
\[
d_N (\mathcal{H}^*) \coloneqq \inf\mleft\{ d > 0: \log \mathcal{N}(\mathcal{H}^*, L^2(\mathbb{P}_N), \epsilon) \leq d \log \mleft(\frac{C}{\epsilon} \mright)\;\; \forall \epsilon \in (0, 1) \textrm{ and } C \textrm{ is a constant.}\mright\},
\]
as the effective dimension of the hypothesis space. Note that this rate matches the minimax lower bound of fixed design estimation for this setting~\citep{yang1999information}. 


Given the dataset $\mathcal{D} = \{z_i \in \mathcal{Z} \}_{i = 1}^N$ consisting of i.i.d. data points $z_i$ drawn from distribution $\mathbb{P}$, and a function class $\mathcal{H}$, we define the realisation of a function space by subscript $N$, e.g., $\mathcal{H}_N$ is the realisation of $\mathcal{H}$ in the $N$ observed data points. Since the definition of critical radius is for star-shaped function classes, we equip ourselves with the star-hull notation, where $\mathcal{H}^*_N$ is the star-hull of the function class $\mathcal{H}_N$ centred at the true function $h_0$, defined as
\[
    \mathcal{H}_N^* \coloneqq \mleft\{ Z \mapsto \gamma \mleft( h(X) - h_0 (Z) \mright)\; : \; f \in \mathcal{H}_N, \gamma \in [0, 1] \mright\},
\]
and denote its critical radius by $\delta_N(\mathcal{H}_N^*)$, or simply $\delta_N(\mathcal{H})$. In statistical learning, we are given a loss function $\ell: \mathcal{H} \times \mathcal{Z} \rightarrow \mathbb{R}$, and we have the risk and empirical risk defined accordingly as:
\[
\mathcal{L} (h) = \mathbb{E}_{z} [\ell(h; z)]  \quad \textrm{ and } \quad \mathcal{L}_N (h) = \frac{1}{N} \sum_{i = 1}^N \ell(h; z_i). 
\]
Let us denote the ground truth function by $h_0$, i.e., the minimizer of the risk,
\[
h_0 = \argmin_{h \in \mathcal{H}} \mathcal{L}(h).
\]
The ERM algorithm proposes the estimator $\widehat{h}$ that minimises the empirical risk of the observed $N$ data points,
\[
\widehat{h} = \argmin_{h \in \mathcal{H}} \mathcal{L}_N(h).
\]


\begin{theorem}[{\citet[Lemma 7]{Foster2019OrthogonalLearning}}] \label{thm:foster_emp}
    Consider a function class $\mathcal{H}$ and its star-hull $\mathcal{H}^*$, with $\sup_{h \in \mathcal{H}^*} \| h \|_\infty \leq 1$, critical radius $\delta_N(\mathcal{H}^*)$, and any choice of $\delta$ such that,
    \[ 
    \delta^2 \geq \max \mleft\{ \delta_N(\mathcal{H}^*)^2, \frac{4 \log (41 \log(2c_1 N))}{c_1 N} \mright\},
    \] 
    for a constant $c_1$. Moreover, assume that the loss function $\ell$ is $L$-Lipschitz in its first argument with respect to $\ell_2$ norm. Then there exist universal constants $c_2$ and $c_3$ such that with probability at least $ 1- c_2 \exp\{ c_3 N \delta^2\}$,
    \[
    \mleft| \mleft( \mathcal{L}_N(h) - \mathcal{L}_N(h_0) \mright) - \mleft( \mathcal{L}(h) - \mathcal{L}(h_0)\mright) \mright| \leq 18 L \delta 
 \mleft\{ \| h - h_0 \|_2 + \delta \mright\}, \qquad \forall h \in \mathcal{H}.
    \]
\end{theorem}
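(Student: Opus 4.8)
The statement is a localized uniform deviation bound, and since it is quoted verbatim as Lemma~7 of \citet{Foster2019OrthogonalLearning}, the cleanest route is simply to invoke that result; nevertheless, a self-contained argument proceeds through the standard localization machinery of empirical process theory. The plan is to control the centered empirical process of the loss-difference class $\{g_h \coloneqq \ell(h;\cdot)-\ell(h_0;\cdot) : h\in\mathcal{H}\}$ and to convert a fixed-radius deviation bound into the multiplicative bound $18L\delta(\norm{h-h_0}_2+\delta)$ holding simultaneously for all $h$. The four ingredients are: (i) a contraction step that strips off the loss using only its Lipschitzness; (ii) a bound on the localized complexity in terms of the critical radius; (iii) a concentration inequality for the supremum; and (iv) a peeling argument over dyadic shells.

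First I would fix a radius $t\ge\delta$ and study $Z_t \coloneqq \sup_{h\in\mathcal{H}:\,\norm{h-h_0}_2\le t}\abs{(\mathcal{L}_N-\mathcal{L})(g_h)}$. Symmetrizing and applying the Ledoux--Talagrand contraction inequality, the $L$-Lipschitzness of $\ell$ in its first argument lets me pass from the loss-difference class to the base class, bounding $\mathbb{E}[Z_t]$ by $2L$ times the local Rademacher (equivalently, up to constants, Gaussian) complexity of $\{h-h_0 : \norm{h-h_0}_2\le t\}$, which sits inside the $t$-ball of the star-hull $\mathcal{H}^*$. Here the star-shaped structure is essential: for such classes the map $\delta\mapsto \mathcal{G}(\mathcal{H}^*,\delta)/\delta$ is nonincreasing, so the defining inequality of the critical radius $\mathcal{G}(\mathcal{H}^*,\delta_N)\le \delta_N^2/2$ upgrades, for every $t\ge\delta_N$, into $\mathcal{G}(\mathcal{H}^*,t)\le \tfrac12\delta_N t$. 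With $\delta\ge\delta_N(\mathcal{H}^*)$ as in the hypothesis this yields $\mathbb{E}[Z_t]\lesssim L\,\delta\,(t+\delta)$.

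Next I would concentrate $Z_t$ around its mean. Because $\ell$ is Lipschitz and $\sup_{h\in\mathcal{H}^*}\norm{h}_\infty\le 1$, each summand of $(\mathcal{L}_N-\mathcal{L})(g_h)$ is bounded and has variance controlled by $\norm{h-h_0}_2^2 \le t^2$; Talagrand's functional Bernstein (Bousquet) inequality then gives, with probability at least $1-c_2\exp\{-c_3 N\delta^2\}$, a deviation $Z_t\le \mathbb{E}[Z_t]+C L\,\delta(t+\delta)$. Combining this with the mean bound produces a per-scale estimate of the form $Z_t\le c\,L\,\delta(t+\delta)$.

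Finally I would peel over scales: partition $\mathcal{H}$ into the inner ball $\{\norm{h-h_0}_2\le\delta\}$ and dyadic shells $\{2^{j}\delta\le\norm{h-h_0}_2\le 2^{j+1}\delta\}$, apply the per-scale estimate with $t=2^{j+1}\delta$ on the $j$-th shell, and union bound over the $O(\log N)$ nonempty shells. The lower bound $\delta^2\ge \tfrac{4\log(41\log(2c_1N))}{c_1N}$ in the hypothesis is precisely what makes this union bound cost negligible, so the single event holds with the stated probability. On each shell $\norm{h-h_0}_2$ and the localization radius are comparable, which collapses the additive $\delta(t+\delta)$ terms into the clean multiplicative form $18L\delta(\norm{h-h_0}_2+\delta)$. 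I expect the main obstacle to be the bookkeeping in this last step: one must track the self-referential fixed-point definition of $\delta_N$ through the contraction and the shell-wise Talagrand bounds so that the per-scale constants telescope into the single universal constant $18$, while verifying that the variance proxy in the concentration inequality genuinely scales like $\norm{h-h_0}_2^2$ rather than a cruder worst-case bound.
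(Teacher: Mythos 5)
The paper offers no proof of this statement at all: it is imported verbatim as Lemma~7 of \citet{Foster2019OrthogonalLearning}, which is precisely the primary route you propose (invoking the cited result). Your supplementary sketch---symmetrization plus Ledoux--Talagrand contraction, the star-hull monotonicity bound $\mathcal{G}(\mathcal{H}^*,t)\leq \delta_N t/2$ for $t\geq\delta_N$, Bousquet-type concentration with variance proxy $L^2\norm{h-h_0}_2^2$, and dyadic peeling whose union-bound cost is absorbed by the $\log(41\log(2c_1N))$ term---is a faithful outline of the standard localization argument underlying that cited lemma, so the proposal is correct and takes essentially the same approach as the paper.
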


We mention an immediate corollary of \Cref{thm:foster_emp} for functions $h$ that are bounded by $B$, which follows by rescaling arguments.

\begin{corollary} \label{corr:emp_us}
    Consider a function class $\mathcal{H}$ and its star-hull $\mathcal{H}^*$, with $\sup_{h \in \mathcal{H}^*} \| h \|_\infty \leq B$, critical radius $\delta_N(\mathcal{H}^*)$, and any choice of $\delta$ such that,
    \[ 
    \delta^2 \geq \max \mleft\{ \delta_N(\mathcal{H}^*)^2, \frac{4 \log (41 \log(2c_1 N))}{c_1 N} \mright\},
    \] 
    for a constant $c_1$. Moreover, assume that the loss function $\ell$ is $L$-Lipschitz in its first argument with respect to $\ell_2$ norm. Then there exist universal constants $c_2$ and $c_3$ such that with probability at least $ 1- c_2 \exp\{ c_3 N \delta^2/\max\{1, B\}^2\}$,
    \[
    \mleft| \mleft( \mathcal{L}_N(h) - \mathcal{L}_N(h_0) \mright) - \mleft( \mathcal{L}(h) - \mathcal{L}(h_0)\mright) \mright| \leq \frac{18 L \delta}{\max\{1, B\}} 
 \mleft\{ \| h - h_0 \|_2 + \delta \mright\}, \qquad \forall h \in \mathcal{H}.
    \]
\end{corollary}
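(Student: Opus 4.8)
The plan is to reduce Corollary~\ref{corr:emp_us} to \Cref{thm:foster_emp} by a rescaling argument, exploiting the linearity in $h$ of the quantities appearing in the bound. Write $M \coloneqq \max\{1, B\}$ and pass to the rescaled class $\tilde{\mathcal{H}} \coloneqq \{h/M : h \in \mathcal{H}\}$, whose star-hull is $\tilde{\mathcal{H}}^* = \{w/M : w \in \mathcal{H}^*\}$ and therefore satisfies $\sup_{\tilde h \in \tilde{\mathcal{H}}^*} \|\tilde h\|_\infty \leq B/M \leq 1$, meeting the unit-bound hypothesis of \Cref{thm:foster_emp}. To keep the empirical process literally unchanged, I would rescale the loss in tandem, setting $\tilde\ell(\tilde h; z) \coloneqq \ell(M \tilde h; z)$; by the $L$-Lipschitz hypothesis on $\ell$ this $\tilde\ell$ is $LM$-Lipschitz in its first argument, the risks obey $\tilde{\mathcal{L}}_N(\tilde h) = \mathcal{L}_N(h)$ and $\tilde{\mathcal{L}}(\tilde h) = \mathcal{L}(h)$ for $\tilde h = h/M$, the population minimiser is $\tilde h_0 = h_0/M$, and $\|\tilde h - \tilde h_0\|_2 = \|h - h_0\|_2 / M$.

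With these identifications, I would apply \Cref{thm:foster_emp} to $(\tilde{\mathcal{H}}, \tilde\ell)$ at the rescaled tolerance $\tilde\delta \coloneqq \delta/M$. The left-hand side of the conclusion is invariant under the substitution, since $\tilde{\mathcal{L}}_N(\tilde h) - \tilde{\mathcal{L}}_N(\tilde h_0) = \mathcal{L}_N(h) - \mathcal{L}_N(h_0)$ and likewise for the population risk. On the right-hand side the theorem yields $18(LM)\tilde\delta\{\|\tilde h - \tilde h_0\|_2 + \tilde\delta\}$, and inserting $\tilde\delta = \delta/M$ together with $\|\tilde h - \tilde h_0\|_2 = \|h-h_0\|_2/M$ collapses this to precisely $\frac{18L\delta}{M}\{\|h-h_0\|_2 + \delta\}$. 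The confidence level becomes $1 - c_2\exp\{c_3 N\tilde\delta^2\} = 1 - c_2\exp\{c_3 N\delta^2/M^2\}$, matching the claimed statement verbatim.

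The one step needing genuine care — and the main obstacle — is checking that the threshold hypothesis of \Cref{thm:foster_emp} holds at $\tilde\delta = \delta/M$, i.e. that $\tilde\delta^2 \geq \max\{\delta_N(\tilde{\mathcal{H}}^*)^2, \tfrac{4\log(41\log(2c_1 N))}{c_1 N}\}$, given only the corollary's hypothesis phrased in terms of $\delta_N(\mathcal{H}^*)$ and $\delta$. This requires controlling how the critical radius transforms under scaling. From the definition of local Gaussian complexity one has the exact identity $\mathcal{G}(\tilde{\mathcal{H}}^*, \delta) = M^{-1}\mathcal{G}(\mathcal{H}^*, M\delta)$, equivalently $\mathcal{G}(\tilde{\mathcal{H}}^*, \delta)/\delta = \mathcal{G}(\mathcal{H}^*, M\delta)/(M\delta)$; combined with the fact that $\delta \mapsto \mathcal{G}(\mathcal{H}^*,\delta)/\delta$ is non-increasing for star-shaped classes, this gives $\delta_N(\tilde{\mathcal{H}}^*) \leq \delta_N(\mathcal{H}^*)$, while the covering-number characterisation recalled earlier in this appendix shows the effective dimension, hence the order of the critical radius, is invariant under rescaling. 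The residual factors of $M$ separating the two threshold inequalities are then absorbed into the universal constants $c_1, c_2, c_3$ — exactly as the surrounding analysis already folds powers of $B$ into constants (e.g. $c_2 = B^2 c_3$ in Lemma~\ref{lemma:nuisances}) — which completes the reduction.
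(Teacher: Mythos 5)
Your proposal is correct and takes essentially the same route as the paper's own proof: rescale the class and the loss by $M=\max\{1,B\}$ (the paper rescales by $B$ after disposing of the trivial case $B\le 1$), observe that the rescaled loss is $LM$-Lipschitz, use the homogeneity identity $\mathcal{G}(\widetilde{\mathcal{H}}^*,r)=\mathcal{G}(\mathcal{H}^*,Mr)/M$, and invoke \Cref{thm:foster_emp} at $\widetilde{\delta}=\delta/M$, which collapses to the stated bound exactly as you compute. If anything you are more careful than the paper on the one delicate step---verifying the threshold/fixed-point condition for the rescaled class at $\delta/M$, which strictly requires $\mathcal{G}(\mathcal{H}^*,\delta)\le\delta^2/(2M)$ rather than only the hypothesised $\delta\ge\delta_N(\mathcal{H}^*)$---a point the paper's proof asserts without comment via homogeneity, whereas you flag it and absorb the residual factor of $M$ into $B$-dependent constants, consistent with how the paper treats such constants elsewhere.
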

\begin{proof}
    If $B \leq 1$, the proof follows trivially from \Cref{thm:foster_emp}, thus let us assume otherwise. Define the $B$-scaled function class $\widetilde{\mathcal{H}}$,
    \[
    \widetilde{\mathcal{H}} \coloneqq \mleft\{ \widetilde{h} \;:\; B \widetilde{h} \in \mathcal{H} \mright\}.
    \]
    Then, the loss function,
    \[
    \ell(h, z) = \ell(B \widetilde{h}, z), 
    \]
    is $LB$-Lipschitz in $\widetilde{h}$ and by homogeneity of the local Gaussian averages, i.e.,
    \[
    \mathcal{G}(\widetilde{\mathcal{H}}^*, r) = \frac{1}{B} \mathcal{G}(\mathcal{H}^*, B r),
    \]
     wee see that $\widetilde{\delta} \coloneqq \delta / B$ satisfies the fixed point condition~\eqref{eq;fixed_point}. Putting these pieces together and invoking \Cref{thm:foster_emp} for $\widehat{\mathcal{H}}$ completes the proof.
\end{proof}

We can equivalently write Corollary~\ref{corr:emp_us} in the failure probability format. That is, for a target failure probability $0 < \xi < 1$, define 
\[
\delta(\xi) \coloneqq \delta_N(\mathcal{H^*}) + \max\{1, B\} \sqrt{\frac{1}{c_3 N} \log (1/\xi)},
\]
then with probability at least $1 - \xi$,
\[
\mleft| \mleft( \mathcal{L}_N(h) - \mathcal{L}_N(h_0) \mright) - \mleft( \mathcal{L}(h) - \mathcal{L}(h_0)\mright) \mright| \leq \frac{18 L \delta(\xi)}{\max\{1, B\}} 
 \mleft\{ \| h - h_0 \|_2 + \delta(\xi) \mright\}, \qquad \forall h \in \mathcal{H}. \numberthis{eq:emp_failure}
\]

Now, we are ready to state and prove the following master theorem for the excess risk of the constrained ERM. For the analysis, in line with \citet{Chernozhukov2021AutomaticRegression}, we require that the population risk has positive curvature for identifiability purposes. Then, the generalisation bound in terms of excess risk can be converted into estimation error.

\begin{theorem}[Estimation Error of Constrained ERM] \label{theorem:erm_excess}
    Assume that the population risk $\mathcal{L}$ has a positive curvature, i.e., for a positive number $\lambda$,
    \[
    \mathcal{L}(h) - \mathcal{L}(h_0) \geq \frac{\lambda}{2} \| h - h_0 \|_2^2 \qquad \forall h \in \mathcal{H}, \numberthis{eq:curve}
    \]
    and the loss function is bounded,
    \[
    | \ell(h; z) | \leq M,
    \]
    and $L$-Lipschitz in its first argument w.r.t. $\ell_2$ norm.
    Then, the solution to the ERM algorithm:
    \[
    \widehat{h} \coloneqq \argmin_{h \in \mathcal{H}} \mathcal{L}_N(h), \numberthis{eq:erm_min}
    \]
    has the following estimation error with probability at least $1 - \xi$,
    \[
    \| \widehat{h} - h_0 \|_2^2 \leq C \mleft[  \mleft(  \mathcal{L}(h^*) - \mathcal{L}(h_0) \mright) + \delta_N(\mathcal{H}^*)^2 + \sqrt{\frac{\log(1/\xi)}{N}} \mright].
    \]
    for a universal constant $C$, where
    \[
    h^* = \arginf_{h \in \mathcal{H}} \mathcal{L}(h).
    \]
    In addition, if $h_0$ is realisable, i.e., $h_0 \in \mathcal{H}$, then,
    \[
        \| \widehat{h} - h_0 \|_2^2 \leq C \mleft[ \delta_N(\mathcal{H}^*)^2 + \sqrt{\frac{\log(1/\xi)}{N}} \mright].
    \]
\end{theorem}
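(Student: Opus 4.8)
The plan is to run the standard localized ``basic inequality'' argument for constrained ERM, feeding it a single probabilistic input (the uniform deviation bound \eqref{eq:emp_failure}, i.e.\ the failure-probability form of Corollary~\ref{corr:emp_us}) and then converting the resulting excess-risk bound into an estimation-error bound through the curvature condition \eqref{eq:curve}. Throughout I would write $\nu_N(h) \coloneqq \mathcal{L}_N(h) - \mathcal{L}(h)$ for the centered empirical process and set $c \coloneqq 18L/\max\{1,B\}$, so that \eqref{eq:emp_failure} reads $|\nu_N(h) - \nu_N(h_0)| \leq c\,\delta(\xi)\{\|h-h_0\|_2 + \delta(\xi)\}$ for all $h \in \mathcal{H}$, on an event of probability at least $1-\xi$. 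Since the loss is $L$-Lipschitz in its first argument by assumption and $\mathcal{H}^*$ has envelope $B$, Corollary~\ref{corr:emp_us} applies verbatim, and I condition on this event for the remainder.

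First I would apply the deviation bound at the random point $h=\widehat{h}$ and at the fixed best-in-class $h^* = \arginf_{h\in\mathcal{H}}\mathcal{L}(h)$, obtaining respectively $\mathcal{L}(\widehat{h}) - \mathcal{L}(h_0) \leq [\mathcal{L}_N(\widehat{h}) - \mathcal{L}_N(h_0)] + c\delta(\xi)\{\|\widehat{h}-h_0\|_2 + \delta(\xi)\}$ and $\mathcal{L}_N(h^*) - \mathcal{L}_N(h_0) \leq [\mathcal{L}(h^*) - \mathcal{L}(h_0)] + c\delta(\xi)\{\|h^*-h_0\|_2 + \delta(\xi)\}$. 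The ERM optimality $\mathcal{L}_N(\widehat{h}) \leq \mathcal{L}_N(h^*)$ lets me chain the two, yielding the excess-risk estimate
\[
\mathcal{L}(\widehat{h}) - \mathcal{L}(h_0) \leq \bigl(\mathcal{L}(h^*) - \mathcal{L}(h_0)\bigr) + c\delta(\xi)\bigl\{\|\widehat{h}-h_0\|_2 + \|h^*-h_0\|_2 + 2\delta(\xi)\bigr\}.
\]

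Next I would invoke the curvature \eqref{eq:curve} twice: to lower-bound the left-hand side by $\tfrac{\lambda}{2}\|\widehat{h}-h_0\|_2^2$, and to control the stray approximation distance via $\|h^*-h_0\|_2 \leq \sqrt{(2/\lambda)\,A}$, where $A \coloneqq \mathcal{L}(h^*)-\mathcal{L}(h_0)$. Writing $u \coloneqq \|\widehat{h}-h_0\|_2$, this collapses the display into a self-bounding quadratic inequality of the shape $\tfrac{\lambda}{2}u^2 \leq c\delta(\xi)\,u + A + c\delta(\xi)\sqrt{2A/\lambda} + 2c\delta(\xi)^2$. Solving it by AM--GM --- absorbing $c\delta(\xi)u$ into $\tfrac{\lambda}{4}u^2 + (c^2/\lambda)\delta(\xi)^2$ and splitting the cross term $c\delta(\xi)\sqrt{2A/\lambda}$ into a multiple of $A$ plus a multiple of $\delta(\xi)^2$ --- gives $u^2 \leq C'\,[A + \delta(\xi)^2]$ for a constant $C'$ depending on $\lambda, L, B$. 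Expanding $\delta(\xi)^2 \lesssim \delta_N(\mathcal{H}^*)^2 + \log(1/\xi)/N$ and relaxing $\log(1/\xi)/N \leq \sqrt{\log(1/\xi)/N}$ (valid once $N \geq \log(1/\xi)$) yields the first claim; the realisable case is immediate by taking $h^* = h_0$, which forces $A=0$ and $\|h^*-h_0\|_2 = 0$, reducing the quadratic to $\tfrac{\lambda}{2}u^2 \leq c\delta(\xi)u + 2c\delta(\xi)^2$ and hence $u^2 \lesssim \delta_N(\mathcal{H}^*)^2 + \sqrt{\log(1/\xi)/N}$.

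The main obstacle is the self-referential coupling in the excess-risk estimate: the deviation term evaluated at $\widehat{h}$ carries the factor $\|\widehat{h}-h_0\|_2$, which is exactly the quantity being bounded, so no estimate can be read off directly --- the loop must be closed through curvature and the quadratic self-bounding step. A secondary point requiring care is keeping $A$ and the cross term $c\delta(\xi)\|h^*-h_0\|_2$ separated so that, after AM--GM, the cross term is fully absorbed into the $A$ and $\delta(\xi)^2$ contributions without inflating the leading constant; it is also worth noting explicitly that the relaxation from $\log(1/\xi)/N$ to its square root is precisely what produces the form consumed downstream in Lemma~\ref{lemma:nuisances}.
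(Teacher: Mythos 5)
Your proof is correct, and its skeleton is the one the paper uses: the basic inequality from ERM optimality, the uniform localized deviation bound of Corollary~\ref{corr:emp_us} in the form \eqref{eq:emp_failure}, and the curvature condition \eqref{eq:curve} together with Young's inequality (AM--GM) to close the self-bounding quadratic in $\|\widehat{h}-h_0\|_2$. The one genuine point of divergence is how the fixed-point fluctuation $\epsilon_2 = (\mathcal{L}_N - \mathcal{L})(h^* - h_0)$ is controlled. The paper bounds it with Hoeffding's inequality, using the boundedness $|\ell(h;z)| \leq M$, at confidence level $1-\xi/2$, and then takes a union bound with the uniform-deviation event; this yields the clean term $2\sqrt{2}\,M\sqrt{\log(4/\xi)/N}$ with no dependence on $\|h^*-h_0\|_2$. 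You instead apply the uniform deviation bound a second time, at the fixed point $h^*$, on the \emph{same} event, and then tame the resulting factor $\|h^*-h_0\|_2$ by invoking curvature at $h^*$ to get $\|h^*-h_0\|_2 \leq \sqrt{2(\mathcal{L}(h^*)-\mathcal{L}(h_0))/\lambda}$, absorbing the cross term $c\,\delta(\xi)\sqrt{2A/\lambda}$ by AM--GM into the $A$ and $\delta(\xi)^2$ contributions. Your variant buys three small things: no second probabilistic tool (Hoeffding is never invoked), no union bound (a single event of probability $1-\xi$ suffices), and the boundedness constant $M$ is never used, so that hypothesis becomes dispensable in your argument; the cost is an extra use of curvature (harmless, since \eqref{eq:curve} is assumed on all of $\mathcal{H}$) and a slightly messier quadratic. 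You are also more explicit than the paper about the relaxation $\log(1/\xi)/N \leq \sqrt{\log(1/\xi)/N}$ requiring $N \gtrsim \log(1/\xi)$, a caveat the paper's final step absorbs silently; and your realisable case ($h^*=h_0$, hence $A=0$, which indeed follows from curvature when $h_0\in\mathcal{H}$) matches the paper's. Both arguments share the same implicit reliance on the envelope bound $\sup_{h\in\mathcal{H}}\|h\|_\infty \leq B$ inherited from Assumption~\ref{assump:dml}, which the theorem statement does not restate.
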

\begin{proof}
    First, we want to upper bound the population excess risk,
    \[
    \mathcal{L} (\widehat{h}) - \mathcal{L} (h_0),
    \]
    and then, by the curvature of $\mathcal{L}$, we can convert this bound into an upper bound on the estimation error $\| \widehat{h} - h_0 \|_2$.

    To begin with, we have,
    \[
    \mathcal{L} (\widehat{h}) - \mathcal{L} (h_0) & = \underbrace{{\Big( \mathcal{L}(\widehat{h}) - \mathcal{L}_N (\widehat{h}) \Big)}}_\textup{(I)} + \underbrace{\Big( \mathcal{L}_N (\widehat{h}) - \mathcal{L}_N(h^*) \Big) }_{\textup{(II)}} + \underbrace{\Big( \mathcal{L}_N (h^*) - \mathcal{L}(h_0) \Big)}_{\textup{(III)}}.
    \]
    By ERM algorithm~\eqref{eq:erm_min}, we know that,
    \[
    \mathcal{L}_N(\widehat{h}) \leq \mathcal{L}_N(h^*),
    \]
    Hence, $\textup{(II)} \leq 0$. For the term $\textup{(I)}$,
    \[
    \textup{(I)} & =  \mathcal{L}(\widehat{h}) - \mathcal{L}_N (\widehat{h}) \\
    & = (\mathcal{L} - \mathcal{L}_N) (\widehat{h} - h_0) + (\mathcal{L} - \mathcal{L}_N) (h_0),
    \]
    and for the term $\textup{(III)}$, we know that,
    \[
    \textup{(III)} & = \mathcal{L}_N (h^*) - \mathcal{L}(h_0) \\
    & = ( \mathcal{L}(h^*) - \mathcal{L}(h_0) ) + (\mathcal{L}_N - \mathcal{L}) (h^*) \\
    & = ( \mathcal{L}(h^*) - \mathcal{L}(h_0) ) + (\mathcal{L}_N - \mathcal{L}) (h^* - h_0) + (\mathcal{L}_N - \mathcal{L}) (h_0).
    \]
    Summing these terms from $(\textup{I}), (\textup{II})$ and $(\textup{III})$ yields,
    \[
    \mathcal{L} (\widehat{h}) - \mathcal{L} (h_0) & \leq ( \mathcal{L}(h^*) - \mathcal{L}(h_0) ) + \underbrace{(\mathcal{L} - \mathcal{L}_N) (\widehat{h} - h_0)}_{\epsilon_1(\widehat{h})} + \underbrace{(\mathcal{L}_N - \mathcal{L}) (h^* - h_0)}_{\epsilon_2}.
    \]
    From Corollary~\ref{corr:emp_us}, especially the formulation in \eqref{eq:emp_failure}, we know that with probability at least $1 - \xi/2$,
    \[
    \sup_{h \in \mathcal{H}} \mleft| (\mathcal{L}_N - \mathcal{L}) (h - h_0)\mright| \leq \frac{18 L \delta(\xi/2)}{\max\{1, B\}} \mleft( \|h - h_0 \|_2 + \delta(\xi/2) \mright),
    \]
    where,
    \[
    \delta(\xi/2) = \delta_N(\mathcal{H^*}) + \max\{1, B\} \sqrt{\frac{1}{c_3 N} \log (2/\xi)}. \numberthis{eq:delta_xi_half}
    \]
    Thus, we can upper bound $\epsilon_1 (\widehat{h})$ as,
    \[
    \epsilon_1 (\widehat{h}) & = (\mathcal{L} - \mathcal{L}_N) (\widehat{h} - h_0) \leq   \frac{18 L \delta(\xi/2)}{\max\{1, B\}} \mleft( \|\widehat{h} - h_0 \|_2 + \delta(\xi/2) \mright).
    \]
    To bound the term $\epsilon_2$, define the random variable,
    \[
    X_i = \ell(h^*; z_i) - \ell(h_0; z_i),
    \]
    where $X_i$ is bounded as $|X_i| \leq 2 M$. Then, by Hoeffding's inequality, with probability at least $1 - \xi/2$, we infer that
    \[
    \epsilon_2 & = (\mathcal{L}_N - \mathcal{L}) (h^* - h_0)  \leq 2 \sqrt{2} M \sqrt{\frac{\log(4/\xi)}{N}}.
    \]
    Thus, by union bound and \eqref{eq:curve}, we conclude that with probability at least $1 - \xi$
    \[
    \frac{\lambda}{2} \| \widehat{h} - h_0 \|_2^2 & \leq ( \mathcal{L}(h^*) - \mathcal{L}(h_0) ) + \frac{18 L \delta(\xi/2)}{\max\{1, B\}} \mleft( \|\widehat{h} - h_0 \|_2 + \delta(\xi/2) \mright) + 2 \sqrt{2} M \sqrt{\frac{\log(4/\xi)}{N}} \\
    & = ( \mathcal{L}(h^*) - \mathcal{L}(h_0) ) + \frac{18 L \delta(\xi/2)}{\max\{1, B\}}  \|\widehat{h} - h_0 \|_2 + \frac{18 L \delta(\xi/2)^2}{\max\{1, B\}}  + 2 \sqrt{2} M \sqrt{\frac{\log(4/\xi)}{N}} \\
    & \leq ( \mathcal{L}(h^*) - \mathcal{L}(h_0) ) + \frac{\lambda}{4} \|\widehat{h} - h_0 \|_2^2
    + 
    \frac{1}{\lambda} \mleft( \frac{18 L \delta(\xi/2)}{\max\{1, B\}} \mright)^2  \\
    & \qquad \qquad  + \frac{18 L \delta(\xi/2)^2}{\max\{1, B\}} + 2 \sqrt{2} M \sqrt{\frac{\log(4/\xi)}{N}} \numberthis{eq:erm_temp1} \\
    \]
    where \eqref{eq:erm_temp1} follows by Young's inequality, i.e., for any positive number $\lambda > 0$,
    \[
    ax \leq \frac{\lambda}{4} x^2 + \frac{a^2}{\lambda}.
    \]
    Therefore, by cancelling out $\frac{\lambda}{4} \|\widehat{h} - h_0 \|_2^2$ from both sides,
    \[
    \| \widehat{h} - h_0 \|_2^2 \leq C_1 \mleft[ \frac{1}{\lambda} \mleft(  \mathcal{L}(h^*) - \mathcal{L}(h_0)  \mright) + \mleft( \frac{L^2}{\lambda \max\{1, B\}^2} + \frac{L}{\max\{1, B\}}\mright) \delta(\xi/2)^2 + M \sqrt{\frac{\log(1/\xi)}{N}} \mright],
    \]
    for a constant $C_1$.
    In turn, by \eqref{eq:delta_xi_half} and Young's inequality, we know that,
    \[
    \delta(\xi/2)^2 \leq C_2 \mleft[ \delta_N(\mathcal{H}^*)^2 + \frac{\max\{1, B\}^2}{N} \log(1/\xi) \mright],
    \]
    for a constant $C_2$. Hence,
    \[
    \| \widehat{h} - h_0 \|_2^2 & \leq C_3 \Bigg[ \frac{1}{\lambda} \mleft(  \mathcal{L}(h^*) - \mathcal{L}(h_0)  \mright) \\
    & \qquad + \mleft( \frac{L^2}{\lambda \max\{1, B\}^2} + \frac{L}{\max\{1, B\}}\mright) \mleft[ \delta_N(\mathcal{H}^*)^2 + \max\{1, B\}^2 \frac{ \log(1/\xi)}{N} \mright] \\
    & \qquad + M \sqrt{\frac{\log(1/\xi)}{N}} \Bigg] \\
    & \leq C_4 \Bigg[ \frac{1}{\lambda} \mleft(  \mathcal{L}(h^*) - \mathcal{L}(h_0)  \mright) + \frac{\max\{L^2, L\}}{\min\{\lambda, 1\} \max\{1 , B\}}  \delta_N(\mathcal{H}^*)^2 \\
    & \qquad + \frac{\max\{L^2, L\} \max\{1 , B\}}{\min\{\lambda, 1\}} \frac{\log(1/\xi)}{N} + M \sqrt{\frac{\log(1/\xi)}{N}} \Bigg], \numberthis{eq:erm_final}
    \]
    for constants $C_3$ and $C_4$.
    
    Therefore, by hiding dependence on the constants $\lambda, B, L$ and $M$ in a universal constant $C$, we can summarise the result~\eqref{eq:erm_final} and conclude the proof,
    \[
    \| \widehat{h} - h_0 \|_2^2 \leq C \mleft[  \mleft(  \mathcal{L}(h^*) - \mathcal{L}(h_0) \mright) + \delta_N(\mathcal{H}^*)^2 + \sqrt{\frac{\log(1/\xi)}{N}} \mright].
    \]
\end{proof}

\section{Examples of CMR Problems}\label{appen:apply_cmrs}

There are many concrete problems in statistical estimation, causal inference, and econometrics that are, in fact, CMR problems (see~\citet{Carrasco2007}, Section 1.3, for nine concrete CMR problems). In this section, we introduce in detail two CMR problems in causal inference, IV regression and proximal causal learning, which we evaluated experimentally in~\Cref{sec:exp}. To begin with, we provide a brief introduction of hidden confounders and structural causal models.

\subsection{Hidden Confounders}

\textit{Hidden confounders}~\citep{Pearl2000} are unobserved variables that influence both the \textit{actions} (or \textit{interventions}) and the \textit{outcome}. To properly account for these hidden confounders and understand the true causal effect of actions, we need to model the causal (or structural) relationship between the action and the outcome, which is expressed through a \textit{causal function}. However, learning the causal function in the presence of hidden confounders is known to be challenging and sometimes infeasible~\citep{Shpitser2008CompleteHierarchy}. To formalise the concept of hidden confounders and provide a framework for specifying the underlying causal
mechanisms in a data-generating process, we introduce structural causal models (SCMs).

\subsection{Structural Causal Model}

\begin{definition}[Structural Causal Model]
An SCM $M$ is a tuple $(U,V,F,P(U))$, where U is a set of exogenous (i.e. outside the model) random variables, which are
typically unobserved; V is a set of endogenous (i.e. inside the model) variables; $F=f_i$ is a set of deterministic functions where, for each $V_i\in V$, $f_i(pa_i,u_i)=v_i$ ($pa_i$ denotes the parent of $V_i$ and $U_i$ are exogenous variables linked to $V_i$). $P(U)$ is the joint distribution of exogenous variables.
\end{definition}

In this definition, endogenous variables are variables on which we would like to study the causal relationships (e.g., between price and revenue). Exogenous variables are external sources of noise (e.g., seasonality) that can confound the causal relationships between endogenous variables. Next, we introduce the concept of causal interventions, which are tools that allow us to study causal effects between variables. Interventions are defined through
a mathematical operator called $do(x)$~\citep{Pearl2000}. An intervention, denoted by $do(X=x)$, simulates a physical intervention by removing the natural dependencies of $X$ on its parent variables in the SCM and forcing it to take a specific value $x$, while keeping the rest of the model unchanged. The resulting causal model after the intervention is denoted $M_x$. The post-intervention distribution resulting from the
intervention $do(X = x)$ is given by the equation
\begin{align}
 \probP_M(y\lvert do(x)) = \probP_{M_x}(y),
\end{align}
where the post-intervention distribution of some variable $Y$ is defined as the distribution of $Y$ in the intervened model $M_x$.

For example, in a causal model where $A$ (treatment) affects $Y$ (outcome), an observational study may show correlation, but an intervention $do(A = a)$ would simulate a randomised experiment, ensuring that changes in $Y$ are due to $A$ and not other confounders.

\begin{figure}
    \centering
    \includegraphics[width=0.3\linewidth]{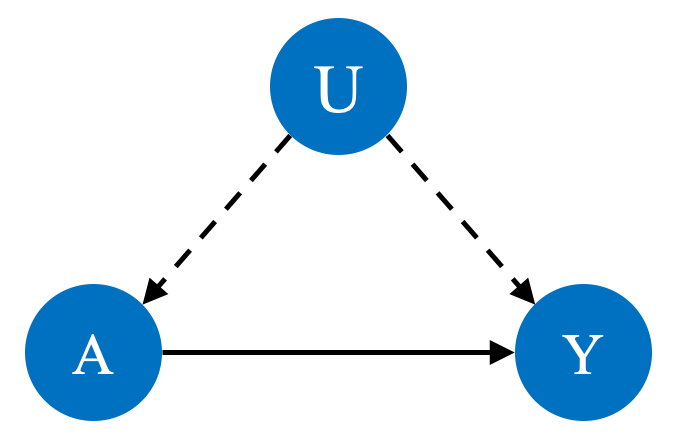}
    \caption{The causal graph of outcome $Y$, treatment $A$ and hidden confounder $U$.}
    \label{fig:causal_graph}
\end{figure}

In the SCM formulation, an exogenous random variable is considered a hidden confounder if it affects two or more endogenous variables, e.g., $V_i$ and $V_j$, and is unobserved. Consider a SCM that specifies two endogenous variables, the outcome $Y\in\mathcal{Y}$ and the treatment $A\in\mathcal{A}$: 
\begin{align}
    Y=f(A,U)\label{eq:scm},
\end{align}
where $U\in\mathcal{U}$ is a hidden confounder that affects both $A$ and $Y$, as illustrated in the causal graph depicted in~\Cref{fig:causal_graph}. Due to the presence of this hidden confounder, with only observational data, standard regressions (e.g., ordinary least squares) generally fail to produce consistent estimates of the causal relationship (also known as the average treatment effect) between $A$ and $Y$~\citep{Pearl2000}, i.e., $\expectE[Y \mid do(A)]$, where $do(\cdot)$ is the interventional operator. Therefore, the ability to identify the causal relationship between $A$ and $Y$ requires additional assumptions. Two classic techniques are IV regression~\citep{Newey2003} and proximal causal learning, each with the explicit assumption to observe additional variables in the model that help the identification of $\expectE[Y \mid do(A)]$, which we will introduce next.

\subsection{Instrumental Variables}\label{appen:iv}
\begin{figure}
    \centering
    \includegraphics[width=0.45\linewidth]{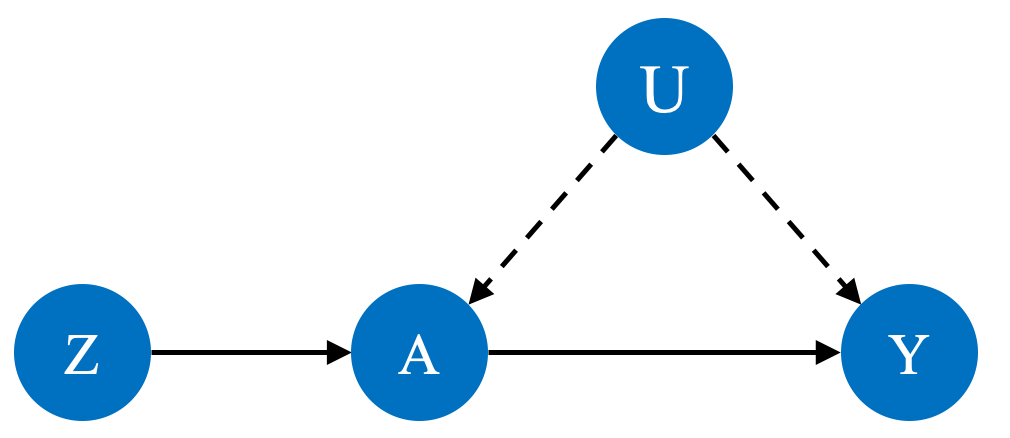}
    \caption{The causal graph of outcome $Y$, treatment $A$, hidden confounder $U$ and an instrumental variables $Z$.}
    \label{fig:causal_graph_iv}
\end{figure}
We first introduce the concept of Instrumental Variables (IVs). Under the SCM of outcome $Y\in\mathcal{Y}$, treatment $A\in\mathcal{A}$ and hidden confounder $U\in\mathcal{U}$ defined in~\Cref{eq:scm}, an IV $Z\in\mathcal{Z}$ is an observable variable that satisfies the following conditions~\citep{Newey2003}: 
\begin{itemize}[leftmargin=40pt, topsep=5pt]
\label{assump:iv}
    \item \textit{Unconfounded Instrument}: $Z\indep U$;
    \item \textit{Relevance}: $\probP(A\lvert Z)$ is not constant in $Z$;
    \item \textit{Exclusion}: $Z$ does not directly affect $Y$: $Z\indep Y \mid (A,U)$,
\end{itemize}
where a causal graph with an instrumental variable $Z$ is depicted in~\Cref{fig:causal_graph_iv}.

Furthermore, in order to identify the causal effect $\expectE[Y \mid do(A)]$, an additional assumption of \textit{additive noise} is required, where we assume that
\begin{align}
    Y=f(A)+\epsilon(U) \ \text{ with } \  \expectE[\epsilon(U)]=0\label{eq:iv_reg}.
\end{align}
Specifically, since the hidden confounder $U$ affects both $A$ and $Y$, it is generally the case that $\expectE[\varepsilon(U) \mid A]\neq 0$, which makes standard regression methods such as ordinary least squares fail to estimate the correct causal effect. The additive noise assumption in conjunction with the IV conditions is standard for the IV settings~\citep{Newey2003,Xu2020,Shao2024} and allows the minimal condition to identify the causal effect. Another observation we can make here is that, following~\Cref{eq:iv_reg}, 
\begin{align}
\expectE[Y \mid do(A)]&=\expectE[f(A)\mid do(A)]+\expectE[\epsilon(U)\mid do(A)]\\
&=f(A)+\expectE[\epsilon(U)]=f(A),
\end{align}
so the task of identifying the \textit{causal effect} $\expectE[Y \mid do(A)]$ is the same as learning the \textit{causal function} $f(A)$.

In order to identify $f(A)$, a key observation~\citep{Newey2003} is that,  by taking the expectation on both sides of~\Cref{eq:iv_reg} conditional on $Z$, we have
\begin{align}
\expectE[Y\lvert Z]&=\expectE[f(A)+\epsilon(U)\lvert Z\Big]\nonumber\\
&=\expectE[f(A)\lvert Z]+\expectE[\epsilon(U)]\nonumber\\
&=\expectE[f(A)\lvert Z]=\int f(A) \probP(A\lvert Z) dA,\label{eq:h_exp}
\end{align}
where the expectation $\expectE[Y\lvert Z]$ and the distribution $\probP(A\lvert Z)$ are both observable. Therefore, the problem of estimating the causal effect $\expectE[Y \mid do(A)]$ in the IV setting can be reduced to the CMR:
\begin{align}
\expectE[Y-f(A)\lvert Z]=0.
\end{align}

\subsection{Proximal Causal Learning}\label{appen:pcl}

\begin{figure}
    \centering
    \includegraphics[width=0.45\linewidth]{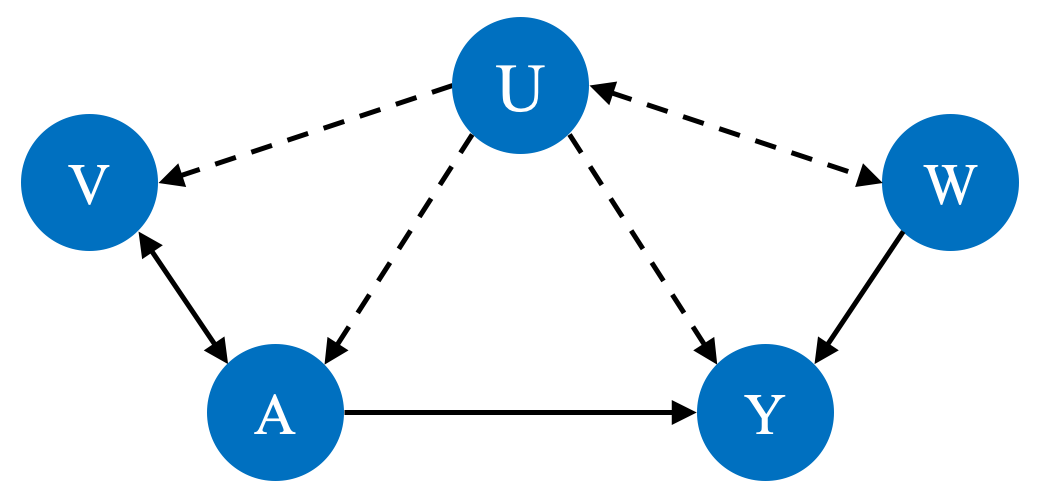}
    \caption{The causal graph of outcome $Y$, treatment $A$, hidden confounder $U$ and proxies $V$ and $W$}
    \label{fig:causal_graph_proxy}
\end{figure}

Next, we introduce proximal causal learning (PCL). Under the SCM of outcome $Y\in\mathcal{Y}$, treatment $A\in\mathcal{A}$ and hidden confounder $U\in\mathcal{U}$ defined in~\Cref{eq:scm}, PCL uses two proxy variables to identify the causal effect of the treatment $A$ on the outcome $Y$, i.e., $\expectE[Y \mid do(A)]$. The first proxy $V\in\mathcal{V}$ is a treatment-inducing proxy, and the second proxy $W\in\mathcal{W}$ is an outcome-inducing proxy. For $V$ and $W$ to be valid proxies, they need to satisfy the following \textit{conditional independence conditions}:
\begin{itemize}[leftmargin=40pt, topsep=5pt]
\label{assump:proxy}
    \item $Y\indep V \mid (A,U)$;
    \item $W\indep (A,V) \mid U$,
\end{itemize}
where a causal graph with proxies $V$ and $W$ is depicted in~\Cref{fig:causal_graph_proxy}.

In addition, for identifiability of the causal effect, proxies should satisfy \textit{completeness assumptions}. Let $l:\mathcal{U}\rightarrow \realNumber$ be any square integrable function, that is, $\norm{l}_2\leq \infty$. The following conditions hold for any $a\in \mathcal{A}$:
\begin{align}
    \expectE[l(U)\lvert A=a,W=w]=0 \quad\forall w\in \mathcal{W} &\iff l(u)=0 \text{ a.e. on } \probP(U)\\
    \expectE[l(U)\lvert A=a,V=v]=0 \quad\forall v\in \mathcal{V} &\iff l(u)=0 \text{ a.e. on } \probP(U)
\end{align}

It has been shown that when the conditional independence conditions and the completeness assumptions are satisfied, it is possible to identify $\expectE[Y\lvert do(A)]$ by solving CMRs.

\begin{proposition}[\citet{Miao2018}]
Let the conditional independence and completeness assumptions hold, then there exists at least one solution to the following CMR
\begin{align}
\expectE[Y\lvert A,V]&=\expectE[h(A,W)\lvert A,V]\label{eq:proximal}\\
&=\int h(A,W)\probP(A,W\lvert A,V) dW,
\end{align}
for all $(A,V)\in \mathcal{A}\times \mathcal{V}$. Let $h^*$ be a solution of~\Cref{eq:proximal}, then the \textit{causal effect} $\expectE[Y \mid do(A)]$ can be estimated by $\expectE_W[h(A,W)]$.
\end{proposition}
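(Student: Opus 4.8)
The plan is to route through the \emph{outcome bridge function}, a function $h$ living at the level of the hidden confounder and satisfying the bridge equation
\[
\expectE[Y \mid A,U] = \expectE[h(A,W)\mid A,U],
\]
and to treat the two assertions of the proposition in turn: the existence of a solution to the observable CMR~\eqref{eq:proximal}, and then the identification formula $\expectE[Y\mid do(A)]=\expectE_W[h(A,W)]$.

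For existence, I would first invoke the solvability of the bridge equation: for each fixed $a$, viewing it as a Fredholm integral equation of the first kind in $h(a,\cdot)$, a solution exists under the completeness of $W$ together with a Picard-type range condition on the relevant conditional operator. Granting such an $h$, I would propagate it to the observable level by iterated expectations. Starting from $\expectE[Y\mid A,V]$, I condition on $U$ and use $Y\indep V\mid(A,U)$ to replace $\expectE[Y\mid A,U,V]$ by $\expectE[Y\mid A,U]$; substituting the bridge equation rewrites this as $\expectE[\expectE[h(A,W)\mid A,U]\mid A,V]$. The condition $W\indep(A,V)\mid U$ gives $\expectE[h(A,W)\mid A,U,V]=\expectE[h(A,W)\mid A,U]$, so a final application of the tower property collapses the nested expectation into $\expectE[h(A,W)\mid A,V]$. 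This chain yields exactly~\eqref{eq:proximal}, so $h$ is a solution.

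For the identification claim, I would show that \emph{any} solution $h^*$ of~\eqref{eq:proximal} necessarily satisfies the bridge equation. Running the same conditioning argument in reverse, the residual $r(a,U):=\expectE[Y\mid A=a,U]-\expectE[h^*(a,W)\mid A=a,U]$ obeys $\expectE[r(a,U)\mid A=a,V=v]=0$ for all $v$; the completeness of $V$ given $A$ then forces $r(a,\cdot)\equiv 0$, i.e. $h^*$ solves the bridge equation. With this in hand I marginalise over the confounder: under $do(A=a)$ the intervention removes the $U\to A$ dependence but leaves $\probP(U)$ and $\probP(W\mid U)$ unchanged, so $\expectE[Y\mid do(A=a)]=\expectE_U\!\big[\expectE[Y\mid A=a,U]\big]=\expectE_U\!\big[\int h^*(a,w)\probP(w\mid U)\,dw\big]$, and swapping the order of integration gives $\int h^*(a,w)\probP(w)\,dw=\expectE_W[h^*(a,W)]$, as claimed.

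The main obstacle is the existence step, that is, the solvability of the bridge equation: completeness by itself only delivers injectivity of the conditional operator (hence uniqueness of the identified causal functional), whereas producing an actual bridge function requires the range/Picard condition and care that the resulting $h$ is measurable with well-defined conditional expectations. The conditional-independence manipulations in the other two steps are essentially bookkeeping once the order of conditioning is fixed; the only point needing attention is that $W\indep(A,V)\mid U$ is indeed enough to yield $\expectE[h(A,W)\mid A,U,V]=\expectE[h(A,W)\mid A,U]$ for the joint function $h(A,W)$, which holds because conditioning on $U$ already makes $W$ independent of $(A,V)$.
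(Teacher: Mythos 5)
The paper itself offers no proof of this proposition: it is quoted as a known result of \citet{Miao2018}, so there is no internal argument to compare against. Judged on its own merits, your reconstruction is the standard bridge-function argument from that literature, and its conditional-independence and identification steps are correct. The chain $\expectE[Y\mid A,V]=\expectE[\expectE[Y\mid A,U]\mid A,V]=\expectE[\expectE[h(A,W)\mid A,U,V]\mid A,V]=\expectE[h(A,W)\mid A,V]$ uses exactly the two stated independences ($Y\indep V\mid (A,U)$ for the first replacement, and $W\indep (A,V)\mid U$, via weak union, for the second), and the reverse direction --- that any solution $h^*$ of the observable CMR has residual $r(a,U)=\expectE[Y\mid A=a,U]-\expectE[h^*(a,W)\mid A=a,U]$ with $\expectE[r(a,U)\mid A=a,V=v]=0$ for all $v$, hence $r\equiv 0$ by the second completeness condition --- is precisely how \citet{Miao2018} (and later works such as \citet{Cui2023}) pass from the observable equation back to the confounder-level equation. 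The final marginalisation $\expectE[Y\mid do(A=a)]=\expectE_U\bigl[\expectE[Y\mid A=a,U]\bigr]=\expectE_W[h^*(a,W)]$ is also right, since intervening on $A$ leaves $\probP(U)$ and $\probP(W\mid U)$ unchanged and $W\indep A\mid U$ lets you drop the conditioning on $A$ before integrating over $U$.

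The one substantive point is the existence step, and there the difficulty lies in the proposition as stated rather than in your argument. The first completeness condition (completeness of $U$ given $W$) only makes the adjoint conditional-expectation operator injective, which gives the forward operator $h\mapsto\expectE[h(a,W)\mid A=a,U=\cdot\,]$ \emph{dense} range; it does not place the particular target $\expectE[Y\mid A=a,U=\cdot\,]$ inside that range. Existence of a bridge function therefore genuinely requires the Picard-type (singular-value) regularity conditions, which appear as separate technical conditions in \citet{Miao2018} but are silently absorbed into ``completeness'' in the wording used here. Your proof invokes exactly those extra conditions, which is the honest fix and matches the original source; just note that, read literally, the existence claim does not follow from conditional independence plus completeness alone, so your flagged ``obstacle'' is a real limitation of the statement, not a defect of your proof.
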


From this proposition, we can see that the problem of estimating the causal effect $\expectE[Y \mid do(A)]$ can be reduced to estimating $h^*$, which we denote as the bridge function following~\citet{Miao2018}. Therefore, estimating the causal effect in the PCL setting can be reduced to the CMR:
\begin{align}
\expectE[Y-h(A,W)\lvert A,V]=0
\end{align}

\begin{remark}
For both IV and PCL, it is possible to include additional observed confounders $X$ that affect both the treatment and the outcome as additional information or context. $X$ can also be confounded by $U$, and the resulting CMRs would be $\expectE[Y-f(A,X)\lvert Z,X]=0$ for IV regression and $\expectE[Y-h(A,W,X)\lvert A,V,X]=0$ for PCL.
\end{remark}

\section{Datasets Details} 
In this section, we provide details of the datasets considered in this paper for IV regression and proximal causal learning tasks.

\subsection{IV Regression}
We first provide the details for IV regression benchmarking datasets. Recall that we denote $A$ as the action, $Y$ as the outcome, $Z$ as the instrument, and $X$ as additional observed context and the CMR we are trying to solve is $\expectE[Y-f(A,X)\lvert Z,X]=0$.

\subsubsection{Ticket Demand Dataset}\label{appen:demand}

Here, we describe the aeroplane ticket demand dataset for IV regression, first introduced by~\citet{Hartford2017DeepPrediction}. The observable variables are generated by the following model:
\begin{align*}
r&=f_0((t,s),p)+\epsilon, \quad \expectE[\epsilon\lvert t,s,p]=0;\\
p&=25+(z+3)\psi(t)+\omega,
\end{align*}
where $r$ is the ticket sales (as the outcome variable $Y$) and $p$ is the ticket price (as the action variable $A$). $(t,s)$ are observed context variables, where $t$ is the time of year and $s$ is the customer type. The fuel price $z$ is introduced as an instrumental variable, which only affects the ticket price $p$. The noises $\epsilon$ and $\omega$ are correlated with correlation $\rho\in[0,1]$, where in our experiments we set $\rho=0.9$. $f_0$ is the true causal effect function, defined as
\begin{align*}
    f_0((t,s),p)&=100+(10+p)\cdot s \cdot \psi(t)-2p,\\
    \psi(t)&=2\left(\frac{(t-5)^4}{600}+\exp(-4(t-5)^2)+\frac{t}{10}-2\right),
\end{align*}
where $\psi(t)$ is a complex non-linear function of $t$ plotted in~\Cref{fig:nonlinear}. The offline dataset is sampled with the following distributions:
\begin{align*}
    s&\sim \text{Unif}\{1,...,7\}\\
    t&\sim \text{Unif}(0,10)\\
    z&\sim \mathcal{N}(0,1)\\
    \omega&\sim \mathcal{N}(0,1)\\
    \epsilon&\sim \mathcal{N}(\rho\omega,1-\rho^2).
\end{align*}
From the observations $(r,p,t,s,z)$, we estimate $\widehat{h}$ using IV regression methods, and the mean squared error between $\widehat{h}$ and the true causal function $f_0$ is computed on 10000 random samples from the above model. For the out-of-distribution test samples, we sample $t\sim \text{Unif}(1,11)$ instead.

We standardise the action and outcome variables $p$ and $r$ to centre the data around a mean of zero and a standard deviation of one following~\citet{Hartford2017DeepPrediction}. This is standard practice for DNN training, which improves training stability and optimization efficiency.

\begin{figure}[tb]
    \centering
\includegraphics[width=0.5\textwidth]{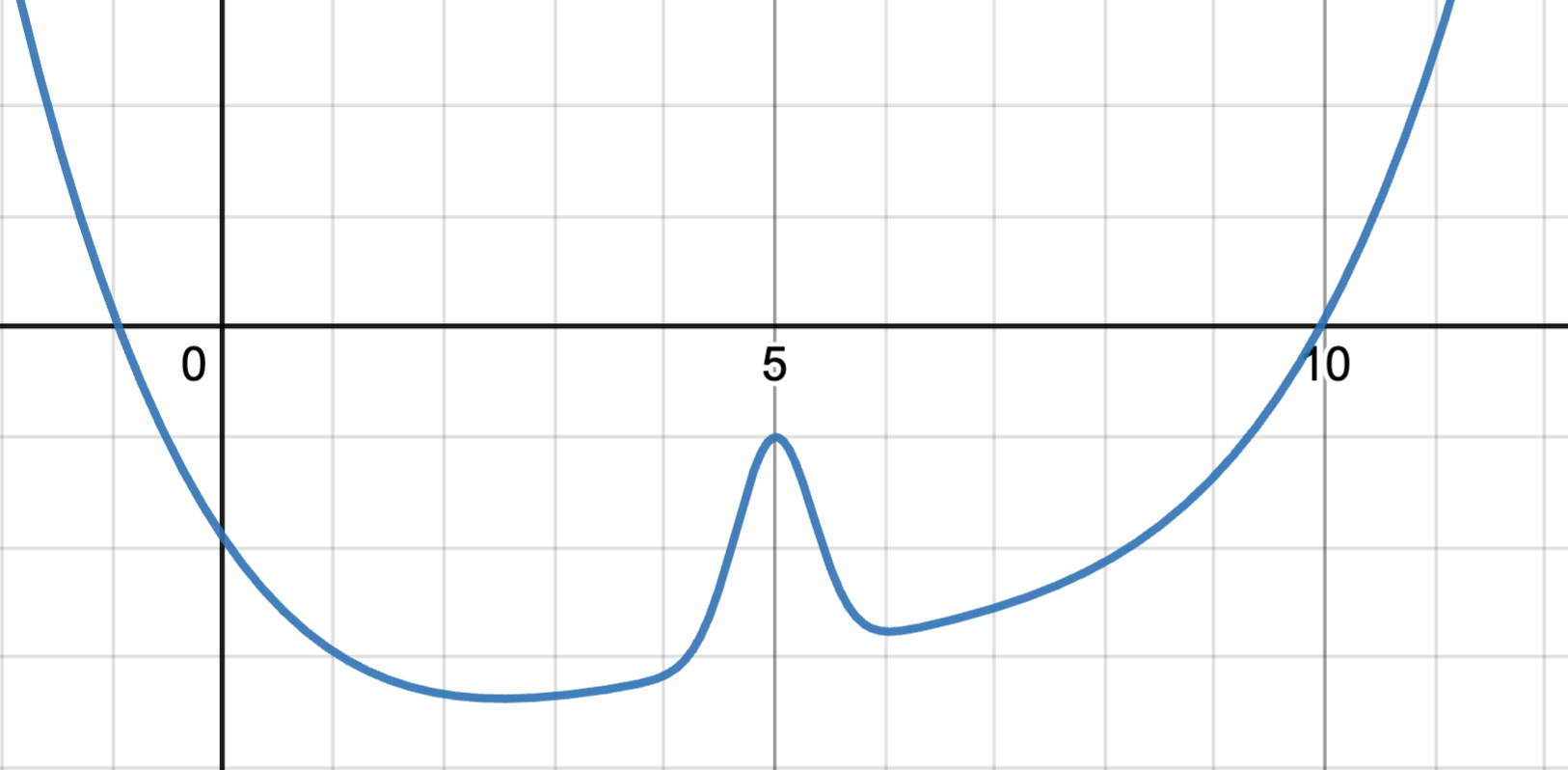}
    \caption{A graph of the nonlinear function $\psi(t)$ in the ticket demand dataset for IV regression.}
    \label{fig:nonlinear}
\end{figure}

\subsubsection{Ticket Demand High-Dimensional Setting}

For the high-dimensional setting, we again follow~\citet{Hartford2017DeepPrediction} to replace the customer type $s\in[7]$ in the low-dimensional setting with images of the corresponding handwritten digits from the MNIST dataset~\citep{LeCun2010}. For each digit $d\in[7]$, we select a random MNIST image from the digit class $d$ as the new customer type variable $s$. The images are $28\times28=784$ dimensional.

\subsubsection{Real-World Datasets}\label{appen:real}

Following previously studied causal inference methods~\citep{Shalit2017,Wu2023,Schwab2019,Bica2020}, we consider two semi-synthetic real-world datasets IHDP\footnote{IHDP: \url{https://www.fredjo.com/}.}~\citep{Hill2011} and PM-CMR\footnote{PM-CMR:\url{https://doi.org/10.23719/1506014}.}~\citep{Wyatt2020} for experiments, since the true counterfactual prediction function is rarely available for real-world datasets. 

IHDP, the Infant
Health and Development Program (IHDP), comprises 747
units with 6 pre-treatment continuous variables, one action variable and 19 discrete variables related to the children and their mothers, aiming at evaluating the effect of specialist home visits on the future cognitive test scores of premature infants. From the original data, we select all 6 continuous covariance variables as our context variable $X$.

PM-CMR studies the
impact of PM2.5 particle level on the cardiovascular mortality rate (CMR) in 2132 counties in the United States using data provided by the National Studies on Air Pollution and Health~\citep{Wyatt2020}. We use 6 continuous variables
about CMR in each city as our context variable $X$.

Following~\citet{Wu2023}, from the context variables $X$ obtained from real-world datasets, we generate the instrument $Z$, the action $A$ and the outcome $Y$ using the following model:
\begin{align*}
&Z\sim\probP(Z=z)=1/K,\quad z\in[1..K];\\
&A=\sum_{z=1}^K 1_{Z=z} \sum_{i=1}^{d_X}w_{iz}(X_i+0.2\epsilon+f_z(z))+\delta_A, \quad w_{iz}\sim\text{Unif}(-1,1);\\
&Y=9A^2-1.5A+\sum_{i=1}^{d_X} \frac{X_i}{d_X}+\abs{X_1 X_2}-\sin{(10+X_2 X_3)}+2\epsilon+\delta_Y,
\end{align*}
where $X_i$ denotes the $i$-th variable in $X$, $f_z$ is a function that returns different constants depending on the input $z$, $\delta_Y,\delta_A\sim\mathcal{N}(0,1)$ and $\epsilon\sim\mathcal{N}(0,0.1)$ are the unobserved confounders. The fully generated semi-synthetic datasets IHDP and PM-CMR have 747 and 2132 samples, respectively, and we randomly split them into training (63\%), validation (27\%), and
testing (10\%) following~\citet{Wu2023}.

\subsection{Proximal Causal Learning}\label{dataset:pcl}
Next, we provide details for benchmarking datasets of proximal causal learning. Recall that we denote $A$ as the treatment, $Y$ as the outcome, $W$ as the outcome proxy, and $V$ as the treatment proxy. The CMR problem we are solving for PCL is $\expectE[Y-f(A,W)\lvert A,V]$.

\subsubsection{Ticket Demand Dataset}\label{appen:pcl_demand}

The ticket demand dataset~\citep{Hartford2017DeepPrediction} is also extended to the PCL setting, which is first introduced by~\citet{Xu2021}. The data generating process is described by the following model:
\begin{align}
U&\sim \text{Unif}(0,10)\\
[V_1,V_2]&=[2\sin(2\pi U/10)+\epsilon_1, 2\cos(2\pi U/10)+\epsilon_2]\\
W&= 7g(U)+45+\epsilon_3\\
A&=35+(V_1+3)g(U)+V_2+\epsilon_4\\
Y&=A\cdot \min(\exp(\frac{W-A}{10},5)-5g(U)+\epsilon_5\\
\text{with}\quad g(u)&=2(\frac{(u-5)^4}{600}+\exp(-4(u-5)^2)+u/10-2)\\
\text{and}\quad \epsilon_i&\sim \mathcal{N}(0,1),
\end{align}
where $U$ is the demand, which acts as the hidden confounder, $V_1, V_2$ are fuel prices which act as treatment proxy, $W$ is the web page views which act as outcome proxy, $A$ is the price and $Y$ is the sale. Here, we can see that the outcome proxy $W$ and the treatment proxy $V$ are both affected by $U$, where $W$ directly affects the outcome and $V$ directly affects the treatment $A$.

\subsubsection{dSprites high-dimensional Dataset}\label{appen:pcl_highdim}

The dSprites dataset~\citep{dsprites17} is a high-dimensional $(64\times64)$ image dataset described by five latent parameters: \textit{shape}, \textit{scale}, \textit{rotation}, \textit{posX} and \textit{poxY}. It is proposed by~\citet{Xu2021} to adopt it as a benchmark for PCL where the treatment is each figure and the hidden confounder is \textit{posY}. For the experiments, we fix the \textit{shape} to be heart.

The data-generating process can be described by the following steps:
\begin{enumerate}
    \item Randomly generate values for \textit{scale, rotation, posX and posY}: \textit{scale} $\sim\text{Unif}\{0.5,0.6,...,1.0\}$, \textit{rotation} $\sim\text{Unif}(0,2\pi)$, \textit{posX, posY} $\sim\text{Unif}\{0,...,31\}$.
    \item Set $U$=\textit{posY}
    \item Set $V$=\textit{(scale,rotation,posX)}
    \item Set $A$ as the dSprites image with features \textit{(scale, rotation, posX, posY)} and add Gaussian noise $\mathcal{N}(0,0.1)$ to each pixel.
    \item Set $W$ as posY with Gaussian noise $\mathcal{N}(0,1)$.
    \item $Y=\frac{0.1\norm{vec(A)^TB}^2_2-5000}{1000}\times\frac{(31\times U-15.5)^2}{85.25}+\epsilon,\epsilon\sim\mathcal{N}(0,0.5)$, where the matrix $B\in\realNumber^{64\times64}$ is given by $B_{i,j}=\abs{32-j}/32$.
\end{enumerate}

\begin{figure}
    \centering
\includegraphics[width=0.5\linewidth]{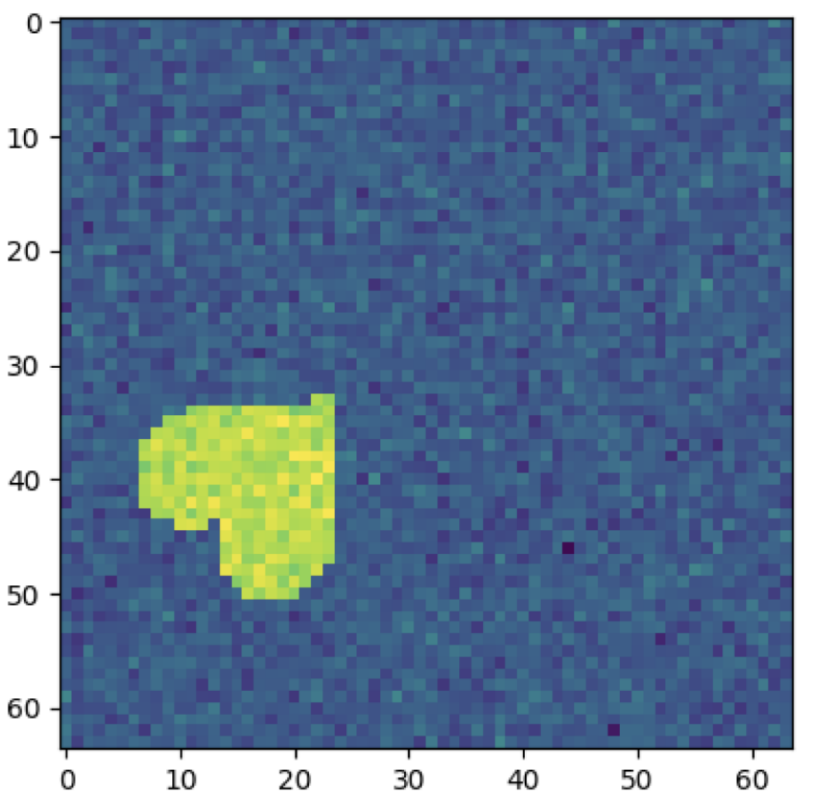}
    \caption{An example of dSprites image, which is used as the treatment $A$ in PCL experiments. Its scale, rotation, $x$ position and $y$ position are randomly generated.}
    \label{fig:dsprite_example}
\end{figure}

For the test dataset, a fixed grid of image parameters is chosen:
\begin{align}
\textit{posX}&\in[0,5,10,15,20,25,30]\\
\textit{posY}&\in[0,5,10,15,20,25,30]\\
\textit{scale}&\in[0.5,0.8,1.0]\\
\textit{rotation}&\in[0,0.5\pi,\pi,1.5\pi],
\end{align}
which consists of 588 images to reliably evaluate different PCL algorithms.

\section{Network Architecture and Hyperparameters}\label{appen:networks}

Here, we describe the network architecture and hyperparameters of all experiments. Unless otherwise specified, all neural network algorithms are optimised using AdamW~\citep{Loshchilov2017} with learning rate $= 0.001$, $\beta =(0.9,0.999)$, and $\epsilon=10^{-8}$. In addition, we set $K=10$ for $K$-fold cross-fitting in DML-CMR. In addition, all hyperparameter choices for methods and datasets used in this work are available in our code.

\subsection{IV Regression}
We first introduce details for the IV regression experiments.

\subsubsection{Ticket Demand Dataset}

For DML-CMR and CE-DML-CMR, we use the network architecture described in~\Cref{tab:demand_arch}. We use a learning rate of $0.0002$ with a weight decay of $0.001$ (L2 regularisation) and a dropout rate of $\frac{1000}{5000+N}$ that depends on the data size $N$. For DeepGMM, we use the same structure as the outcome network of DML-CMR with dropout $=0.1$ and the same learning rate as DML-CMR. For DFIV, we follow the original structure proposed in~\citet{Xu2020} with regularisers $\lambda 1$, $\lambda 2$ both set to 0.1 and weight decay of 0.001. For DeepIV, we use the same network architecture as action network and stage 2 network for DML-CMR, with the dropout rate in~\citet{Hartford2017DeepPrediction} and weight decay of 0.001. For KIV, we use the Gaussian kernel, where the bandwidth is determined by the median trick as originally described by~\citet{Singh2019}, and we use the random Fourier feature trick with 100 dimensions.

\begin{table}[t]
    \centering
    \subfloat[Action Network for $\widehat{g}$]{
    \begin{tabular}{||c|c||}
    \hline
    \textbf{Layer Type} & \textbf{Configuration}  \\ [0.5ex]
    \hline \hline
    Input & $C$\\      \hline
    FC + ReLU & in:3 out:128\\    \hline
    Dropout & - \\    \hline
    FC + ReLU & in:128 out:64\\    \hline
    Dropout & - \\    \hline
    FC + ReLU & in:64 out:32\\    \hline
    Dropout & - \\    \hline
    MixtureGaussian & 10\\    \hline

    \end{tabular}\label{tab:action_arch}}
    \hspace{30pt}
    \subfloat[Outcome Network for $\widehat{s}$]{
    \begin{tabular}{||c|c||}
    \hline
    \textbf{Layer Type} & \textbf{Configuration}  \\ [0.5ex]
    \hline \hline
    Input & $C$\\      \hline
    FC + ReLU & in:3 out:128\\    \hline
    Dropout & - \\    \hline
    FC + ReLU & in:128 out:64\\    \hline
    Dropout & - \\    \hline
    FC + ReLU & in:64 out:32\\    \hline
    Dropout & - \\    \hline
    FC & in:32 out:1\\    \hline
    \end{tabular}\label{tab:outcome_arch}}

    \vspace{10pt}
    \subfloat[Stage 2 Network for $\widehat{h}$]{
    \begin{tabular}{||c|c||}
    \hline
    \textbf{Layer Type} & \textbf{Configuration}  \\ [0.5ex]
    \hline \hline
    Input & $C,A$\\      \hline
    FC + ReLU & in:3 out:128\\    \hline
    Dropout & - \\    \hline
    FC + ReLU & in:128 out:64\\    \hline
    Dropout & - \\    \hline
    FC + ReLU & in:64 out:32\\    \hline
    Dropout & - \\    \hline
    FC & in:32 out:1\\    \hline
    \end{tabular}\label{tab:stage2_arch}}
    \caption{Network architecture for DML-CMR and CE-DML-CMR for the ticket demand low-dimensional dataset for IV regression. For the input layer, we provide the input variables. For mixture of Gaussians output, we report the number of components. The dropout rate is given in the main text.}
    \label{tab:demand_arch}
\end{table}

\subsubsection{Ticket Demand with MNIST}

For DML-CMR and CE-DML-CMR, we use a convolutional neural network (CNN) feature extractor, which we denote as \textit{ImageFeature}, described in~\Cref{tab:mnist_arch}, for all networks. The full network architecture is described in~\Cref{tab:mnist_demand_arch}; we use weight decay of 0.05. For DeepGMM, we use the same structure as the outcome network of DML-CMR, with a dropout rate of 0.1 and weight decay of 0.05. For DFIV, we follow the original structure proposed in~\citet{Xu2020} with regularisers $\lambda 1$, $\lambda 2$ both set to 0.1 and weight decay of 0.05. For DeepIV, we use the same network architecture as the action network and stage 2 network for DML-CMR, with the dropout rate in~\citet{Hartford2017DeepPrediction} and weight decay of 0.05. For KIV, we use the Gaussian kernel, where the bandwidth is determined by the median trick as originally described by~\citet{Singh2019}, and we use the random Fourier feature trick with 100 dimensions.

\begin{table}[t]
    \centering
    \begin{tabular}{||c|c||}
    \hline
    \textbf{Layer Type} & \textbf{Configuration}  \\ [0.5ex]
    \hline \hline
    Input & $28\times 28$\\      \hline
    Conv + ReLU & $3\times3\times32$, s:1, p:0\\    \hline
    Max Pooling & $2\times2$, s:2\\    \hline
        Dropout & -\\    \hline
    Conv + ReLU & $3\times3\times64$, s:1, p:0\\    \hline
    Max Pooling & $2\times2$, s:2\\    \hline
        Dropout & -\\    \hline
    Conv + ReLU & $3\times3\times64$, s:1, p:0\\    \hline
    Dropout & -\\    \hline
    FC + ReLU & in: 576, out:64\\    \hline
    \end{tabular}
    \caption{Network architecture of the feature extractor used for the ticket demand dataset with MNIST for IV regression. For each convolution
layer, we list the kernel size, input dimension and output dimension, where s stands for stride and p stands for padding. For max-pooling, we provide the
size of the kernel. The dropout rate here is set to 0.3. We denote this feature extractor as \textit{ImageFeature}.}
\label{tab:mnist_arch}
\end{table}

\begin{table}[t]
    \centering
    \subfloat[Action Network for $\widehat{g}$]{
    \begin{tabular}{||c|c||}
    \hline
    \textbf{Layer Type} & \textbf{Configuration}  \\ [0.5ex]
    \hline \hline
    Input & ImageFeature$(C),Z$\\      \hline
    FC + ReLU & in:66 out:32\\    \hline
    Dropout & - \\    \hline
    MixtureGaussian & 10\\    \hline
    \end{tabular}}
    \hspace{30pt}
    \subfloat[Outcome Network for $\widehat{s}$]{
    \begin{tabular}{||c|c||}
    \hline
    \textbf{Layer Type} & \textbf{Configuration}  \\ [0.5ex]
    \hline \hline
    Input & ImageFeature$(C),Z$\\      \hline
    FC + ReLU & in:66 out:32\\    \hline
    Dropout & - \\    \hline
    FC & in:32 out:1\\    \hline
    \end{tabular}}
    \vspace{10pt}
    \subfloat[Stage 2 Network for $\widehat{h}$]{
    \begin{tabular}{||c|c||}
    \hline
    \textbf{Layer Type} & \textbf{Configuration}  \\ [0.5ex]
    \hline \hline
    Input & ImageFeature$(C),A$\\      \hline
    FC + ReLU & in:66 out:32\\    \hline
    Dropout & - \\    \hline
    FC & in:32 out:1\\    \hline
    \end{tabular}}
        \caption{Network architecture for DML-CMR and CE-DML-CMR for the ticket demand dataset with MNIST for IV regression. For the input layer, we provide the input variables. For a mixture of Gaussians output, we report the number of components. The dropout rate is given in the main text.}
    \label{tab:mnist_demand_arch}
\end{table}

\subsubsection{IHDP and PM-CMR}
For the two real-world datasets, we use the same network architecture described in~\Cref{tab:demand_arch} as in the low-dimensional ticket demand setting, where the input dimension is increased to 7 for all networks. We use a dropout rate of 0.1 and weight decay of 0.001. For DeepGMM, we use the same structure as the outcome network of DML-CMR with dropout $=0.1$. For DFIV, we also use the same network architecture as in the low-dimensional ticket demand setting, with regularisers $\lambda 1$, $\lambda 2$ both set to 0.1 and weight decay of 0.001. For DeepIV, we use the same network architecture as the action network and stage 2 network of DML-CMR, with a dropout rate of 0.1 and weight decay of 0.001. For KIV, we use the Gaussian kernel where the bandwidth is determined by the median trick as originally described by~\citet{Singh2019}, and we use the random Fourier feature trick with 100 dimensions.

\subsection{Proximal Causal Learning}

Next, we introduce details for the proximal causal learning experiments.

\subsubsection{Ticket Demand Dataset}

For DML-CMR and CE-DML-CMR, we use the network architecture described in~\Cref{tab:PCL_demand_arch}. We use a learning rate of $0.0001$ with a weight decay of $0.001$ (L2 regularisation) for the $f$ network and $0.0001$ for the $s$ and $g$ network. The dropout rate is $\frac{400}{4000+N}$, which depends on the data size $N$. For the comparison methods, we use the default parameter values proposed in their original papers. For CEVAE, we use 1000 epochs, 0.0001 weight decay, 10 learning samples, 20 hidden dimensions, and 5 early stopping. For DFPV, $\lambda 1$, $\lambda 2$ both set to 0.1, weight decay is 0.01 for all networks, stage 1 iteration is 20, and stage 2 iteration is 1. For KPV, we set $\lambda 1$ and $\lambda 2$ to be 0.001 with data split ratio 0.5. For NMMR, learning rate is 0.003, L2 penalty is 0.000003 with network depth 4 and width 80 for the U statistics estimator. For the V statistics estimator, network depth is 3 and width is 80 while other hyperparameters remain the same. For PKDR, the number of components is 50, gamma is 50, alpha is 35, and cross validation is 5. For PMMR, $\lambda 1$ and $\lambda 2$ are 0.01, with scale 0.5.

\begin{table}[t]
    \centering
    \subfloat[Action Network for $\widehat{g}$]{
    \begin{tabular}{||c|c||}
    \hline
    \textbf{Layer Type} & \textbf{Configuration}  \\ [0.5ex]
    \hline \hline
    Input & $C$\\      \hline
    FC + ReLU & in:3 out:128\\    \hline
    Dropout & - \\    \hline
    FC + ReLU & in:128 out:64\\    \hline
    Dropout & - \\    \hline
    FC + ReLU & in:64 out:32\\    \hline
    Dropout & - \\    \hline
    MixtureGaussian & 10\\    \hline

    \end{tabular}\label{tab:PCL_action_arch}}
    \hspace{30pt}
    \subfloat[Outcome Network for $\widehat{s}$]{
    \begin{tabular}{||c|c||}
    \hline
    \textbf{Layer Type} & \textbf{Configuration}  \\ [0.5ex]
    \hline \hline
    Input & $C$\\      \hline
    FC + ReLU & in:3 out:128\\    \hline
    Dropout & - \\    \hline
    FC + ReLU & in:128 out:64\\    \hline
    Dropout & - \\    \hline
    FC + ReLU & in:64 out:32\\    \hline
    Dropout & - \\    \hline
    FC & in:32 out:1\\    \hline
    \end{tabular}\label{tab:PCL_outcome_arch}}

    \vspace{10pt}
    \subfloat[Stage 2 Network for $\widehat{h}$]{
    \begin{tabular}{||c|c||}
    \hline
    \textbf{Layer Type} & \textbf{Configuration}  \\ [0.5ex]
    \hline \hline
    Input & $C,A$\\      \hline
    FC + ReLU & in:3 out:128\\    \hline
    Dropout & - \\    \hline
    FC + ReLU & in:128 out:64\\    \hline
    Dropout & - \\    \hline
    FC + ReLU & in:64 out:32\\    \hline
    Dropout & - \\    \hline
    FC & in:32 out:1\\    \hline
    \end{tabular}\label{tab:PCL_stage2_arch}}
    \caption{Network architecture for DML-CMR and CE-DML-CMR for the ticket demand dataset for PCL. For the input layer, we provide the input variables. For mixture of Gaussians output, we report the number of components. The dropout rate is given in the main text.}
    \label{tab:PCL_demand_arch}
\end{table}

\subsubsection{dSprites dataset}

For the dSprites dataset, we adopt a CNN feature extractor to handle the image inputs for DML-CMR. The architecture of this feature extractor is provided in~\Cref{tab:PCL_mnist_arch}. We use 10 components for the mixture of Gaussian model, dropout is 0.2, batch size is 100, weight decay is 0.05, learning rate is 0.001 with Adam, and the number of epochs $int(1000000./N)+100$ depends on the sample size $N$. For CEVAE, DFPV, KPV, NMMR and PMMR, we follow the hyperparameters and network architecture used in~\citet{Kompa2022} to generate the experimental results. For PKDR, we follow the high-dimensional dataset hyperparameters used in the original paper~\citep{Wu2024} with weight decay 0.0001, 4 layers, learning rate 0.0001, and 500 epochs.

\begin{table}[t]
    \centering
    \begin{tabular}{||c|c||}
    \hline
    \textbf{Layer Type} & \textbf{Configuration}  \\ [0.5ex]
    \hline \hline
    Input & $64\times 64$\\      \hline
    Conv + ReLU & $5\times5\times64$, s:1, p:0\\    \hline
    Max Pooling & $2\times2$, s:2\\    \hline
        Dropout & -\\    \hline
    Conv + ReLU & $5\times5\times128$, s:1, p:0\\    \hline
    Max Pooling & $2\times2$, s:2\\    \hline
        Dropout & -\\    \hline
    Conv + ReLU & $5\times5\times128$, s:1, p:0\\    \hline
    Dropout & -\\    \hline
    Max Pooling & $2\times2$, s:2\\    \hline
    FC + ReLU & in: 2048, out:128\\    \hline
    \end{tabular}
\caption{Network architecture of the feature extractor used for the dSprites image dataset for PCL. For each convolution
layer, we list the kernel size, input dimension and output dimension, where s stands for stride and p stands for padding. For max-pooling, we provide the
size of the kernel. The dropout rate here is set to 0.2. We denote this feature extractor as \textit{ImageFeature}.}
\label{tab:PCL_mnist_arch}
\end{table}

\subsection{Validation and Hyper-Parameter Tuning}\label{appen:tune}

Validation procedures are crucial for tuning DNN hyperparameters and optimizer parameters. All the DML-CMR and CE-DML-CMR training stages can be validated by simply evaluating the respective losses on held-out data, as discussed in~\citet{Hartford2017DeepPrediction}. This allows independent validation and hyperparameter tuning of the two first-stage networks (the action and the outcome networks), and performs second-stage validation using the best network selected in the first stage. This validation procedure guards against the ‘weak instruments’ bias~\citep{Bound1995} that can occur when the instruments are only weakly correlated with the actions variable (see detailed discussion in~\citet{Hartford2017DeepPrediction}).

\section{Additional Experimental Results}

In this section, we provide additional experimental results including the effects of high ill-posedness (e.g., weak IVs), performance with tree-based estimators, and a hyperparameter sensitivity analysis.

\subsection{Effects of Weak Instruments}\label{appen:weak_iv}

When the correlation between instruments and the endogenous variable (the action in our case) is weak, IV regression methods generally become unreliable~\citep{Andrews2019} because the weak correlation induces variance and bias in the first stage estimator, thus inducing bias in the second stage estimator, especially for non-linear IV regressions. In theory, DML-CMR should be more resistant to biases in the first stage thanks to the DML framework, as long as the causal effect is identifiable under the weak instrument. This identifiability condition is captured in Condition~\ref{condition:dml} for DML, and is connected to the ill-posedness for CMR problems in general as discussed in~\Cref{sec:ill-posedness}. With identifiability, Theorem~\ref{thm:dml} and Corollary~\ref{coro:function_convergence} all hold, and the convergence rate guarantees still apply. Intuitively, as the ill-posedness increases, worse empirical performance will be observed.

Experimentally, for the ticket demand dataset, we alter the instrument strength by changing how much the instrument z affects the price p. Recall from~\Cref{appen:demand} that $p=25+(z+3)\psi(t)+\omega$, where $\psi$ is a nonlinear function and $\omega$ is the noise. We add an IV strength parameter $\varrho$ such that $p=25+(\varrho\cdot z+3)\psi(t)+\omega$. In~\Cref{tab:weak_iv}, we present the mean and standard deviation of the MSE of $\widehat{h}$ for various IV strengths $\varrho$ from 0.01 to 1 and sample size $N=5000$. It is very interesting to see that DML-CMR indeed performs significantly better than SOTA nonlinear IV regression methods under weak instruments.

\begin{table}[t]\setlength\extrarowheight{4pt}
\centering
\tiny
\begin{tabular}{||c|c|c|c|c|c|c||}
\hline
\textbf{IV Strength} & \textbf{1.0} & \textbf{0.8}                     & \textbf{0.6}                     & \textbf{0.4}                     & \textbf{0.2}                     & \textbf{0.01}                    \\\hline \hline
DML-CMR              & \textbf{0.0676(0.0116)} & \textbf{0.0984(0.0161)} & \textbf{0.1295(0.0168)} & \textbf{0.1859(0.0376)} & \textbf{0.2899(0.0494)} & \textbf{0.4872(0.1295)} \\\hline
CE-DML-CMR           & \textbf{0.0765(0.0119)} & \textbf{0.1064(0.0120)} & \textbf{0.1514(0.0203)} & \textbf{0.2070(0.0329)} & \textbf{0.3194(0.0572)} & \textbf{0.5302(0.1625)} \\\hline
DeepIV              & 0.1213(0.0209)          & 0.2039(0.0269)         & 0.3051(0.0415)          & 0.4476(0.0656)          & 0.6891(0.1210)          & 0.9293(0.2382)          \\\hline
DFIV                & 0.1124(0.0481)          & 0.1586(0.0320)          & 0.3080(0.1907)          & 0.8117(0.2779)          & 0.9622(0.3892)          & 1.6503(0.6845)          \\\hline
DeepGMM             & 0.2699(0.0522)          & 0.3330(0.1171)          & 0.4762(0.1056)          & 0.8666(0.2248)          & 1.0056(0.4334)          & 2.0218(0.6555)          \\\hline
KIV                 & 0.2312(0.0272)          & 0.3149(0.0218)          & 0.4275(0.0368)          & 0.6646(0.0538)          & 0.8099(0.0657)          & 1.226(0.1014)\\\hline
\end{tabular}
\caption{Results for the low-dimensional ticket demand dataset when the IV is weakly correlated with the action, plotted against IV strength. The results from this paper are shown in boldface.}
\label{tab:weak_iv}
\end{table}

\subsection{Performance of DML-CMR with tree-based estimators}\label{appen:tree-based}

The DML-CMR framework allows for general estimators following the Neyman orthogonal score function. While deep learning is flexible and widely used in SOTA non-linear IV regression methods, Gradient Boosting and Random Forests regression are all good candidate estimators for DML-CMR. In addition, as discussed in Lemma 3.3, the convergence rate and suboptimality guarantees in Theorem 3.4 and 3.5 both hold for these tree-based regressions.

Empirically, we replace the DNN estimators in DML-CMR, CE-DML-CMR, and DeepIV with Random Forests and Gradient Boosting regressors (using scikit-learn implementation). DeepIV is a good baseline for comparison, since it optimizes directly using a non-Neyman-orthogonal score and allows for direct replacement of all DNN estimators with tree-based estimators. We use 500 trees for both regressors, with minimum samples required at each leaf node of 100 for the nuisance parameters and 10 for $\widehat{h}$.

In~\Cref{tab:tree_based}, we present the mean and standard deviation of the MSE of $\widehat{h}$ with Random Forests and Gradient Boosting estimators on the ticket demand dataset with various dataset sample sizes. The results demonstrate the benefits of our Neyman orthogonal score function, and interestingly, the performance of Gradient Boosting is comparable to DNN estimators.

\begin{table}[ht]\setlength\extrarowheight{4pt}
\centering
\scriptsize
\begin{tabular}{||c|c|c|c|c||}
\hline
\textbf{IV Strength} & \textbf{Dataset Size} & \textbf{DNNs } & \textbf{Random Forests} & \textbf{Gradient Boosting} \\\hline \hline
DML-CMR    & 2000                  & \textbf{0.1308(0.0206)}             & 0.1689(0.0172)          & \textbf{0.1301(0.0112)}    \\\hline
CE-DML-CMR &    2000               & \textbf{0.1410(0.0246)}             & 0.1733(0.0198)          & \textbf{0.1329(0.0125)}    \\\hline
DeepIV    &   2000               & 0.2388(0.0438)                      & 0.2642(0.0261)          & 0.2052(0.0232)             \\\hline
DML-CMR    & 5000                  & \textbf{0.0676(0.0129)}             & 0.1067(0.0131)          & \textbf{0.0632(0.0107)}    \\\hline
CE-DML-CMR &   5000               & \textbf{0.0765(0.0119)}             & 0.1154(0.0138)          & \textbf{0.0699(0.0069)}    \\\hline
DeepIV    &    5000              & 0.1213(0.0209)                      & 0.1626(0.0128)          & 0.1020(0.0091)             \\\hline
DML-CMR    & 10000                 & \textbf{0.0378(0.0094)}             & 0.0657(0.0062)          & \textbf{0.0482(0.0079)}    \\\hline
CE-DML-CMR &  10000                & \textbf{0.0442(0.0070)}             & 0.0721(0.0039)          & \textbf{0.0523(0.0059)}    \\\hline
DeepIV    &  10000             & 0.0714(0.0140)                      & 0.1106(0.0080)          & 0.1017(0.0075)\\\hline
\end{tabular}
\caption{Results for the low-dimensional ticket demand dataset comparing the use of tree-based estimators with DNN estimators. The results from this paper are shown in boldface.}
\label{tab:tree_based}
\end{table}

\subsection{Sensitivity analysis for different Hyperparameters}\label{appen:sensitivity}

The tunable hyperparameters in DML-CMR are the learning rate, network width, weight decay, and dropout rate (see~\Cref{appen:networks}). As a sensitivity analysis, we provide results for the mean and standard deviation of the MSE of the DML-CMR estimator $\widehat{h}$ with different hyperparameter values for both the low-dimensional and high-dimensional datasets with sample size N=5000 in~\Cref{tab:ablation_low} and~\Cref{tab:ablation_high}. Overall, we see that DML-CMR is not very sensitive to small changes in the hyperparameters.

\begin{table}[t]\setlength\extrarowheight{4pt}
    \centering
    \scriptsize
    \begin{tabular}{||c|c|c|c|c|c||}
    \hline
     \textbf{Learning Rate} & \textbf{Weight Decay}& \textbf{Dropout}& \textbf{DNN Width} & \textbf{DML-CMR} &\textbf{CE-DML-CMR} \\\hline \hline
0.0002 & 0.001  & 0.1  & 128 & \textbf{0.0676(0.0129)} & \textbf{0.0765(0.0119)} \\\hline
0.0005 &        &      &     & 0.0752(0.0122)          & 0.0897(0.0196)          \\\hline
0.0001 &        &      &     & \textbf{0.0703(0.0195)} & \textbf{0.0794(0.0201)} \\\hline
       & 0.0005 &      &     & 0.0794(0.0185)          & 0.0823(0.0149)          \\\hline
       & 0.005  &      &     & 0.0765(0.0135)          & 0.0809(0.0159)          \\\hline
       & 0.01   &      &     & 0.0820(0.0162)          & 0.0865(0.0174)          \\\hline
       &        & 0.05 &     & \textbf{0.0715(0.0074)} & \textbf{0.0813(0.0089)} \\\hline
       &        & 0.2  &     & 0.0836(0.0100)          & 0.0919(0.0157)          \\\hline
       &        &      & 64  & 0.0830(0.0162)          & 0.0924(0.0121)          \\\hline
       &        &      & 256 & 0.0943(0.0179)          & 0.0981(0.0126)          \\\hline
       & 0.0005 & 0.2  &     & 0.0805(0.0133)          & 0.0910(0.0106)          \\\hline
       & 0.005  & 0.05 &     & \textbf{0.0672(0.0116)} & \textbf{0.0742(0.0102)} \\\hline
       & 0.01   & 0.05 &     & 0.0825(0.0152)          & 0.0914(0.0125)          \\\hline
       &        & 0.2  & 256 & 0.0810(0.0129)          & 0.0852(0.0121)          \\\hline
       &        & 0.05 & 64  & 0.0907(0.0149)          & 0.0963(0.0161)          \\\hline
       & 0.005  &      & 256 & 0.0939(0.0146)          & 0.0991(0.0093)\\\hline
    \end{tabular}
    \caption{Results for the low-dimensional ticket demand dataset for a range of hyperparameter values. The default hyperparameters in this case are: learning rate=0.0002, weight decay=0.001, dropout=0.1 and DNN width 128. The bold results are the best performing hyperparameters.}
    \label{tab:ablation_low}
\end{table}

\begin{table}[ht]\setlength\extrarowheight{4pt}
    \centering
     \scriptsize
    \begin{tabular}{||c|c|c|c|c|c||}
    \hline
\textbf{Learning Rate} & \textbf{Weight Decay} & \textbf{Dropout} & \textbf{CNN Channels} & \textbf{DML-CMR}          & \textbf{CE-DML-CMR}      \\\hline \hline
0.001                  & 0.05                  & 0.2              & 64                    & \textbf{0.3513(0.0125)}  & \textbf{0.3808(0.0150)} \\\hline
0.0005                 &                       &                  &                       & 0.4063(0.0129)           & 0.5008(0.0369)          \\\hline
0.002                  &                       &                  &                       & 0.3659(0.0219)           & 0.4133(0.0267)          \\\hline
0.005                  &                       &                  &                       & \textbf{0.3377(0.0218)}  & \textbf{0.3555(0.0202)} \\\hline
& 0.01                  &                  &                       & 0.3935(0.0176)           & 0.4461(0.0478)          \\\hline
& 0.02                  &                  &                       & \textbf{0.3595(0.03013)} & \textbf{0.3851(0.0293)} \\\hline
& 0.1                   &                  &                       & 0.4066(0.0172)           & 0.5160(0.0329)          \\\hline
&                       & 0.1              &                       & 0.4136(0.0211)           & 0.5386(0.0398)          \\\hline
&                       & 0.3              &                       & 0.3857(0.0171)           & 0.4002(0.0249)          \\\hline
&                       &                  & 128                   & 0.4176(0.01941)          & 0.5129(0.0630)          \\\hline
&                       &                  & 256                   & 0.4942(0.0226)           & 0.6180(0.0396)          \\\hline
& 0.1                   & 0.1              &                       & 0.4163(0.0214)           & 0.5952(0.0343)          \\\hline
& 0.01                  & 0.3              &                       & 0.3636(0.0186)           & 0.3995(0.0250)          \\\hline
&                       & 0.3              & 128                   & 0.4006(0.0187)           & 0.4764(0.0216)          \\\hline
&                       & 0.3              & 256                   & \textbf{0.3429(0.0215)}  & \textbf{0.3971(0.0264)} \\\hline
& 0.1                   &                  & 256                   & 0.4170(0.0283)           & 0.5335(0.0371)\\\hline
                       
\end{tabular}
\caption{Results for the high-dimensional ticket demand dataset for a range of hyperparameter values. The default hyperparameters in this case are: learning rate 0.001, weight decay=0.05, dropout=0.2 and 64 CNN channels. The bold results are the best performing hyperparameters.}
\label{tab:ablation_high}
\end{table}

\bibliography{main.bib}
\end{document}